\documentclass[11pt]{article}

\usepackage{amsmath, amssymb, amsthm}
\usepackage{mathtools}
\usepackage{booktabs}
\usepackage{array}
\usepackage{hyperref}
\usepackage[round,authoryear]{natbib}
\usepackage{geometry}
\usepackage{multirow}
\usepackage{graphicx}
\usepackage{placeins}
\usepackage[title]{appendix}
\newtheorem{theorem}{Theorem}[section]
\newtheorem{lemma}[theorem]{Lemma}
\newtheorem{proposition}[theorem]{Proposition}
\newtheorem{corollary}[theorem]{Corollary}
\newtheorem{definition}[theorem]{Definition}
\newtheorem*{definition*}{Definition}
\newtheorem{example}[theorem]{Example}
\theoremstyle{remark}
\newtheorem{remark}[theorem]{Remark}
\newtheorem{assumption}[theorem]{Assumption}

\newcommand{\R}{\mathbb{R}}
\newcommand{\E}{\mathbb{E}}
\newcommand{\Tr}{\mathrm{Tr}}
\newcommand{\op}{\mathrm{op}}

\newcommand{\conv}{\mathrm{conv}}

\newcommand{\lmax}{\lambda_{\max}}

\begin{document}

\title{\textbf{Fisher Width: A Geometric Measure of Complexity\\
on Statistical Manifolds}}

\author{
Vu Khac Ky\\[4pt]
\small Department of Mathematics, FPT University, Vietnam\\
\small \texttt{kyvk2@fe.edu.vn}
}

\date{\today}
\maketitle


\begin{abstract}
Gaussian width is a central geometric complexity measure in high-dimensional probability, compressed sensing, convex optimization, and learning theory. It quantifies the average extent of a set along random directions, thereby capturing the effective dimension of constraint sets, hypothesis classes, and descent cones. However, this notion is intrinsically Euclidean. Statistical models instead carry a natural Riemannian geometry induced by the Fisher information metric, where directions are scaled according to statistical distinguishability rather than ambient Euclidean length.

We introduce Fisher width, a Fisher-geometric analogue of Gaussian width for statistical manifolds. At a parameter point $\theta$, Fisher width replaces the Euclidean identity by the local metric tensor $G(\theta)^{1/2}$, measuring the Gaussian width of the Fisher-rescaled set. This makes the resulting quantity sensitive to local statistical curvature and invariant under smooth reparameterizations.

We develop the basic theory of Fisher width, showing that it retains key structural features of Gaussian width, including concentration, metric perturbation stability, and spectral comparison bounds with the Euclidean baseline, while also capturing anisotropic geometric effects invisible to Euclidean measures. As an application, we prove a generalization bound for Fisher-Lipschitz hypothesis classes and propose computable estimators, which we evaluate empirically on MNIST across three model classes.

Fisher width is to statistical manifolds what Gaussian width is to Euclidean convex bodies. This work lays the foundation for studying complexity and learning on curved statistical manifolds.
\end{abstract}

\section{Introduction}
\label{sec:intro}

\subsection{Complexity on Statistical Manifolds}

A central problem in high-dimensional geometry and statistical learning is to quantify the effective complexity of a model class. One of the most successful notions for this purpose is the Gaussian width
\[
w(T) = \E_{g \sim \mathcal{N}(0,I_d)}
\!\left[
\sup_{v \in T}\langle g, v\rangle
\right],
\]
which plays a fundamental role in compressed sensing
\cite{Chandrasekaran2012}, phase transitions in convex recovery \cite{Amelunxen2014}, and empirical process theory
\cite{Bartlett2002}. Here \(T\) typically represents a geometric object encoding the complexity of a problem, such as a hypothesis class, a tangent cone, or a feasible region. Its power rests on a simple geometric insight: the average extent of \(T \subset \R^d\) in random directions captures its effective size in the ambient Euclidean space.

Modern statistical models, however, are rarely governed by Euclidean geometry alone. The parameters of an exponential family, a neural network, or a variational autoencoder naturally form a \emph{statistical manifold} $(\Theta, g_F)$ endowed with the Fisher information metric
\[
G(\theta)_{ij}
= \E_{x \sim p_\theta}\!\left[
  \frac{\partial \log p_\theta(x)}{\partial \theta_i}
  \frac{\partial \log p_\theta(x)}{\partial \theta_j}
\right].
\]
On such manifolds, statistical distinguishability is governed by the local Fisher metric rather than by ambient Euclidean distance. A unit displacement in parameter space may change the model dramatically in one direction and leave it almost unchanged in another, depending entirely on the local geometry encoded by $G(\theta)$. Classical Gaussian width, which treats all directions as equally significant, does not capture this information-geometric structure. This observation motivates the search for a notion of geometric complexity that is intrinsic to Fisher geometry in the same way that Gaussian width is intrinsic to Euclidean geometry.

\begin{definition*}[Fisher Width]
Let \(\theta_0\in\Theta\) and \(T\subset\mathbb R^d\). The
\emph{Fisher width} of \(T\) at \(\theta_0\) is
\[
w_G(T;\theta_0)
=
\E_{g\sim\mathcal N(0,I_d)}
\left[
\sup_{v\in T}
\langle g,G(\theta_0)^{1/2}v\rangle
\right].
\]
\end{definition*}

The factor $G(\theta_0)^{1/2}$ reweights directions by their local Fisher curvature, so that statistically sensitive directions contribute more to the width than insensitive ones. Thus Fisher width is a local quantity: it depends on the base point \(\theta_0\) and varies across the statistical manifold. When $G(\theta_0) = I_d$, it reduces to the classical Gaussian width.

\subsection{Main Contributions}
\label{sec:contributions}

The main contributions of this paper are as follows.

\begin{itemize}

\item[(C1)] We introduce Fisher width as a local Fisher-geometric analogue of Gaussian width on statistical manifolds. We establish the lifting identity
\[
w_G(T;\theta)=w(G(\theta)^{1/2}T),
\]
and prove its invariance under smooth reparameterizations.

\item[(C2)] We develop the structural theory of Fisher width, including concentration inequalities, algebraic properties, spectral comparison bounds, and stability under metric perturbations. We further prove an empirical Fisher
stability theorem, showing that Fisher width can be consistently approximated from score samples under operator-norm concentration of the empirical Fisher
matrix.

\item[(C3)] We prove a generalization bound for Fisher-Lipschitz hypothesis classes, showing that the uniform deviation is controlled by
\[
\frac{w_G(T-T;\theta_0)}{\sqrt{n}}.
\]
We also show that this scale is sharp, up to constants, for local exponential-family likelihood models. Thus Fisher width appears in Fisher-geometric learning bounds in a role analogous to Gaussian width and Rademacher complexity in Euclidean settings.

\item[(C4)] We develop practical estimators based on empirical Fisher information, randomized low-rank approximation, and score-based sampling, and validate these estimators on MNIST across logistic regression, softmax regression, and ridge regression models.

\end{itemize}

\subsection{Related Work}
\label{sec:related}

\paragraph{Gaussian width and high-dimensional geometry.}
Gaussian width is a classical complexity measure in asymptotic convex geometry and
high-dimensional probability. Gordon's escape-through-a-mesh theorem
\cite{Gordon1988} established its central role in the geometry of random projections.
It has since become a standard tool in compressed sensing and convex recovery
\cite{Chandrasekaran2012}, where sharp phase transitions are often described through
the closely related statistical dimension of convex cones \cite{Amelunxen2014}.
Gaussian-process and chaining methods provide a complementary analytic viewpoint,
beginning with Dudley's entropy bound \cite{Dudley1967} and continuing through the
modern high-dimensional probability literature
\cite{LedouxTalagrand1991,Vershynin2018,Wainwright2019}. Our work follows this
geometric tradition, but replaces the ambient Euclidean metric by the local Fisher
metric induced by a statistical model.

\paragraph{Complexity measures in statistical learning.}
Gaussian and Rademacher complexities are fundamental measures of hypothesis-class
richness in learning theory \cite{Bartlett2002}. Localized empirical-process methods
and oracle inequalities further refine these complexity measures by adapting them to
the geometry of the function class and the data distribution \cite{Koltchinskii2011}.
More recent work has developed norm- and architecture-dependent complexity bounds for
neural networks, including size-independent Rademacher bounds \cite{Golowich2018}.
Fisher width belongs to this family of width-type complexity measures: it is a
functional controlling uniform deviations, but the underlying geometry is Fisher
rather than Euclidean.

\paragraph{Information geometry and Fisher metrics.}
The Fisher information metric is the canonical Riemannian metric on many statistical
model spaces. It underlies the classical theory of statistical manifolds, dual
connections, and information-geometric curvature
\cite{AmariNagaoka2000,Amari2016,Ay2017,MurrayRice1993}. Statistical curvature was
introduced by Efron \cite{Efron1975}, and the local metric expansions used in this
paper are standard in Riemannian geometry \cite{doCarmo1992}. Existing work in
information geometry primarily studies divergences, geodesics, projections, curvature,
and optimization. In contrast, Fisher width introduces a Gaussian-width-type
complexity functional on statistical manifolds, thereby connecting information
geometry with high-dimensional geometric complexity.

\paragraph{Fisher geometry in optimization and deep learning.}
Fisher geometry has long been used in optimization through natural gradient descent
\cite{Amari1998}. In modern machine learning, it appears in Riemannian metrics for
neural networks \cite{Ollivier2015}, scalable curvature approximations such as K-FAC
\cite{MartensGrosse2015}, and analyses of natural-gradient methods
\cite{Martens2020}. The empirical Fisher is computationally convenient but is not, in
general, equivalent to the population Fisher or the Hessian, and can exhibit distinct
limitations \cite{Kunstner2019}. The spectrum of the Fisher information matrix in
deep networks can also be highly anisotropic, with a small number of large directions
and many near-flat directions \cite{Karakida2019}. Fisher width is complementary to
this optimization literature: instead of studying the dynamics of an algorithm, it
uses the Fisher metric to measure the effective geometric size of a set.

\paragraph{Fisher--Rao complexity and related generalization perspectives.}
The closest prior work is the Fisher--Rao norm of neural networks
\cite{Liang2019}, which uses the Fisher metric to define a parameter-space complexity
measure. The distinction is structural: the Fisher--Rao norm measures the length of a
single parameter vector, whereas Fisher width measures the size of an entire set after
Fisher deformation. Thus Fisher width is a set-complexity measure in the tradition of
Gaussian width, rather than a parameter norm.

PAC-Bayes theory provides another information-sensitive approach to generalization,
relating risk bounds to divergences between posterior and prior distributions
\cite{McAllester1999,Seeger2002,DziugaiteRoy2017}; see \cite{Alquier2024} for a
recent survey. For exponential-family models, such divergences admit local quadratic
approximations involving the Fisher metric. Flatness-based explanations of
generalization are also related, beginning with flat minima
\cite{HochreiterSchmidhuber1997} and continuing through sharp-minima phenomena
\cite{Keskar2017}, sharpness-aware optimization \cite{Foret2021}, and empirical
studies of generalization measures \cite{Jiang2020}. Fisher width provides a
different but compatible viewpoint: it measures the Fisher-geometric size of a class
or perturbation set, rather than the sharpness of a single trained solution.

To the best of our knowledge, no prior work has introduced a Gaussian-width-type
complexity functional on statistical manifolds or developed its structural,
statistical, and computational properties under the Fisher information metric.

\section{Fisher Width}
\label{sec:definition}

\subsection{Statistical Manifolds and Fisher Geometry}
\label{sec:fisher-geometry}

Let $\mathcal{X}$ be a sample space and $\mathcal{P} = \{p_\theta : \theta \in \Theta\}$ a smooth parametric
family with $\Theta \subset \mathbb{R}^d$. The \emph{Fisher information matrix} at $\theta$ is
\[
G(\theta)
= \mathbb{E}_{x \sim p_\theta}\!\left[
  \nabla_\theta \log p_\theta(x)\,
  \nabla_\theta \log p_\theta(x)^\top
\right].
\]
We assume $G(\theta) \succ 0$ throughout. The \emph{Fisher metric} $g_F(u,v)|_\theta = u^\top G(\theta) v$ makes $\Theta$ a Riemannian manifold. The Fisher metric characterizes the local statistical distinguishability of nearby distributions \cite{AmariNagaoka2000,Amari2016}. Specifically, for sufficiently small $\Delta\theta$, the Kullback--Leibler divergence
\[
D_{\mathrm{KL}}(p\|q)
=
\int p(x)\log\frac{p(x)}{q(x)}\,dx
\]
admits the local expansion
\[
D_{\mathrm{KL}}\!\left(
  p_\theta \;\|\; p_{\theta+\Delta\theta}
\right)
=
\frac{1}{2}
\Delta\theta^\top G(\theta)\,\Delta\theta
+ o(\|\Delta\theta\|^2)
\]
(see \citep[Chapter 2]{AmariNagaoka2000}). Thus, the Fisher metric provides the local quadratic approximation of statistical distinguishability independently of the ambient Euclidean structure on $\mathbb{R}^d$.

\subsection{Fisher Width}
\label{sec:fisher-width}

We begin by recalling the classical Gaussian width. For a set $T \subset \mathbb{R}^d$, the \emph{Gaussian width} is defined by
\[
w(T)
= \mathbb{E}_{g \sim \mathcal{N}(0,I_d)}\!\left[
  \sup_{v \in T}\langle g, v\rangle\right].
\]

The following standard properties will serve as reference points for the Fisher-width theory developed later.

\begin{proposition}[Basic Properties of Gaussian Width]
\label{prop:gw-basic}
For compact sets $T, T_1, T_2 \subset \mathbb{R}^d$ and
scalar $a \in \mathbb{R}$:
\begin{enumerate}
\item If $T_1 \subseteq T_2$, then $w(T_1) \le w(T_2)$.
\item $w(aT) = |a|\,w(T)$.
\item $w(\operatorname{conv}(T)) = w(T)$.
\item $w(T_1 + T_2) \le w(T_1) + w(T_2)$.
\end{enumerate}
\end{proposition}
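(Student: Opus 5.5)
The plan is to prove all four items directly from the definition $w(T)=\E\sup_{v\in T}\langle g,v\rangle$, working pointwise in $g$ and then taking expectations. Since $T$ is compact, $v\mapsto\langle g,v\rangle$ is continuous, so every supremum is attained and finite. The functional $S(g):=\sup_{v\in T}\langle g,v\rangle$ is Lipschitz in $g$ with constant $\sup_{v\in T}\|v\|$, hence integrable against the Gaussian measure. This justifies all the expectations and the linearity used below.

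\textbf{Monotonicity and convex hulls.} For item 1, if $T_1\subseteq T_2$ then $\sup_{T_1}\langle g,v\rangle\le\sup_{T_2}\langle g,v\rangle$ for every $g$, and taking expectations gives the claim.

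For item 3, the inequality $w(T)\le w(\conv(T))$ follows from item 1. For the reverse inequality, any $u\in\conv(T)$ is a finite convex combination $u=\sum_i\lambda_i v_i$ with $v_i\in T$. Then
\[
\langle g,u\rangle=\sum_i\lambda_i\langle g,v_i\rangle\le\sup_{v\in T}\langle g,v\rangle,
\]
so the two suprema agree pointwise in $g$. Note that $\conv(T)$ is compact by Carathéodory, so the statement is consistent with the compactness hypothesis.

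\textbf{Scaling.} For item 2, the case $a\ge0$ is immediate: $\sup_{v\in T}\langle g,av\rangle=a\sup_{v\in T}\langle g,v\rangle$. For $a<0$ we write $a=-|a|$, which gives
\[
\sup_{v\in T}\langle g,av\rangle=|a|\sup_{v\in T}\langle -g,v\rangle .
\]
Since $-g\overset{d}{=}g$ by the symmetry of $\mathcal N(0,I_d)$, taking expectations yields $|a|\,w(T)$.

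\textbf{Subadditivity.} For item 4, take any $v_1+v_2$ with $v_i\in T_i$. Then
\[
\langle g,v_1+v_2\rangle\le\sup_{T_1}\langle g,\cdot\rangle+\sup_{T_2}\langle g,\cdot\rangle .
\]
Taking the supremum over $T_1+T_2$ and then expectations gives the inequality. In fact the sup splits exactly, so equality holds, but the stated inequality suffices.

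The only genuinely non-formal step is the negative-scalar case of item 2. It relies on the distributional symmetry of $g$ rather than on a pointwise identity, so it is the one place where an exchange of $g$ for $-g$ under the expectation must be justified. The integrability remark in the first paragraph settles this, and the remaining items are routine.
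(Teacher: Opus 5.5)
Your proof is correct and complete. Each item reduces to a pointwise statement about $S(g)=\sup_{v\in T}\langle g,v\rangle$, integrability follows from $|S(g)|\le \|g\|_2\sup_{v\in T}\|v\|_2$, and the reflection $-g\overset{d}{=}g$ handles $a<0$. All of this implicitly assumes $T$ is nonempty, which is needed for the suprema to be finite.

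The paper does not prove this proposition; it lists these as standard properties of Gaussian width. So there is no alternative route to compare against, and your pointwise-in-$g$ argument is the standard proof.

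Two points your argument makes clear are worth keeping. Item 4 actually holds with equality. Item 2 needs no symmetry of $T$, so by the lifting identity the symmetry hypothesis in the paper's Lemma~\ref{lem:mono-homog} is unnecessary.
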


We now introduce Fisher width, the Fisher-geometric analogue of Gaussian width.

\begin{definition}[Fisher Width]
\label{def:fisher-width}
Let $\theta_0 \in \Theta$ and $G(\theta_0) \succ 0$.
The \emph{Fisher width} of $T \subset \mathbb{R}^d$ at $\theta_0$ is
\[
w_G(T;\theta_0)
=
\mathbb{E}_{g\sim\mathcal N(0,I_d)}
\!\left[
\sup_{v\in T}
\langle g,\,G(\theta_0)^{1/2}v\rangle
\right].
\]
When $\theta_0$ is fixed or clear from context, we simply write
$w_G(T)$.
\end{definition}

The map
\[
v \longmapsto G(\theta_0)^{1/2}v
\]
rescales directions according to the local Fisher geometry. Directions with large
Fisher scaling contribute more strongly to the width, while statistically insensitive
directions contribute less. Consequently, Fisher width measures the effective size
of \(T\) after reweighting directions by their local statistical distinguishability.

When \(G(\theta_0)=I_d\), Fisher width reduces to the classical Gaussian width:
\[
w_G(T;\theta_0)=w(T).
\]
In general, \(w_G(T;\theta)\) varies across the statistical manifold, in contrast to
Gaussian width, which depends only on the set \(T\) and not on any base point.

\subsection{Equivalent Forms}
\label{sec:equiv}

\begin{proposition}[Lifting Identity]
\label{prop:lifting-identity}

Let \(T\subset\mathbb R^d\) be compact and let \(G\succ0\).
Then
\[
w_G(T)
=
w\!\left(G^{1/2}T\right), \quad \text{where} \quad 
G^{1/2}T
=
\{G^{1/2}v:\;v\in T\}.
\]
\end{proposition}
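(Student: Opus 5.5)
The identity is essentially a change of variables inside the supremum, so I will prove it by unwinding Definition~\ref{def:fisher-width} and the definition of Gaussian width, and checking that the two suprema coincide pointwise in $g$. Write $G^{1/2}$ for the unique symmetric positive definite square root of $G \succ 0$, and let $\phi(v) = G^{1/2}v$. This map is a linear bijection of $\mathbb{R}^d$, so $\phi$ restricts to a bijection from $T$ onto $G^{1/2}T$. Since $\phi$ is continuous, $G^{1/2}T$ is compact whenever $T$ is, so $w(G^{1/2}T)$ is defined in the same sense as $w(T)$.

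Fix a realization $g \in \mathbb{R}^d$. Because $\phi$ maps $T$ bijectively onto $G^{1/2}T$, the set of real numbers $\{\langle g, G^{1/2}v\rangle : v \in T\}$ coincides with $\{\langle g, u\rangle : u \in G^{1/2}T\}$. Hence
\[
\sup_{v\in T}\langle g, G^{1/2}v\rangle = \sup_{u\in G^{1/2}T}\langle g, u\rangle
\]
for every $g$. This is an equality of functions of $g$, so taking $\mathbb{E}_{g\sim\mathcal N(0,I_d)}$ of both sides gives $w_G(T) = w(G^{1/2}T)$.

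The only point requiring care is that the expectation is well defined, and here compactness is used. The function $g \mapsto \sup_{u\in G^{1/2}T}\langle g,u\rangle$ is a supremum of linear functions, hence convex, and it is finite because $G^{1/2}T$ is bounded, so it is continuous and in particular measurable. It is also dominated in absolute value by $\|g\|_2 \sup_{u\in G^{1/2}T}\|u\|_2 \le \|g\|_2\, \|G^{1/2}\|_{\op} \sup_{v\in T}\|v\|_2$, which is integrable under the Gaussian measure. Therefore both sides are finite and equal. Since the pointwise equality holds for every $g$, no further obstacle arises. As a remark, I would note that the same argument shows $w_G(T) = w(AT)$ for any $A$ with $A^\top A = G$, by rotation invariance of $\mathcal N(0,I_d)$.
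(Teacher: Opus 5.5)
Your proposal is correct and follows the same route as the paper: substitute $u=G^{1/2}v$ to get pointwise equality of the two suprema for every $g$, then take expectations. Your added justification of measurability and integrability via compactness is a rigorous supplement the paper omits, and your closing remark that $w_G(T)=w(AT)$ for any $A$ with $A^\top A=G$ is also correct.
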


\begin{proof}
By definition,
\[
w_G(T)
=
\mathbb E_g
\left[
\sup_{v\in T}
\langle g,G^{1/2}v\rangle
\right].
\]
Since \(u=G^{1/2}v\) ranges over \(G^{1/2}T\), we have
\[
\sup_{v\in T}
\langle g,G^{1/2}v\rangle
=
\sup_{u\in G^{1/2}T}
\langle g,u\rangle.
\]
Therefore
\[
w_G(T)
=
\mathbb E_g
\left[
\sup_{u\in G^{1/2}T}
\langle g,u\rangle
\right]
=
w\!\left(G^{1/2}T\right).
\]
\end{proof}

The lifting identity
\[
w_G(T;\theta_0)=w\!\left(G(\theta_0)^{1/2}T\right)
\]
is the central structural observation underlying the theory. It shows that, at a fixed
base point, Fisher width is precisely the Gaussian width of the Fisher-rescaled set.
Consequently, many classical properties of Gaussian width can be transferred to the
Fisher setting through the local metric deformation induced by the Fisher information
matrix.

The geometric content arises because this deformation depends on the base point:
\[
T \longmapsto G(\theta)^{1/2}T .
\]
Thus, as \(\theta\) varies across the statistical manifold, the same Euclidean set
\(T\) may acquire different effective widths under the local information geometry.
Fisher width therefore measures not only the shape of \(T\), but also how that shape is
seen through the Fisher geometry of the model.

\subsection{Examples}
\label{sec:examples}

\begin{example}[Euclidean Ball]
\label{ex:euclidean-ball}
Let 
\[
T = rB_2^d = \{v \in \mathbb{R}^d : \|v\|_2 \le r\}.
\]
By the Cauchy--Schwarz inequality,
\[
\langle g, G^{1/2}v \rangle
= \langle G^{1/2}g, v \rangle
\le \|G^{1/2}g\|_2\,\|v\|_2,
\]
with equality when $v$ is parallel to $G^{1/2}g$. Therefore
\[
\sup_{\|v\|_2 \le r}
\langle g, G^{1/2}v \rangle
= r\,\|G^{1/2}g\|_2,
\]
and hence $w_G(T) = r\,\mathbb{E}\|G^{1/2}g\|_2$.
Since \(g\sim N(0,I_d)\), we have
\[
\mathbb E[gg^\top]=I_d.
\]
Thus
\[
\mathbb E\|G^{1/2}g\|_2^2
=
\mathbb E[g^\top Gg]
=
\operatorname{Tr}\!\left(G\,\mathbb E[gg^\top]\right)
=
\operatorname{Tr}(G).
\]
By Jensen's inequality,
\[
\mathbb E\|G^{1/2}g\|_2
=
\mathbb E\sqrt{\|G^{1/2}g\|_2^2}
\le
\sqrt{
\mathbb E\|G^{1/2}g\|_2^2
}
=
\sqrt{\operatorname{Tr}(G)}.
\]
Consequently,
\[
w_G(rB_2^d)
\le
r\sqrt{\operatorname{Tr}(G)}.
\]

Thus the Fisher width of a Euclidean ball is controlled by the total Fisher
information \(\operatorname{Tr}(G)\).
\end{example}

\begin{example}[Sparse Vectors]
Consider
\[
T = \{v \in \mathbb{R}^d : \|v\|_0 \le s,\ \|v\|_2 \le 1\},
\]
the set of $s$-sparse unit vectors. By Cauchy--Schwarz, the supremum
$\sup_{v \in T}\langle u, v\rangle$ for any fixed $u \in \mathbb{R}^d$
is attained by concentrating the support of $v$ on the $s$ largest
coordinates of $u$, giving
\[
\sup_{\|v\|_0 \le s,\,\|v\|_2 \le 1} \langle u, v \rangle
= \|u\|_{(s)}
:= \Bigl(\sum_{i=1}^s u_{(i)}^2\Bigr)^{1/2},
\]
where $|u_{(1)}| \ge \cdots \ge |u_{(d)}|$. Applying this with $u = G^{1/2}g$,
\[
w_G(T) = \mathbb{E}\,\|G^{1/2}g\|_{(s)}.
\]
When $G = I_d$ this reduces to $w(T) = \mathbb{E}\,\|g\|_{(s)}$.
The Fisher metric replaces the isotropic vector $g$ by the anisotropically
weighted $G^{1/2}g$, so two equally sparse parameter vectors may contribute
very differently to the width if they are aligned with directions carrying
different amounts of Fisher information.
\end{example}

\begin{example}[Exponential Families]
Consider an exponential family
\[
p_\theta(x)
=
h(x)
\exp\!\bigl(
\theta^\top\phi(x)-A(\theta)
\bigr),
\]
with sufficient statistic $\phi(x)$. Its Fisher information matrix is
\[
G(\theta)
=
\nabla^2 A(\theta)
=
\mathbb{E}_{p_\theta}\!\left[
\bigl(\phi(x)-\mathbb{E}_{p_\theta}[\phi(x)]\bigr)
\bigl(\phi(x)-\mathbb{E}_{p_\theta}[\phi(x)]\bigr)^\top
\right]
\]
\citep{AmariNagaoka2000}.
Thus the Fisher matrix, and hence Fisher width, is determined entirely by the covariance structure of the sufficient statistics. For the Euclidean ball $T = rB_2^d$,
\[
w_G(T)
\le r\sqrt{\mathrm{Tr}(G(\theta))}
= r\sqrt{\mathrm{Tr}(\mathrm{Cov}_{p_\theta}[\phi(x)])},
\]
the square root of the total variance of the sufficient statistic under $p_\theta$.
Fisher width is thus bounded above by a quantity that concentrates on directions
of high statistical curvature; directions along which $\phi(x)$ is nearly
deterministic under $p_\theta$ contribute little to this bound.
\end{example}

\begin{example}[Linear subspaces]
\label{ex:subspace}

Let \(V \subset \mathbb{R}^d\) be a \(k\)-dimensional linear subspace and let
\(T = V \cap B_2^d\). Denote by \(P_V\) the orthogonal projector onto \(V\). Since
\[
\sup_{\substack{v \in V \\ \|v\|_2 \le 1}}
\langle g, G^{1/2}v \rangle = \|P_V G^{1/2}g\|_2
\]
we have
\[
w_G(T)=\mathbb{E}\|P_VG^{1/2}g\|_2 
 \le
\sqrt{\mathbb{E}\|P_VG^{1/2}g\|_2^2} \quad \text{(by Jensen's inequality)}.
\]

Note first that
\[
\|P_VG^{1/2}g\|_2^2
=
(P_VG^{1/2}g)^\top(P_VG^{1/2}g)
=
g^\top G^{1/2}P_V^\top P_VG^{1/2}g
=
g^\top G^{1/2}P_VG^{1/2}g,
\]
where we use \(P_V^2=P_V = P_V^\top\). Hence
\[
\begin{aligned}
\mathbb{E}\|P_VG^{1/2}g\|_2^2
&=
\mathbb{E}\bigl[g^\top G^{1/2}P_VG^{1/2}g\bigr] \\
&=
\operatorname{Tr}(G^{1/2}P_VG^{1/2}) \\
&=
\operatorname{Tr}(GP_V) \\
&=
\operatorname{Tr}(P_VGP_V).
\end{aligned}
\]
Here the second identity uses the quadratic-form trace identity
\(\mathbb{E}[g^\top A g]=\operatorname{Tr}(A)\) for \(g\sim\mathcal N(0,I_d)\),
while the last two identities follow from cyclicity of the trace and \(P_V^2=P_V\). Therefore
\[
w_G(T)
\le
\sqrt{\operatorname{Tr}(P_VGP_V)}.
\]
Equivalently, for any orthonormal basis \(\{u_1,\ldots,u_k\}\) of \(V\),
\[
\operatorname{Tr}(P_VGP_V)
=
\sum_{i=1}^k u_i^\top G u_i.
\]
Thus Fisher width is controlled by the total Fisher information carried by the
directions of \(V\). This is the subspace analogue of the full-ball bound controlled
by \(\operatorname{Tr}(G)\): only the Fisher energy restricted to \(V\) contributes to
the width.
\end{example}

\subsection{Basic Properties}
\label{sec:basic-props}

\noindent \begin{lemma}[Monotonicity and Homogeneity]
\label{lem:mono-homog}
\begin{enumerate} \mbox{ }
  \item Let $T_1, T_2 \subset \mathbb{R}^d$ be compact. If $T_1 \subseteq T_2$, then
  \[
    w_G(T_1) \le w_G(T_2).
  \]
  \item Let $T \subset \mathbb{R}^d$ be compact and symmetric. Then for every
  $\alpha \in \mathbb{R}$,
  \[
    w_G(\alpha T) = |\alpha|\, w_G(T).
  \]
\end{enumerate}
\end{lemma}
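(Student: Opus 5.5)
Both parts follow by reducing to the classical Gaussian width through the Lifting Identity (Proposition~\ref{prop:lifting-identity}), $w_G(T)=w(G^{1/2}T)$, and then invoking Proposition~\ref{prop:gw-basic}. The only thing to check is that the linear map $v\mapsto G^{1/2}v$ preserves the relevant structure (inclusion, compactness, symmetry, scaling). Compactness of $G^{1/2}T$ holds because $G^{1/2}$ is a continuous linear map, so Proposition~\ref{prop:gw-basic} applies to the lifted sets.

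For monotonicity, $T_1\subseteq T_2$ gives $G^{1/2}T_1\subseteq G^{1/2}T_2$. Item 1 of Proposition~\ref{prop:gw-basic} then yields $w(G^{1/2}T_1)\le w(G^{1/2}T_2)$, and lifting back gives $w_G(T_1)\le w_G(T_2)$. One can also argue directly: for each fixed $g$ the supremum over the smaller set is at most the supremum over the larger one, and taking expectations preserves the inequality. This is legitimate because compactness makes the suprema finite, with integrable bounds such as $\sup_{v\in T_2}\|G^{1/2}v\|_2\,\|g\|_2$.

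For homogeneity, linearity gives $G^{1/2}(\alpha T)=\alpha\,G^{1/2}T$, so item 2 of Proposition~\ref{prop:gw-basic} gives $w_G(\alpha T)=|\alpha|\,w(G^{1/2}T)=|\alpha|\,w_G(T)$.

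The only delicate point is negative $\alpha$. Item 2 of Proposition~\ref{prop:gw-basic} is stated for all real $a$, but its proof relies on symmetry: either of the set, or of the Gaussian law via $g\mapsto -g$. I would therefore verify this case directly rather than rely on the citation. For $\alpha\ge 0$, pull $\alpha$ out of the supremum. For $\alpha<0$, write $\alpha T=|\alpha|(-T)$ and use the hypothesis $-T=T$. This gives the claim and explains why the statement assumes $T$ symmetric. Even without that assumption, the identity $\sup_{v\in -T}\langle g,G^{1/2}v\rangle=\sup_{v\in T}\langle -g,G^{1/2}v\rangle$ together with $-g\overset{d}{=}g$ would give $w_G(-T)=w_G(T)$. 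So the symmetry hypothesis is convenient but not essential, and the proof presents no real obstacle.
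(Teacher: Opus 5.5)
Your proposal is correct and essentially matches the paper, which says only that the lemma follows directly from the definition; your pointwise comparison of suprema, and the equivalent route through the lifting identity and Proposition~\ref{prop:gw-basic}, is exactly that argument. Your side remark is also right: since $-g$ and $g$ have the same law, $w_G(-T)=w_G(T)$ for any compact $T$, so the symmetry hypothesis in part 2 is not actually needed.
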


The lemma follows directly from the definition.

\begin{lemma}[Algebraic properties]
\label{lem:algebraic}
Let \(G\succ 0\). For nonempty compact sets
\(T,T_1,T_2\subset\mathbb R^d\),
\begin{align*}
w_G(T_1 + T_2) &\leq w_G(T_1) + w_G(T_2), \\
w_G(\conv(T))  &= w_G(T), \\
w_G(T_1 \cap T_2) &\leq \min\{w_G(T_1), w_G(T_2)\}.
\end{align*}
\end{lemma}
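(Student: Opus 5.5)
The plan is to reduce every claim to the Euclidean case through the Lifting Identity (Proposition~\ref{prop:lifting-identity}), $w_G(T)=w(G^{1/2}T)$, and then invoke Proposition~\ref{prop:gw-basic}. Set $A=G^{1/2}$, a fixed invertible linear map. Because $A$ is linear, it commutes with the three set operations that appear in the statement. First, $A(T_1+T_2)=AT_1+AT_2$, which is immediate from $A(v_1+v_2)=Av_1+Av_2$. Second, $A\,\conv(T)=\conv(AT)$, since linear maps preserve convex combinations and so map the convex hull onto the convex hull of the image. Third, $A(T_1\cap T_2)\subseteq AT_1\cap AT_2$; in fact equality holds because $A$ is injective, but the inclusion is all we need. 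Compactness is preserved by the continuous map $A$, and the convex hull of a compact set in $\R^d$ is compact by Carathéodory. Hence Proposition~\ref{prop:gw-basic} applies to every image set that arises.

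With these facts in place the three claims follow directly. For subadditivity, $w_G(T_1+T_2)=w(AT_1+AT_2)\le w(AT_1)+w(AT_2)=w_G(T_1)+w_G(T_2)$ by item 4 of Proposition~\ref{prop:gw-basic}. For the convex hull, $w_G(\conv T)=w(\conv(AT))=w(AT)=w_G(T)$ by item 3. For the intersection, $T_1\cap T_2\subseteq T_i$ for each $i$, so part 1 of Lemma~\ref{lem:mono-homog} gives $w_G(T_1\cap T_2)\le w_G(T_i)$, and we take the minimum over $i$.

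The one subtlety is the intersection. Since $T_1\cap T_2$ may be empty, I would either read the statement as requiring nonemptiness, or adopt the convention $\sup\emptyset=-\infty$, under which the inequality holds trivially.

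For completeness, one could also argue directly from the definition instead of going through Proposition~\ref{prop:gw-basic}. For sums, $\sup_{v_1+v_2}\langle g,A(v_1+v_2)\rangle=\sup_{v_1}\langle g,Av_1\rangle+\sup_{v_2}\langle g,Av_2\rangle$; the sum of the two suprema is finite and integrable because the sets are compact. For convex hulls, the map $v\mapsto\langle g,Av\rangle$ is linear, so its supremum over $\conv T$ equals its supremum over $T$.

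I do not expect any step to be a real obstacle. The only points needing care are these linear-map commutation facts and the integrability and compactness bookkeeping.
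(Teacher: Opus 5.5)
Your proposal is correct and matches the paper's proof. Both reduce via the lifting identity $w_G(T)=w(G^{1/2}T)$, use that $G^{1/2}$ commutes with Minkowski sums, convex hulls and (by injectivity) intersections, and then invoke the corresponding Gaussian-width properties. Getting the intersection bound directly from monotonicity, and flagging that $T_1\cap T_2$ may be empty, are small refinements the paper leaves implicit.
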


\begin{proof}
By the lifting identity,
\[
w_G(T)=w(G^{1/2}T).
\]
Since \(G^{1/2}\) is linear and injective,
\[
G^{1/2}(T_1+T_2)
=
G^{1/2}T_1+G^{1/2}T_2,
\]
\[
G^{1/2}\operatorname{conv}(T)
=
\operatorname{conv}(G^{1/2}T),
\]
and
\[
G^{1/2}(T_1\cap T_2)
=
G^{1/2}T_1\cap G^{1/2}T_2.
\]
The claims follow from the corresponding algebraic properties of Gaussian width.
\end{proof}

\begin{lemma}[Spectral bounds]
\label{lem:spectral-bounds}
Let \(T\subset\mathbb R^d\) be compact. Then
\[
\sqrt{\lambda_{\min}(G)}\,w(T)
\le
w_G(T)
\le
\sqrt{\lambda_{\max}(G)}\,w(T).
\]
\end{lemma}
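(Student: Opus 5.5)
The plan is to prove both inequalities with the Sudakov--Fernique comparison inequality for Gaussian processes. The naive approach of pulling \(G^{1/2}\) out of the inner product does not work. Writing \(\langle g,G^{1/2}v\rangle=\langle G^{1/2}g,v\rangle\) and comparing \(G^{1/2}g\) with \(\sqrt{\lambda_{\max}(G)}\,g\) is not legitimate. The supremum over \(T\) is not monotone in the random vector in any simple way unless \(T\) is, for example, a centered ball as in Example~\ref{ex:euclidean-ball}. So I will compare the two Gaussian processes indexed by \(T\) through their increments instead.

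Define, for \(v\in T\), the centered Gaussian processes
\[
X_v=\langle g,G^{1/2}v\rangle,\qquad Y_v=\sqrt{\lambda_{\max}(G)}\,\langle g,v\rangle,\qquad Z_v=\sqrt{\lambda_{\min}(G)}\,\langle g,v\rangle .
\]
By Definition~\ref{def:fisher-width}, or equivalently the lifting identity (Proposition~\ref{prop:lifting-identity}), \(\E\sup_T X_v=w_G(T)\). Also \(\E\sup_T Y_v=\sqrt{\lambda_{\max}(G)}\,w(T)\) and \(\E\sup_T Z_v=\sqrt{\lambda_{\min}(G)}\,w(T)\). The increments are
\[
\E(X_u-X_v)^2=(u-v)^\top G(u-v),\qquad \E(Y_u-Y_v)^2=\lambda_{\max}(G)\|u-v\|_2^2,
\]
and similarly for \(Z\) with \(\lambda_{\min}(G)\). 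The Rayleigh quotient bounds \(\lambda_{\min}(G)\|x\|_2^2\le x^\top Gx\le\lambda_{\max}(G)\|x\|_2^2\) then give \(\E(Z_u-Z_v)^2\le\E(X_u-X_v)^2\le\E(Y_u-Y_v)^2\) for all \(u,v\in T\). Sudakov--Fernique yields \(\E\sup Z\le\E\sup X\) and \(\E\sup X\le\E\sup Y\), which are exactly the two claimed inequalities.

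The main technical point is applying Sudakov--Fernique, usually stated for finite index sets, to a compact \(T\). I will first apply it on finite subsets \(F\subset T\). Then I will take an increasing sequence of finite sets whose union is dense in \(T\). Each process has continuous sample paths \(v\mapsto\langle g,Av\rangle\), so the supremum over \(T\) equals the limit of the suprema over these finite sets. Monotone convergence passes the expectations to the limit, with integrability from \(\sup_T|\langle g,Av\rangle|\le\|A^\top g\|_2\sup_T\|v\|_2\).

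One should also note that the expectations are well defined and nonnegative: fixing any \(v_0\in T\), \(\sup_T X_v\ge X_{v_0}\), which has mean zero. So no sign issue arises when multiplying by \(\sqrt{\lambda}\). A Slepian-free alternative for the upper bound exists when \(T\) is symmetric (via dual norms), but the comparison argument handles general compact \(T\) uniformly, so that is the route I will take.
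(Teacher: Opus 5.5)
Your proposal is correct and follows essentially the same route as the paper, which also defines $X_v=\langle G^{1/2}g,v\rangle$, $Y_v=\sqrt{\lambda_{\max}(G)}\,\langle g,v\rangle$ and $Z_v=\sqrt{\lambda_{\min}(G)}\,\langle g,v\rangle$, compares their increments through the Rayleigh quotient bounds, and applies Sudakov--Fernique twice. Your extra step, passing from finite index sets to compact $T$ via a dense increasing sequence and monotone convergence, is handled correctly; the paper leaves this implicit in its citation of Talagrand.
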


\begin{proof}
Since \(G^{1/2}\) is symmetric, for \(g\sim\mathcal N(0,I_d)\) we may write
\[
\langle g,G^{1/2}v\rangle
=
\langle G^{1/2}g,v\rangle.
\]
Thus, with \(h=G^{1/2}g\sim\mathcal N(0,G)\),
\[
w_G(T)
=
\mathbb E_h\sup_{v\in T}\langle h,v\rangle .
\]

Define the centered Gaussian processes
\[
X_v=\langle h,v\rangle,
\qquad
Y_v=\sqrt{\lambda_{\max}(G)}\,\langle g,v\rangle,
\qquad v\in T.
\]
For all \(v,v'\in T\),
\[
\mathbb E|X_v-X_{v'}|^2
=
(v-v')^\top G(v-v')
\le
\lambda_{\max}(G)\|v-v'\|_2^2
=
\mathbb E|Y_v-Y_{v'}|^2.
\]
By the Sudakov--Fernique comparison inequality \cite[Theorem 2.2.3]{talagrand2014},
\[
w_G(T)
=
\mathbb E\sup_{v\in T}X_v
\le
\mathbb E\sup_{v\in T}Y_v
=
\sqrt{\lambda_{\max}(G)}\,w(T).
\]

For the lower bound, define
\[
Z_v=\sqrt{\lambda_{\min}(G)}\,\langle g,v\rangle .
\]
Then
\[
\mathbb E|Z_v-Z_{v'}|^2
=
\lambda_{\min}(G)\|v-v'\|_2^2
\le
(v-v')^\top G(v-v')
=
\mathbb E|X_v-X_{v'}|^2.
\]
Applying Sudakov--Fernique again gives
\[
\sqrt{\lambda_{\min}(G)}\,w(T)
=
\mathbb E\sup_{v\in T}Z_v
\le
\mathbb E\sup_{v\in T}X_v
=
w_G(T).
\]
\end{proof}

\begin{theorem}[Gaussian Concentration]
\label{thm:concentration}
Let $R(T) = \sup_{v \in T}\|v\|_2 < \infty$. Define
\[
\varphi(g) = \sup_{v \in T}\langle g, G^{1/2}v\rangle.
\]
Then $\varphi$ is $\sqrt{\lmax(G)}\,R(T)$-Lipschitz and satisfies
\[
\Pr\!\left(\varphi(g) - w_G(T) > t\right)
\leq \exp\!\left(-\frac{t^2}{2\,\lmax(G)\,R(T)^2}\right).
\]
\end{theorem}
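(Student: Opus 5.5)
The plan is to establish the Lipschitz property of $\varphi$ directly from its definition as a supremum of linear functionals. The tail bound will then follow from the Gaussian concentration inequality for Lipschitz functions of a standard Gaussian vector (Tsirelson--Ibragimov--Sudakov / Borell). Two preliminary remarks will be needed.

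\begin{itemize}
\item $\varphi$ is finite everywhere: by Cauchy--Schwarz, $|\langle g, G^{1/2}v\rangle| \le \|G^{1/2}\|_{\op}\|g\|_2 R(T)$.
\item $\E\varphi(g)=w_G(T)$ is finite, since $\E\|g\|_2<\infty$.
\end{itemize}

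For the Lipschitz step, I fix $g,g'\in\R^d$. For each $v\in T$,
\[
\langle g, G^{1/2}v\rangle = \langle g', G^{1/2}v\rangle + \langle g-g', G^{1/2}v\rangle \le \varphi(g') + \|g-g'\|_2\,\|G^{1/2}v\|_2 .
\]
Next I bound $\|G^{1/2}v\|_2 \le \|G^{1/2}\|_{\op}\|v\|_2 = \sqrt{\lmax(G)}\,\|v\|_2 \le \sqrt{\lmax(G)}\,R(T)$, using that $G^{1/2}$ is symmetric positive definite with largest eigenvalue $\sqrt{\lmax(G)}$. Taking the supremum over $v\in T$ gives $\varphi(g)-\varphi(g') \le \sqrt{\lmax(G)}\,R(T)\,\|g-g'\|_2$. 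Exchanging the roles of $g$ and $g'$ yields the Lipschitz constant $L=\sqrt{\lmax(G)}\,R(T)$.

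For the concentration step, I invoke the standard Gaussian concentration inequality. If $F:\R^d\to\R$ is $L$-Lipschitz and $g\sim\mathcal N(0,I_d)$, then $\Pr(F(g)-\E F(g)>t)\le \exp(-t^2/(2L^2))$ for all $t>0$; see, e.g., \cite{LedouxTalagrand1991} or \cite{Vershynin2018}. Applying this with $F=\varphi$ and $\E\varphi(g)=w_G(T)$, the definition of Fisher width, gives exactly the stated bound with $2L^2 = 2\lmax(G)R(T)^2$.

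The only point requiring care is that the sharp constant $2$ in the exponent must come from the form of the inequality with the optimal constant. Weaker versions with $\pi^2/8$ or an unspecified absolute constant would not match the statement, so I would cite the Tsirelson--Ibragimov--Sudakov version explicitly. A minor technicality is measurability of $\varphi$. Since $\varphi$ is Lipschitz, it is continuous, hence Borel measurable, so no separability argument over $T$ is needed. Compactness of $T$ is not even required beyond $R(T)<\infty$.
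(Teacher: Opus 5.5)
Your proposal is correct and follows essentially the same route as the paper: you get the Lipschitz constant from Cauchy--Schwarz together with $\|G^{1/2}v\|_2\le\sqrt{\lambda_{\max}(G)}\,R(T)$, then quote the Gaussian concentration inequality for Lipschitz functions (the paper cites \cite{LedouxTalagrand1991}). Your additional remarks on finiteness, measurability, and the need for the sharp constant $2$ (Tsirelson--Ibragimov--Sudakov form) are accurate points that the paper leaves implicit.
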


\begin{proof}
For $g, h \in \R^d$,
\begin{align*}
|\varphi(g) - \varphi(h)|
&\leq \sup_{v \in T}|\langle g - h,\, G^{1/2}v\rangle| \\
&\leq \|g - h\|_2 \sup_{v \in T}\|G^{1/2}v\|_2 \\
&\leq \sqrt{\lmax(G)}\,R(T)\,\|g - h\|_2.
\end{align*}
Thus $\varphi$ is $\sqrt{\lmax(G)}\,R(T)$-Lipschitz.
The concentration inequality follows from the Gaussian concentration
theorem \cite{LedouxTalagrand1991}.
\end{proof}

\begin{lemma}[Reparameterization invariance]
\label{lem:reparam}
Let \(\varphi:\widetilde\Theta\to\Theta\) be a smooth reparameterization with
invertible Jacobian
\[
J = J_\varphi(\tilde\theta_0).
\]
Let \(\theta_0=\varphi(\tilde\theta_0)\), let \(G=G(\theta_0)\), and let
\[
\widetilde G = J^\top GJ
\]
be the pullback Fisher metric at \(\tilde\theta_0\). For \(T\subset\mathbb R^d\),
define its pullback
\[
\widetilde T
=
J^{-1}T
=
\{\tilde v\in\mathbb R^d:\ J\tilde v\in T\}.
\]
Then
\[
w_{\widetilde G}(\widetilde T)=w_G(T).
\]
\end{lemma}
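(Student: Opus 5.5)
The plan is to reduce both sides to Gaussian widths of concrete sets via the lifting identity (Proposition~\ref{prop:lifting-identity}), and then to show that the two lifted sets differ only by an orthogonal transformation, under which Gaussian width is invariant. By the lifting identity,
\[
w_{\widetilde G}(\widetilde T)=w\bigl(\widetilde G^{1/2}\widetilde T\bigr),\qquad w_G(T)=w\bigl(G^{1/2}T\bigr).
\]
Since $\widetilde T=J^{-1}T$, every $\tilde v\in\widetilde T$ has the form $J^{-1}v$ with $v\in T$, so $\widetilde G^{1/2}\widetilde T=\widetilde G^{1/2}J^{-1}T$.

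The key algebraic step is to identify $\widetilde G^{1/2}$ with $G^{1/2}J$ up to a left orthogonal factor. Set $M=G^{1/2}J$, which is invertible because $G\succ0$ and $J$ is invertible. Then $M^\top M=J^\top G J=\widetilde G$, so by the polar decomposition $M=Q\,(M^\top M)^{1/2}=Q\,\widetilde G^{1/2}$ with $Q$ orthogonal; concretely $Q=M\widetilde G^{-1/2}$, and one checks $Q^\top Q=\widetilde G^{-1/2}M^\top M\,\widetilde G^{-1/2}=I_d$. Hence $\widetilde G^{1/2}=Q^\top G^{1/2}J$, and therefore
\[
\widetilde G^{1/2}J^{-1}T=Q^\top G^{1/2}T .
\]

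It remains to use rotation invariance of Gaussian width: for orthogonal $U$ and any set $S$,
\[
w(US)=\E\sup_{u\in S}\langle U^\top g,u\rangle=w(S),
\]
because $U^\top g\sim\mathcal N(0,I_d)$ whenever $g\sim\mathcal N(0,I_d)$. Applying this with $U=Q^\top$ and $S=G^{1/2}T$ gives $w_{\widetilde G}(\widetilde T)=w(G^{1/2}T)=w_G(T)$.

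The only genuinely nontrivial point is the polar-decomposition step. The naive guess $\widetilde G^{1/2}=J^\top G^{1/2}J$ is false in general, because matrix square roots do not commute with congruence, so the orthogonal factor $Q$ has to be introduced explicitly. Everything else is bookkeeping. If $T$ is not assumed compact, the same argument still goes through, since all identities above are set equalities and the expectations transform identically, possibly taking the value $+\infty$ on both sides.
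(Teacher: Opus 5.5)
Your proof is correct and is essentially the paper's argument: the paper notes that $\widetilde G^{1/2}g$ and $J^\top G^{1/2}g$ are centered Gaussians with the same covariance $J^\top GJ$, hence equal in law, and then substitutes $v=J\tilde v$. Your polar factor $Q=G^{1/2}J\,\widetilde G^{-1/2}$ makes that equality in law explicit, since $\widetilde G^{1/2}g=J^\top G^{1/2}(Qg)$ with $Qg\sim\mathcal N(0,I_d)$, and your rotation-invariance step is the same fact stated for the lifted sets.
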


\begin{proof}
Let \(g\sim\mathcal N(0,I_d)\). Since
\[
\operatorname{Cov}(J^\top G^{1/2}g)
=
J^\top GJ
=
\widetilde G,
\]
the centered Gaussian vectors \(J^\top G^{1/2}g\) and
\(\widetilde G^{1/2}g\) have the same distribution. Hence
\[
\begin{aligned}
w_{\widetilde G}(\widetilde T)
&=
\mathbb E\sup_{\tilde v\in\widetilde T}
\langle \widetilde G^{1/2}g,\tilde v\rangle \\
&=
\mathbb E\sup_{\tilde v\in\widetilde T}
\langle J^\top G^{1/2}g,\tilde v\rangle \\
&=
\mathbb E\sup_{\tilde v:\,J\tilde v\in T}
\langle G^{1/2}g,J\tilde v\rangle \\
&=
\mathbb E\sup_{v\in T}
\langle G^{1/2}g,v\rangle \\
&=
w_G(T),
\end{aligned}
\]
where the fourth equality uses the bijection \(v=J\tilde v\) between
\(\widetilde T\) and \(T\).
\end{proof}

The equality is distributional rather than pointwise: in general,
\[
(J^\top GJ)^{1/2}\neq J^\top G^{1/2}.
\]
Thus Fisher width is invariant under reparameterization only when both the metric and
the tangent set are transformed by pullback.

\section{Structural Properties}
\label{sec:structural}

The identity $w_G(T) = w(G^{1/2}T)$ allows structural properties of Gaussian width to transfer directly to the Fisher setting. We focus on the stability under metric perturbations and subspace geometry.

\subsection{Estimation Stability}
\label{subsec:estimation-stability}

In practice, the Fisher matrix is rarely available exactly.
Instead, one typically works with empirical Fisher matrices, low-rank approximations, diagonal approximations, or Kronecker-factored approximations. A natural question is therefore whether Fisher width is stable under perturbations of the underlying metric. The following result shows that errors in the Fisher metric translate linearly into errors in Fisher width.

\begin{theorem}[Estimation stability]
\label{thm:estimation-stability}
Let \(G_1,G_2\succeq0\). If
\[
\|G_1^{1/2}-G_2^{1/2}\|_{\mathrm{op}}\le \varepsilon,
\]
then for every compact set \(T\subset\mathbb R^d\),
\[
\bigl|w_{G_1}(T)-w_{G_2}(T)\bigr|
\le
\varepsilon w(T).
\]
\end{theorem}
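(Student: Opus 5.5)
The plan is to compare the two suprema pointwise in \(g\) and then bound the resulting error term by a Gaussian comparison argument, exactly as in the proof of Lemma~\ref{lem:spectral-bounds}. Write \(A=G_1^{1/2}\), \(B=G_2^{1/2}\) and \(D=A-B\). All three matrices are symmetric and \(\|D\|_{\op}\le\varepsilon\). Note that the definition of \(w_G\) makes sense verbatim for \(G\succeq0\): positive definiteness is not used anywhere below.

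\emph{Step 1 (pointwise comparison).} Fix \(g\in\R^d\). Since \(T\) is compact, every supremum below is finite and attained. For each \(v\in T\) we write \(\langle g,Av\rangle=\langle g,Bv\rangle+\langle g,Dv\rangle\). Taking suprema gives
\[
\sup_{v\in T}\langle g,Av\rangle-\sup_{v\in T}\langle g,Bv\rangle
\;\le\;\sup_{v\in T}\langle g,Dv\rangle
=\sup_{v\in T}\langle Dg,v\rangle .
\]
Taking expectations yields
\[
w_{G_1}(T)-w_{G_2}(T)\le \E\sup_{v\in T}\langle Dg,v\rangle .
\]
Exchanging the roles of \(G_1\) and \(G_2\) replaces \(D\) by \(-D\). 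Because \(-g\) has the same law as \(g\), the right-hand side is unchanged. It therefore suffices to show \(\E\sup_{v\in T}\langle Dg,v\rangle\le\varepsilon\,w(T)\).

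\emph{Step 2 (Gaussian comparison).} Define the centered Gaussian processes \(X_v=\langle Dg,v\rangle\) and \(Y_v=\varepsilon\langle g,v\rangle\) for \(v\in T\). For \(v,v'\in T\) their increments satisfy
\[
\E|X_v-X_{v'}|^2=(v-v')^\top D^2(v-v')\le\|D\|_{\op}^2\|v-v'\|_2^2\le\varepsilon^2\|v-v'\|_2^2=\E|Y_v-Y_{v'}|^2 .
\]
Here we use that \(D\) is symmetric, so \(\operatorname{Cov}(Dg)=D^2\) and \(\lmax(D^2)=\|D\|_{\op}^2\). The Sudakov--Fernique inequality, applied as in Lemma~\ref{lem:spectral-bounds}, then gives
\[
\E\sup_{v\in T} X_v\le\E\sup_{v\in T}Y_v=\varepsilon\,w(T).
\]
Combining this with Step 1 proves the claim.

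\emph{Expected obstacle.} The only delicate point is the passage from the operator-norm bound on \(D\) to a width bound in Step 2. The naive estimate \(\langle Dg,v\rangle\le\|Dg\|_2\|v\|_2\) would only give \(\varepsilon\sqrt d\,R(T)\), not \(\varepsilon\,w(T)\). This is why I would use Sudakov--Fernique rather than Cauchy--Schwarz. Its hypotheses hold because \(T\) is compact, so the processes are bounded and separable. I also need \(w(T)\ge0\), which holds since \(\E\sup_v\langle g,v\rangle\ge\sup_v\E\langle g,v\rangle=0\); hence the bound \(\varepsilon\,w(T)\) is meaningful and the symmetric two-sided statement follows.
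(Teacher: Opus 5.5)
Your proposal is correct and follows the paper's proof essentially step for step. Both arguments bound the difference pointwise by $\E\sup_{v\in T}\langle g,Dv\rangle$, then compare $\langle g,Dv\rangle$ with $\varepsilon\langle g,v\rangle$ via Sudakov--Fernique, and finish by symmetry. Your use of the fact that $-g$ has the same law as $g$ for the reverse inequality is an equivalent substitute for the paper's rerunning the argument with $G_1$ and $G_2$ interchanged.
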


\begin{proof}
Let
\[
M=G_1^{1/2}-G_2^{1/2}.
\]
For each \(g\),
\[
\sup_{v\in T}\langle g,G_1^{1/2}v\rangle
-
\sup_{v\in T}\langle g,G_2^{1/2}v\rangle
\le
\sup_{v\in T}\langle g,Mv\rangle.
\]
Taking expectations gives
\[
w_{G_1}(T)-w_{G_2}(T)
\le
w(MT).
\]
We now bound \(w(MT)\) by Sudakov--Fernique. Consider the centered Gaussian
processes
\[
X_v=\langle g,Mv\rangle,
\qquad
Y_v=\varepsilon\langle g,v\rangle,
\qquad v\in T.
\]
Since \(\|M\|_{\mathrm{op}}\le\varepsilon\),
\[
\mathbb E|X_u-X_v|^2
=
\|M(u-v)\|_2^2
\le
\varepsilon^2\|u-v\|_2^2
=
\mathbb E|Y_u-Y_v|^2
\]
for all \(u,v\in T\). Hence, by the Sudakov--Fernique comparison inequality,
\[
w(MT)\le \varepsilon w(T).
\]
Therefore
\[
w_{G_1}(T)-w_{G_2}(T)
\le
\varepsilon w(T).
\]
Interchanging \(G_1\) and \(G_2\) gives the reverse bound
\[
w_{G_2}(T)-w_{G_1}(T)
\le
\varepsilon w(T).
\]
Combining the two inequalities yields
\[
\bigl|w_{G_1}(T)-w_{G_2}(T)\bigr|
\le
\varepsilon w(T).
\]
\end{proof}

We next apply the stability theorem to empirical Fisher matrices. Suppose that
the Fisher information at \(\theta_0\) is estimated from independent samples
\[
z_1,\ldots,z_n\sim p_{\theta_0}.
\]
Define the score vector by
\[
s_{\theta_0}(z)
=
\nabla_\theta \log p_\theta(z)\big|_{\theta=\theta_0},
\]
and write
\[
s_i=s_{\theta_0}(z_i),
\qquad i=1,\ldots,n.
\]
Under the standard regularity conditions for Fisher information,
\[
G=G(\theta_0)
=
\mathbb E_{z\sim p_{\theta_0}}
\bigl[
s_{\theta_0}(z)s_{\theta_0}(z)^\top
\bigr]
=
\mathbb E[s_i s_i^\top].
\]
The empirical Fisher matrix is then
\[
\widehat G_n
=
\frac1n\sum_{i=1}^n
s_i s_i^\top .
\]
We start with a concentration bound for
\(\widehat G_n\).

\begin{lemma}[Empirical Fisher concentration]
\label{lem:empirical-fisher-concentration}
Assume
\[
\|s_{\theta_0}(z)\|_2\le B
\qquad\text{almost surely}.
\]
Then, with probability at least \(1-\delta\),
\[
\|\widehat G_n-G\|_{\op}
\le
B\sqrt{
\frac{2\|G\|_{\op}\log(2d/\delta)}{n}
}
+
\frac{4B^2}{3}
\frac{\log(2d/\delta)}{n}.
\]
\end{lemma}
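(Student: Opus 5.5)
We will apply the matrix Bernstein inequality (e.g.\ Tropp's version for sums of independent, centered, self-adjoint random matrices) to the summands
\[
X_i=\frac1n\bigl(s_is_i^\top-G\bigr),\qquad i=1,\ldots,n,
\]
so that \(\widehat G_n-G=\sum_i X_i\). The summands are independent and symmetric, and \(\mathbb E X_i=0\) because \(G=\mathbb E[s_is_i^\top]\) under the stated regularity conditions. The matrix Bernstein inequality for \(d\times d\) Hermitian matrices states that if \(\|X_i\|_{\op}\le L\) almost surely and \(\sigma^2=\|\sum_i\mathbb E X_i^2\|_{\op}\), then
\[
\Pr\Bigl(\Bigl\|\sum_i X_i\Bigr\|_{\op}\ge t\Bigr)\le 2d\exp\!\Bigl(-\frac{t^2/2}{\sigma^2+Lt/3}\Bigr).
\]
Inverting this at level \(\delta\) in the standard way gives, with probability at least \(1-\delta\),
\[
\Bigl\|\sum_i X_i\Bigr\|_{\op}\le\sqrt{2\sigma^2\log(2d/\delta)}+\frac{2L}{3}\log(2d/\delta).
\]
This follows by solving the quadratic \(t^2/2=\log(2d/\delta)\,(\sigma^2+Lt/3)\) and using \(\sqrt{a+b}\le\sqrt a+\sqrt b\).

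The remaining work is computing \(L\) and \(\sigma^2\).

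\textbf{Uniform bound \(L\).} Since \(s_is_i^\top\) and \(G\) are both positive semidefinite, we have \(\|s_is_i^\top-G\|_{\op}\le\max\{\|s_i\|_2^2,\|G\|_{\op}\}\). Moreover \(\|G\|_{\op}\le\mathbb E\|s\|_2^2\le B^2\). This would give \(L=B^2/n\). To match the stated constant \(4B^2/3\), however, it suffices to use the cruder triangle bound \(L\le 2B^2/n\), since \(\tfrac{2}{3}\cdot 2B^2=\tfrac{4B^2}{3}\). I will present that version.

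\textbf{Variance \(\sigma^2\).} We have
\[
\mathbb E(ss^\top-G)^2=\mathbb E\bigl[\|s\|_2^2\,ss^\top\bigr]-G^2\preceq B^2\,\mathbb E[ss^\top]=B^2G.
\]
Summing over the \(n\) terms, each scaled by \(1/n^2\), gives \(\sigma^2\le B^2\|G\|_{\op}/n\).

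Plugging these into the inverted Bernstein bound yields exactly
\[
B\sqrt{\frac{2\|G\|_{\op}\log(2d/\delta)}{n}}+\frac{4B^2}{3}\frac{\log(2d/\delta)}{n}.
\]

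The only delicate step is the variance computation: one must use the positive semidefinite ordering \(\mathbb E[\|s\|^2ss^\top]\preceq B^2G\) and drop \(-G^2\) (which is negative semidefinite) to obtain the \(\|G\|_{\op}\) dependence rather than a cruder \(B^4\) term. Everything else is bookkeeping of constants in inverting the tail bound.
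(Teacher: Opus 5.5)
Your proposal is correct and follows the paper's proof essentially verbatim. Both use matrix Bernstein with the triangle bound $\|s_is_i^\top-G\|_{\op}\le 2B^2$ and the variance proxy $\mathbb E(s_is_i^\top-G)^2\preceq B^2G$; the only cosmetic difference is that you put the $1/n$ inside the summands rather than dividing at the end. Your side remark that $\|s_is_i^\top-G\|_{\op}\le\max\{\|s_i\|_2^2,\|G\|_{\op}\}\le B^2$ for positive semidefinite matrices is also valid, and it would sharpen the second term to $\frac{2B^2}{3}\frac{\log(2d/\delta)}{n}$.
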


\begin{proof}
For each \(i\), write
\[
s_i=s_{\theta_0}(z_i),
\qquad
X_i=s_i s_i^\top-G.
\]
Then \(\mathbb E X_i=0\) and
\[
\widehat G_n-G=\frac1n\sum_{i=1}^n X_i.
\]

We first bound the almost-sure operator norm of \(X_i\). Since
\[
\|s_i s_i^\top\|_{\op}
=
\|s_i\|_2^2
\le B^2
\]
and
\[
\|G\|_{\op}
=
\bigl\|\mathbb E[s_i s_i^\top]\bigr\|_{\op}
\le
\mathbb E\|s_i s_i^\top\|_{\op}
\le
B^2,
\]
we have
\[
\|X_i\|_{\op}
\le
2B^2
\qquad\text{almost surely}.
\]

Next, we bound the variance proxy. Expanding \(X_i^2\) gives
\[
\begin{aligned}
\mathbb E X_i^2
&=
\mathbb E\bigl[(s_i s_i^\top-G)^2\bigr] \\
&=
\mathbb E[(s_i s_i^\top)^2]
-
\mathbb E[s_i s_i^\top]G
-
G\mathbb E[s_i s_i^\top]
+
G^2 \\
&=
\mathbb E[(s_i s_i^\top)^2]-G^2.
\end{aligned}
\]
Since \(G^2\succeq0\),
\[
\mathbb E X_i^2
\preceq
\mathbb E[(s_i s_i^\top)^2].
\]
Moreover,
\[
(s_i s_i^\top)^2
=
\|s_i\|_2^2 s_i s_i^\top
\preceq
B^2 s_i s_i^\top.
\]
Taking expectations yields
\[
\mathbb E X_i^2
\preceq
B^2\mathbb E[s_i s_i^\top]
=
B^2G.
\]
Therefore
\[
\left\|
\sum_{i=1}^n \mathbb E X_i^2
\right\|_{\op}
\le
nB^2\|G\|_{\op}.
\]

By the matrix Bernstein inequality
\citep[Theorem~6.1.1]{Tropp2015},
\[
\Pr\left(
\left\|\sum_{i=1}^n X_i\right\|_{\op}\ge t
\right)
\le
2d\exp\left(
-\frac{t^2/2}{\sigma^2+Lt/3}
\right),
\]
where
\[
L=2B^2,
\qquad
\sigma^2
=
\left\|
\sum_{i=1}^n \mathbb E X_i^2
\right\|_{\op}
\le
nB^2\|G\|_{\op}.
\]
Taking
\[
u=\log(\frac{2d}{\delta}), 
\qquad
t=\sqrt{2\sigma^2 u}+\frac{2}{3}Lu,
\]
gives, with probability at least \(1-\delta\),
\[
\left\|\sum_{i=1}^n X_i\right\|_{\op}
\le
\sqrt{2\sigma^2\log(\frac{2d}{\delta})}
+
\frac{2}{3}L\log(\frac{2d}{\delta}).
\]
Therefore
\[
\left\|\sum_{i=1}^n X_i\right\|_{\op}
\le
B\sqrt{2n\|G\|_{\op}\log(\frac{2d}{\delta})}
+
\frac{4}{3}B^2\log(\frac{2d}{\delta}).
\]
Dividing by \(n\), we obtain
\[
\|\widehat G_n-G\|_{\op}
\le
B\sqrt{
\frac{2\|G\|_{\op}\log(\frac{2d}{\delta})}{n}
}
+
\frac{4B^2}{3}
\frac{\log(\frac{2d}{\delta})}{n}.
\]
\end{proof}

\begin{lemma}[Square-root perturbation]
\label{lem:sqrt-perturbation-op}
Let \(A,B\succ0\). Suppose
\[
A\succeq aI,
\qquad
B\succeq bI
\]
for some \(a,b>0\). Then
\[
\|A^{1/2}-B^{1/2}\|_{\op}
\le
\frac{\|A-B\|_{\op}}{\sqrt a+\sqrt b}.
\]
\end{lemma}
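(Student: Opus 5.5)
The plan is to prove the bound through a Sylvester-equation identity and an extremal eigenvector argument, a standard route for square-root perturbation bounds. Set $X=A^{1/2}-B^{1/2}$, which is symmetric. First I would verify directly the identity
\[
A^{1/2}X+XB^{1/2}
=A-A^{1/2}B^{1/2}+A^{1/2}B^{1/2}-B
=A-B .
\]
This expresses $X$ as the solution of a Sylvester equation whose coefficient matrices are positive definite with spectra bounded below by $\sqrt a$ and $\sqrt b$. Here I use that $A\succeq aI$ implies $A^{1/2}\succeq\sqrt a\,I$, since the square root is monotone on eigenvalues.

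Next I will pick a unit eigenvector $x$ of $X$ for an eigenvalue $\mu$ with $|\mu|=\|X\|_{\op}$. This is possible because $X$ is symmetric. Taking the quadratic form of the identity at $x$ and using $Xx=\mu x$ together with the symmetry of $X$ gives
\[
x^\top(A-B)x
= x^\top A^{1/2}Xx + x^\top XB^{1/2}x
= \mu\bigl(x^\top A^{1/2}x + x^\top B^{1/2}x\bigr).
\]
For the second term I write $x^\top X B^{1/2}x=(Xx)^\top B^{1/2}x=\mu\, x^\top B^{1/2}x$.

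Then I bound both sides. The bracket on the right is at least $\sqrt a+\sqrt b>0$, while the left side is at most $\|A-B\|_{\op}$ in absolute value. Hence
\[
\|X\|_{\op}=|\mu|\le \frac{|x^\top(A-B)x|}{\sqrt a+\sqrt b}\le\frac{\|A-B\|_{\op}}{\sqrt a+\sqrt b}.
\]

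The only step needing care is choosing the eigenvector of $X$ rather than of $A$ or $B$. That choice is what lets both cross terms collapse to multiples of $\mu$ without requiring $A$ and $B$ to commute, so the main obstacle is really just noticing the identity. If I wanted an alternative, I would use the integral representation $X=\int_0^\infty e^{-tA^{1/2}}(A-B)e^{-tB^{1/2}}\,dt$. Bounding its norm by $\|A-B\|_{\op}\int_0^\infty e^{-t(\sqrt a+\sqrt b)}\,dt$ gives the same constant. I expect the eigenvector argument to be shorter, so I would use it.
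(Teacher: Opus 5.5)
Your proof is correct, and it takes a genuinely different route from the paper's.

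\textbf{The paper's route.} It starts from the resolvent integral representation
\[
A^{1/2}-B^{1/2}=\frac{1}{\pi}\int_0^\infty t^{1/2}(A+tI)^{-1}(A-B)(B+tI)^{-1}\,dt .
\]
It bounds the norm of the integrand by $t^{1/2}\|A-B\|_{\op}/((a+t)(b+t))$. It then evaluates $\frac{1}{\pi}\int_0^\infty t^{1/2}/((a+t)(b+t))\,dt=1/(\sqrt a+\sqrt b)$ by the substitution $t=u^2$ and partial fractions, handling $a=b$ by continuity.

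\textbf{Your route.} You use the non-commutative difference-of-squares identity $A^{1/2}X+XB^{1/2}=A-B$ and test it against an extremal unit eigenvector of the symmetric matrix $X$. All steps check out:
\begin{itemize}
\item $A^{1/2}\succeq\sqrt a\,I$ holds because $A^{1/2}$ shares the eigenbasis of $A$.
\item $x^\top XB^{1/2}x=\mu\,x^\top B^{1/2}x$ follows from the symmetry of $X$.
\item The resulting denominator $x^\top A^{1/2}x+x^\top B^{1/2}x$ is at least $\sqrt a+\sqrt b>0$.
\end{itemize}

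\textbf{What each buys.} Your argument is shorter and purely finite-dimensional linear algebra. It needs no cited integral identity, no Bochner-integral estimates, and no scalar computation or $a=b$ case split. It gives up generality, though. The factorization is special to the square root, and the eigenvector trick is special to the operator norm of a Hermitian difference. The paper's resolvent representation is the template that extends to other operator monotone functions with a L\"owner integral representation, such as $x^p$ with $0<p<1$. It also extends to every unitarily invariant norm, since the resolvent factors are controlled in operator norm.

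\textbf{Your backup route.} The Sylvester integral $X=\int_0^\infty e^{-tA^{1/2}}(A-B)e^{-tB^{1/2}}\,dt$ sits in between. It also covers unitarily invariant norms and yields the constant without partial fractions. If you use it, state why it equals $X$: the Sylvester equation $A^{1/2}Y+YB^{1/2}=A-B$ has a unique solution because the spectra of $A^{1/2}$ and $-B^{1/2}$ are disjoint.
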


\begin{proof}
We use the standard integral representation for the matrix square root
\citep[Chapter~V]{Bhatia1997}:
\[
A^{1/2}-B^{1/2}
=
\frac{1}{\pi}
\int_0^\infty
t^{1/2}
(A+tI)^{-1}
(A-B)
(B+tI)^{-1}
\,dt .
\]
Taking operator norms and using the triangle inequality for Bochner integrals gives
\[
\|A^{1/2}-B^{1/2}\|_{\op}
\le
\frac{1}{\pi}
\int_0^\infty
t^{1/2}
\left\|
(A+tI)^{-1}
(A-B)
(B+tI)^{-1}
\right\|_{\op}
\,dt .
\]
By submultiplicativity of the operator norm,
\[
\left\|
(A+tI)^{-1}
(A-B)
(B+tI)^{-1}
\right\|_{\op}
\le
\|(A+tI)^{-1}\|_{\op}
\|A-B\|_{\op}
\|(B+tI)^{-1}\|_{\op}.
\]
Since \(A\succeq aI\), every eigenvalue of \(A+tI\) is at least \(a+t\). Hence
\[
\|(A+tI)^{-1}\|_{\op}
=
\frac{1}{\lambda_{\min}(A+tI)}
\le
\frac{1}{a+t}.
\]
Similarly,
\[
\|(B+tI)^{-1}\|_{\op}
\le
\frac{1}{b+t}.
\]
Therefore
\[
\|A^{1/2}-B^{1/2}\|_{\op}
\le
\frac{\|A-B\|_{\op}}{\pi}
\int_0^\infty
\frac{t^{1/2}}{(a+t)(b+t)}
\,dt .
\]

It remains to compute the scalar integral. With the substitution \(t=u^2\),
\[
\frac{1}{\pi}
\int_0^\infty
\frac{t^{1/2}}{(a+t)(b+t)}
\,dt
=
\frac{2}{\pi}
\int_0^\infty
\frac{u^2}{(a+u^2)(b+u^2)}
\,du .
\]
If \(a\ne b\), then
\[
\frac{u^2}{(a+u^2)(b+u^2)}
=
\frac{1}{a-b}
\left(
\frac{a}{a+u^2}
-
\frac{b}{b+u^2}
\right).
\]
Using
\[
\int_0^\infty \frac{c}{c+u^2}\,du
=
\frac{\pi\sqrt c}{2},
\qquad c>0,
\]
we obtain
\[
\frac{2}{\pi}
\int_0^\infty
\frac{u^2}{(a+u^2)(b+u^2)}
\,du
=
\frac{\sqrt a-\sqrt b}{a-b}
=
\frac{1}{\sqrt a+\sqrt b}.
\]
The case \(a=b\) follows by continuity, or directly from the same integral formula.
Thus
\[
\frac{1}{\pi}
\int_0^\infty
\frac{t^{1/2}}{(a+t)(b+t)}
\,dt
=
\frac{1}{\sqrt a+\sqrt b}.
\]
Substituting this into the previous estimate gives
\[
\|A^{1/2}-B^{1/2}\|_{\op}
\le
\frac{\|A-B\|_{\op}}{\sqrt a+\sqrt b},
\]
as claimed.
\end{proof}

We can now combine empirical Fisher concentration, square-root perturbation, and
estimation stability to obtain a high-probability stability bound for Fisher width
computed from the empirical Fisher matrix.

\begin{corollary}[Empirical Fisher stability]
\label{cor:empirical-fisher-stability}
Assume
\(
\|s_{\theta_0}(z)\|_2\le B
\)
almost surely, and $G\succeq \mu I$ for some \(\mu>0\). Define
\[
\eta_n(\delta)
=
B\sqrt{
\frac{2\|G\|_{\op}\log(2d/\delta)}{n}
}
+
\frac{4B^2}{3}
\frac{\log(2d/\delta)}{n}.
\]
For $n$ large enough so that
\(
\eta_n(\delta)<\mu,
\)
then with probability at least \(1-\delta\),
\[
\bigl|
w_{\widehat G_n}(T)-w_G(T)
\bigr|
\le
\frac{\eta_n(\delta)}{\sqrt{\mu}}\,w(T).
\]
\end{corollary}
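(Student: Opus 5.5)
The plan is to chain the three preceding results: Lemma~\ref{lem:empirical-fisher-concentration}, Lemma~\ref{lem:sqrt-perturbation-op}, and Theorem~\ref{thm:estimation-stability}, all on a single high-probability event. First, by Lemma~\ref{lem:empirical-fisher-concentration}, the event
\[
\mathcal E=\bigl\{\|\widehat G_n-G\|_{\op}\le \eta_n(\delta)\bigr\}
\]
has probability at least \(1-\delta\). Everything that follows is deterministic on \(\mathcal E\).

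Second, on \(\mathcal E\) I would get a lower spectral bound for \(\widehat G_n\) from Weyl's inequality:
\[
\lambda_{\min}(\widehat G_n)\ge \lambda_{\min}(G)-\|\widehat G_n-G\|_{\op}\ge \mu-\eta_n(\delta)>0 .
\]
This uses the hypothesis \(\eta_n(\delta)<\mu\), so \(\widehat G_n\succ0\) and Lemma~\ref{lem:sqrt-perturbation-op} applies with \(A=\widehat G_n\), \(a=\mu-\eta_n(\delta)\), \(B=G\), \(b=\mu\). It gives
\[
\|\widehat G_n^{1/2}-G^{1/2}\|_{\op}\le \frac{\eta_n(\delta)}{\sqrt{\mu-\eta_n(\delta)}+\sqrt{\mu}}\le \frac{\eta_n(\delta)}{\sqrt\mu},
\]
where the last step simply drops the nonnegative term \(\sqrt{\mu-\eta_n(\delta)}\). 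One could equally avoid the lower bound on \(\widehat G_n\) altogether by taking \(a\downarrow 0\) in the integral bound. The denominator \(\sqrt a+\sqrt b\ge\sqrt b=\sqrt\mu\) still holds in that limit, so only \(G\succeq\mu I\) is really needed. I will mention this as a remark but follow the Weyl route, which matches the stated hypothesis.

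Finally, I apply Theorem~\ref{thm:estimation-stability} with \(G_1=\widehat G_n\succeq0\), \(G_2=G\), and \(\varepsilon=\eta_n(\delta)/\sqrt\mu\). This yields \(|w_{\widehat G_n}(T)-w_G(T)|\le \frac{\eta_n(\delta)}{\sqrt\mu}\,w(T)\) on \(\mathcal E\), hence with probability at least \(1-\delta\).

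The main obstacle is the second step. Lemma~\ref{lem:sqrt-perturbation-op} is stated for strictly positive definite matrices with explicit lower bounds, and the empirical Fisher \(\widehat G_n\) need not be invertible a priori. The condition \(\eta_n(\delta)<\mu\) together with Weyl's inequality is exactly what resolves this. The remaining steps are direct substitutions.
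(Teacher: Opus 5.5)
Your proposal is correct and follows the paper's proof step for step. It restricts to the concentration event, uses Weyl's inequality to get $\widehat G_n\succeq(\mu-\eta_n(\delta))I$, applies the square-root perturbation lemma (you only swap the roles of $A$ and $B$) and drops $\sqrt{\mu-\eta_n(\delta)}$, then invokes estimation stability. Your side remark is also valid: the integral bound with $a=0$ still yields the factor $1/\sqrt{\mu}$, so the hypothesis $\eta_n(\delta)<\mu$ is not actually needed for the final inequality.
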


\begin{proof}
Applying Weyl's inequality, we have, for every \(k=1,\ldots,d\),
\[
\lambda_k(G)-\lambda_k(\widehat G_n)
\le
\|G-\widehat G_n\|_{\op}.
\]
By Lemma~\ref{lem:empirical-fisher-concentration}, with probability at least
\(1-\delta\),
\[
\|G-\widehat G_n\|_{\op}
\le
\eta_n(\delta).
\]
Hence, on this event,
\[
\lambda_k(G)-\lambda_k(\widehat G_n)
\le
\eta_n(\delta),
\qquad k=1,\ldots,d.
\]
Thus
\[
\lambda_k(\widehat G_n)
\ge
\lambda_k(G)-\eta_n(\delta).
\]
Since \(G\succeq \mu I\), we have \(\lambda_k(G)\ge \mu\) for every \(k\). Therefore
\[
\lambda_k(\widehat G_n)
\ge
\mu-\eta_n(\delta),
\qquad k=1,\ldots,d.
\]
It follows that
\[
\widehat G_n
\succeq
\bigl(\mu-\eta_n(\delta)\bigr)I.
\]
Since \(\eta_n(\delta)<\mu\), the empirical Fisher matrix is positive definite.

Applying Lemma~\ref{lem:sqrt-perturbation-op} with
\[
A=G,
\qquad
B=\widehat G_n,
\qquad
a=\mu,
\qquad
b=\mu-\eta_n(\delta),
\]
we obtain
\[
\|\widehat G_n^{1/2}-G^{1/2}\|_{\op}
\le
\frac{\|\widehat G_n-G\|_{\op}}
{\sqrt{\mu}+\sqrt{\mu-\eta_n(\delta)}}
\le
\frac{\eta_n(\delta)}{\sqrt{\mu}}.
\]
Finally, Theorem~\ref{thm:estimation-stability} yields
\[
\bigl|
w_{\widehat G_n}(T)-w_G(T)
\bigr|
\le
\|\widehat G_n^{1/2}-G^{1/2}\|_{\op}
w(T)
\le
\frac{\eta_n(\delta)}{\sqrt{\mu}}\,w(T).
\]
\end{proof}

\begin{remark}[Structured Fisher approximations]
Theorem~\ref{thm:estimation-stability} applies equally to diagonal, block-diagonal, low-rank, and Kronecker-factored Fisher approximations. If \(\widetilde G\succeq0\) is any approximation of \(G(\theta_0)\), then
\[
|w_{\widetilde G}(T)-w_{G(\theta_0)}(T)|
\le
\|\widetilde G^{1/2}-G(\theta_0)^{1/2}\|_{\op}\,w(T).
\]
Thus the quality of the Fisher-width approximation is controlled directly by the operator-norm error in the square-root metric.
\end{remark}

\section{Curvature Corrections to Gaussian Width}
\label{sec:curvature}
Theorem~\ref{thm:estimation-stability} quantifies sensitivity to errors  in the metric representation $G(\theta_0)^{1/2}$, such as those arising  from empirical or structured Fisher approximations. It is an estimation 
result: it assumes the base point $\theta_0$ is fixed and measures how width changes as the metric changes. A complementary question is how Fisher width varies as $\theta_0$ itself moves across the statistical manifold - this is governed by curvature, and is the subject of 
this section.

The definition
\[
w_{G(\theta_0)}(T)
=
w\!\left(G(\theta_0)^{1/2}T\right)
\]
shows that Fisher width is a metric deformation of Gaussian width. Rather than measuring the Euclidean size of \(T\), it measures the size of \(T\) after the directions have been rescaled by the local Fisher information.

This is the first point where the Riemannian nature of the Fisher metric enters explicitly. Up to this point, Fisher width only uses the positive semidefinite matrix \(G(\theta_0)\) at a fixed parameter value. In contrast, the following argument uses how the metric tensor \(G(\theta)\) varies in a neighborhood of \(\theta_0\). Normal coordinates eliminate the zeroth- and first-order Euclidean
effects, so the leading nontrivial correction is second order and is controlled by curvature.

Throughout this section, we regard
\[
T\subset T_{\theta_0}\mathcal M
\]
as a compact set of tangent directions, represented in normal coordinates at \(\theta_0\). In these coordinates,
\[
G(\theta_0)=I,
\qquad
\nabla G(\theta_0)=0.
\]

\subsection{Local Metric Expansion}

Assume that, in a normal-coordinate neighborhood of \(\theta_0\), the sectional curvatures of the Fisher manifold are uniformly bounded in absolute value. That is, there exists a constant \(\kappa>0\) such that
\[
|K_\theta(\Pi)|\le \kappa
\]
for every point \(\theta\) in this neighborhood and every two-dimensional tangent plane \(\Pi\subset T_\theta\mathcal M\). In normal coordinates at \(\theta_0\), write
\[
G_\theta := G(\theta_0+\Delta\theta).
\]
Since
\[
G(\theta_0)=I,
\qquad
\nabla G(\theta_0)=0,
\]
the standard normal-coordinate expansion of the Riemannian metric \citep{doCarmo1992} gives
\[
G_\theta = I+\Delta G,
\qquad
\Delta G:=G_\theta-I, \text{ with }
\|\Delta G\|_{\op}
=
O\!\left(
\kappa \|\Delta\theta\|^2
\right).
\]
Thus the first nontrivial departure from Euclidean geometry appears only at second order in the local radius.

Expanding the matrix square root around the identity gives
\[
G_\theta^{1/2}
=
(I+\Delta G)^{1/2}
=
I+\frac12\Delta G+R,
\]
where, for \(\|\Delta G\|_{\op}\) sufficiently small,
\[
\|R\|_{\op}
\le
C\|\Delta G\|_{\op}^2.
\]

The next result quantifies the resulting deviation between Fisher width and Gaussian width.

\begin{proposition}[Curvature perturbation bound]
\label{prop:curvature-perturbation}
Let \(T\subset T_{\theta_0}\mathcal M\) be compact and represented in normal coordinates at \(\theta_0\). For a nearby point
\(
\theta=\theta_0+\Delta\theta,
\)
write
\(
G_\theta=G(\theta)=I+\Delta G.
\)
Assume
\[
\|\Delta G\|_{\op}\le c_0<\frac12.
\]
Then there exists a constant \(C>0\), depending only on \(c_0\), such that
\[
\bigl|
w_{G_\theta}(T)-w(T)
\bigr|
\le
\left(
\frac12\|\Delta G\|_{\op}
+
C\|\Delta G\|_{\op}^2
\right)w(T).
\]
In particular, if the absolute sectional curvature is bounded by \(\kappa\) in
the normal-coordinate neighborhood, then
\[
\bigl|
w_{G_\theta}(T)-w(T)
\bigr|
=
O\!\left(
\kappa\|\Delta\theta\|^2\,w(T)
\right).
\]
\end{proposition}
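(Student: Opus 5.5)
The plan is to reduce the statement to Theorem~\ref{thm:estimation-stability} with $G_1=G_\theta$ and $G_2=I$. Here $G(\theta_0)=I$ in normal coordinates, so $w_{G_2}(T)=w(T)$. That theorem gives
\[
\bigl|w_{G_\theta}(T)-w(T)\bigr|\le \bigl\|G_\theta^{1/2}-I\bigr\|_{\op}\,w(T).
\]
The whole problem therefore reduces to bounding the operator norm of $(I+\Delta G)^{1/2}-I$ by $\tfrac12\|\Delta G\|_{\op}+C\|\Delta G\|_{\op}^2$.

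For this matrix bound I will work spectrally. The matrix $\Delta G=G_\theta-I$ is symmetric, so $G_\theta^{1/2}-I=f(\Delta G)$ with $f(x)=\sqrt{1+x}-1$. The norm $\|f(\Delta G)\|_{\op}$ equals $\max_i|f(\lambda_i)|$ over the eigenvalues $\lambda_i\in[-c_0,c_0]$. Write $\sqrt{1+x}-1-\tfrac x2=R(x)$. By Taylor's theorem with Lagrange remainder, $|R(x)|=\tfrac18(1+\xi)^{-3/2}x^2\le \tfrac18(1-c_0)^{-3/2}x^2$. Since $c_0<\tfrac12$, this gives $C=\tfrac18(1-c_0)^{-3/2}\le 2^{-3/2}$, which depends only on $c_0$. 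Then
\[
|f(\lambda_i)|\le\tfrac12|\lambda_i|+C\lambda_i^2\le \tfrac12\|\Delta G\|_{\op}+C\|\Delta G\|_{\op}^2,
\]
so the same bound holds for $\|G_\theta^{1/2}-I\|_{\op}$. This also recovers the remainder estimate $\|R\|_{\op}\le C\|\Delta G\|_{\op}^2$ stated before the proposition, with an explicit constant.

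I could instead use the simpler exact identity $|\sqrt{1+x}-1|=|x|/(\sqrt{1+x}+1)$, which even gives a bound without the quadratic term. However, the Taylor form matches the structure of the statement, so I will present that.

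For the ``in particular'' clause, substitute the normal-coordinate expansion $\|\Delta G\|_{\op}=O(\kappa\|\Delta\theta\|^2)$. For $\|\Delta\theta\|$ small enough, the hypothesis $\|\Delta G\|_{\op}\le c_0$ holds. The quadratic term $C\|\Delta G\|_{\op}^2=O(\kappa^2\|\Delta\theta\|^4)$ is then dominated by the linear term, which yields $O(\kappa\|\Delta\theta\|^2 w(T))$.

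The main obstacle is this last step, not the matrix inequality. The $O(\kappa\|\Delta\theta\|^2)$ bound on $\Delta G$ is imported from the Riemannian normal-coordinate expansion $G_{ij}=\delta_{ij}-\tfrac13R_{ikjl}x^kx^l+O(|x|^3)$. Two points need care there. First, sectional-curvature bounds control the full curvature tensor entries up to dimensional constants. Second, the cubic remainder involves derivatives of curvature, so on a fixed small neighbourhood it must be absorbed into the implied constant. I will state the implied constant as depending on these local geometric data and simply cite the expansion, as the paper does.
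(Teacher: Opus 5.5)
Your proposal is correct and follows the paper's proof. Both apply Theorem~\ref{thm:estimation-stability} with $G_1=G_\theta$ and $G_2=I$, bound $\|(I+\Delta G)^{1/2}-I\|_{\op}$ by the first-order expansion with quadratic remainder, and substitute $\|\Delta G\|_{\op}=O(\kappa\|\Delta\theta\|^2)$. Your spectral Taylor argument makes explicit the remainder bound the paper only asserts, with $C=\tfrac18(1-c_0)^{-3/2}$.

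One small correction to your aside. Because $\Delta G$ may have negative eigenvalues, the exact identity gives the linear coefficient $1/(1+\sqrt{1-c_0})>\tfrac12$ rather than $\tfrac12$. So it does not remove the quadratic term from the stated form, though it does suffice for the $O(\kappa\|\Delta\theta\|^2)$ conclusion.
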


\begin{proof}
Since \(\|\Delta G\|_{\op}\le c_0<1/2\), the matrix square-root map admits the
first-order expansion around the identity
\[
(I+\Delta G)^{1/2}
=
I+\frac12\Delta G+R,
\qquad
\|R\|_{\op}\le C\|\Delta G\|_{\op}^2.
\]
Consequently,
\[
\|(I+\Delta G)^{1/2}-I\|_{\op}
\le
\frac12\|\Delta G\|_{\op}
+
C\|\Delta G\|_{\op}^2.
\]

Applying Theorem~\ref{thm:estimation-stability} with
\[
G_1=G_\theta=I+\Delta G,
\qquad
G_2=I,
\]
we obtain
\[
\bigl|
w_{G_\theta}(T)-w(T)
\bigr|
\le
\left(
\frac12\|\Delta G\|_{\op}
+
C\|\Delta G\|_{\op}^2
\right)w(T).
\]

Using the normal-coordinate curvature estimate stated above,
\[
\|\Delta G\|_{\op}
=
O\!\left(
\kappa\|\Delta\theta\|^2
\right),
\]
we conclude that
\[
\bigl|
w_{G_\theta}(T)-w(T)
\bigr|
=
O\!\left(
\kappa\|\Delta\theta\|^2\,w(T)
\right).
\]
\end{proof}

Proposition~\ref{prop:curvature-perturbation} shows that Fisher width and Gaussian width coincide to first order. The difference between the two quantities is controlled by the local curvature of the Fisher metric.

\subsection{Statistical model geometries}

The preceding analysis explains Fisher width as a local metric deformation of Gaussian width. We now illustrate this deformation in concrete statistical models, where the Fisher metric \(G(\theta)\) reflects noise, confidence, and sufficient-statistic covariance. Throughout, the base
point \(\theta_0\) is fixed and \(T=\rho B_2\) denotes a Euclidean ball of radius \(\rho\).

\begin{example}[Bernoulli family]
\label{ex:bernoulli}

Let
\[
p_\theta(x)
=
\theta^x(1-\theta)^{1-x},
\qquad
x\in\{0,1\},
\]
with parameter \(\theta\in(0,1)\). The Fisher information is
\[
G(\theta)
=
\frac{1}{\theta(1-\theta)}.
\]

In one dimension, \(T=\rho B_2^1=[-\rho,\rho]\). Hence
\[
w_{G(\theta_0)}(T)
=
w\!\left(
\frac{1}{\sqrt{\theta_0(1-\theta_0)}}[-\rho,\rho]
\right).
\]
Therefore
\[
w_{G(\theta_0)}(T)
=
\frac{\rho}{\sqrt{\theta_0(1-\theta_0)}}\,\mathbb E|g|
=
\rho\sqrt{\frac{2}{\pi\,\theta_0(1-\theta_0)}},
\]
where \(g\sim N(0,1)\).

Fisher width is minimized at
\[
\theta_0=\frac12,
\]
where the Bernoulli distribution has maximal entropy and
\[
w_{G(1/2)}(T)
=
\rho\sqrt{\frac8\pi}.
\]
By contrast, as \(\theta_0\to0^+\) or \(\theta_0\to1^-\), the Fisher information diverges and
\[
w_{G(\theta_0)}(T)\to\infty.
\]

Thus Fisher width detects the boundary singularity of the Bernoulli family: although the Euclidean perturbation set \(T=[-\rho,\rho]\) is fixed, its statistical size diverges as the base distribution approaches a deterministic
Bernoulli law.
\end{example}

\begin{example}[Gaussian location-scale family]
\label{ex:gaussian-location-scale}

Let
\[
X\sim N(\mu,\tau),
\qquad
\mu\in\mathbb R,
\qquad
\tau=\sigma^2>0,
\]
with parameter
\[
\theta=(\mu,\tau)\in\mathbb R\times(0,\infty).
\]
The log-likelihood is
\[
\ell(\mu,\tau;x)
=
-\frac12\log(2\pi\tau)
-
\frac{(x-\mu)^2}{2\tau}.
\]
A direct computation gives
\[
G(\mu,\tau)
=
\begin{pmatrix}
\dfrac{1}{\tau} & 0\\
0 & \dfrac{1}{2\tau^2}
\end{pmatrix}.
\]
Thus the location and variance directions have different statistical scales:
for \(v=(v_\mu,v_\tau)\),
\[
\|v\|_{G(\mu,\tau)}^2
=
\frac{1}{\tau}v_\mu^2
+
\frac{1}{2\tau^2}v_\tau^2.
\]

Now take
\[
T=\rho B_2^2\subset\mathbb R^2.
\]
Since
\[
G(\mu,\tau)^{1/2}
=
\begin{pmatrix}
\tau^{-1/2} & 0\\
0 & (\sqrt{2}\tau)^{-1}
\end{pmatrix},
\]
we obtain
\[
w_{G(\mu,\tau)}(T)
=
\rho\,\mathbb E
\|G(\mu,\tau)^{1/2}g\|_2,
\qquad
g\sim N(0,I_2).
\]
Equivalently,
\[
w_{G(\mu,\tau)}(T)
=
\rho\,
\mathbb E
\left[
\left(
\frac{1}{\tau}g_1^2
+
\frac{1}{2\tau^2}g_2^2
\right)^{1/2}
\right],
\]
where \(g=(g_1,g_2)\sim N(0,I_2)\).

The spectral sandwich bound gives
\[
\sqrt{
\min\left\{
\frac{1}{\tau},
\frac{1}{2\tau^2}
\right\}
}
\,w(T)
\le
w_{G(\mu,\tau)}(T)
\le
\sqrt{
\max\left\{
\frac{1}{\tau},
\frac{1}{2\tau^2}
\right\}
}
\,w(T).
\]

Thus Fisher width is not merely the Euclidean size of the perturbation set.
Although \(T=\rho B_2^2\) is a round Euclidean ball in the \((\mu,\tau)\)-space,
the Fisher metric transforms it into an anisotropic ellipsoid. The mean
direction is rescaled by \(\tau^{-1/2}\), while the variance direction is
rescaled by \((\sqrt{2}\tau)^{-1}\). These scales vary with the noise level
\(\tau\).
\end{example}

\begin{example}[Logistic regression]
\label{ex:logistic}

Let \(X\in\mathbb R^d\) be a covariate with distribution \(P_X\), and let
\(Y\in\{0,1\}\) satisfy the conditional Bernoulli model
\[
\mathbb P_\theta(Y=1\mid X=x)
=
\sigma(\theta^\top x),
\qquad
\sigma(t)
=
\frac{1}{1+e^{-t}}.
\]
Equivalently,
\[
\mathbb P_\theta(Y=y\mid X=x)
=
\sigma(\theta^\top x)^y
\bigl(1-\sigma(\theta^\top x)\bigr)^{1-y},
\qquad y\in\{0,1\}.
\]

The log-likelihood for one observation $(X, Y)$ is
\[
\log \mathbb{P}_\theta(Y \mid X)
= Y \theta^\top X - \log(1 + e^{\theta^\top X}),
\]
and differentiating with respect to $\theta$ gives the score
\[
\nabla_\theta \log \mathbb{P}_\theta(Y \mid X)
= \bigl(Y - \sigma(\theta^\top X)\bigr)X.
\]

Therefore the Fisher information matrix is
\[
G(\theta)
=
\mathbb E_{X\sim P_X}
\left[
\sigma(\theta^\top X)
\bigl(1-\sigma(\theta^\top X)\bigr)
XX^\top
\right].
\]

The scalar factor
\[
\sigma(\theta^\top X)
\bigl(1-\sigma(\theta^\top X)\bigr)
\]
is the conditional variance of \(Y\mid X\). It is maximized at
\[
\theta^\top X=0,
\]
where it equals \(1/4\), and decreases to zero as
\[
|\theta^\top X|\to\infty.
\]

Suppose that, along a sequence of parameters \(\theta_m\),
\[
|\theta_m^\top X|\to\infty
\qquad
P_X\text{-almost surely},
\]
and assume
\[
\mathbb E\|X\|_2^2<\infty.
\]
Then
\[
\sigma(\theta_m^\top X)
\bigl(1-\sigma(\theta_m^\top X)\bigr)
XX^\top
\to 0
\qquad
P_X\text{-almost surely}.
\]
Moreover,
\[
0
\preceq
\sigma(\theta_m^\top X)
\bigl(1-\sigma(\theta_m^\top X)\bigr)
XX^\top
\preceq
\frac14 XX^\top.
\]
By dominated convergence,
\[
G(\theta_m)\to 0.
\]
In particular,
\[
\lambda_{\max}(G(\theta_m))\to0.
\]

The spectral sandwich bound gives
\[
0
\le
w_{G(\theta_m)}(T)
\le
\sqrt{\lambda_{\max}(G(\theta_m))}\,w(T)
\longrightarrow
0.
\]

Thus Fisher width vanishes as as $\|\theta_m^\top X\| \to \infty$. In this regime, the model assigns probabilities close to \(0\) or \(1\), the Bernoulli variance collapses, and the Fisher metric contracts.

This example suggests a connection between Fisher width and model confidence. The experiments in Section~\ref{sec:experiments} examine related Fisher-width
behavior in trained logistic, softmax, and ridge models.
\end{example}
\paragraph{Summary.}

These examples illustrate three qualitatively distinct Fisher geometries. The Bernoulli family exhibits a singular geometry near the boundary of the parameter space, where Fisher width diverges. The Gaussian location-scale family shows that even a simple Euclidean perturbation set can be distorted anisotropically: mean and variance directions are measured on different statistical scales, and these scales vary with the noise level. Logistic regression exhibits a data-dependent geometry whose local Fisher structure contracts as predictions become more confident.

Together, these examples show that Fisher width captures geometric structure that classical Gaussian width treats uniformly: boundary singularities in Bernoulli models, anisotropic noise-dependent scaling in Gaussian location-scale models, and confidence-driven contraction in logistic
regression.

\section{Generalization Bound}
\label{sec:generalization}
\subsection{Setup and assumptions}
\label{subsec:gen_setup}

Let \(T\subset\mathbb R^d\) be a compact parameter set, and let
\(\ell(\theta;z)\) be a loss function indexed by \(\theta\in T\), where
\(z=(x,y)\). We assume that
\[
0\in T,
\qquad
\ell(0;z)=0
\]
for all \(z\). Define the population and empirical risks by
\[
R(\theta)
=
\mathbb E_{z\sim\mathcal D}[\ell(\theta;z)],
\qquad
\widehat R_n(\theta)
=
\frac1n\sum_{i=1}^n \ell(\theta;z_i),
\]
where \(z_1,\ldots,z_n\) are independent samples from \(\mathcal D\).

\begin{remark}[Centering convention]
\label{rem:centering}
The assumptions \(0\in T\) and \(\ell(0;z)=0\) are a centering convention for the loss class. They ensure that the complexity term depends only on the increment geometry of the class. Without such a convention, a singleton class
could have \(T-T=\{0\}\) and hence zero width, while its loss may still depend on \(z\) and exhibit a nonzero generalization gap. Equivalently, in the general
case one may replace \(\ell(\theta;z)\) by \(\ell(\theta;z)-\ell(\theta_{\rm ref};z)\) for a fixed reference point \(\theta_{\rm ref}\in T\).
\end{remark}

\begin{assumption}[Boundedness]
\label{ass:bounded}
There exists \(B>0\) such that
\(
0 \le \ell(\theta;z) \le B
\)
for all \(\theta\in T\) and all samples \(z\).
\end{assumption}

\begin{assumption}[Base-point Fisher-Lipschitzness]
\label{ass:fisher-lipschitz}
Let \(\theta_0\in\Theta\) be fixed and write
\[
G_0:=G(\theta_0).
\]
There exists \(L>0\) such that, for every sample \(z\) and every
\(\theta_1,\theta_2\in T\),
\[
|\ell(\theta_1;z)-\ell(\theta_2;z)|
\le
L\|\theta_1-\theta_2\|_{G_0},
\]
where
\[
\|h\|_{G_0}
=
\sqrt{h^\top G_0h}.
\]
\end{assumption}

\begin{remark}[Interpretation]
Base-point Fisher-Lipschitzness is not a global Lipschitz condition on the statistical manifold. It is a local regularity condition measured in the tangent geometry induced by the Fisher metric at the reference point \(\theta_0\). When the Fisher metric varies slowly over \(T\), this provides a natural analogue of Euclidean Lipschitzness adapted to statistical distinguishability.
\end{remark}

The base-point Fisher-Lipschitz condition in
Assumption~\ref{ass:fisher-lipschitz} is stated using the fixed metric
\(G_0=G(\theta_0)\). In applications, however, gradient bounds are often more
naturally expressed in the local Fisher metric \(G(\theta)\), which varies with
the parameter. The following proposition shows that such a local bound implies
base-point Fisher-Lipschitzness whenever the metric does not vary too much over
the parameter set \(T\).

\begin{proposition}[Metric comparability implies base-point Fisher-Lipschitzness]
\label{prop:metric-comparability-lipschitz}
Let \(T\subset\mathbb R^d\) be convex and let \(G_0=G(\theta_0)\succ0\).
Suppose that, for some \(\varepsilon\in(0,1)\),
\[
G(\theta)
\preceq
(1+\varepsilon)G_0
\]
for all \(\theta\in T\). Assume also that, for every sample \(z\) and every
\(\theta\in T\),
\[
\|\nabla_\theta\ell(\theta;z)\|_{G(\theta)^{-1}}
\le L.
\]
Then, for every sample \(z\) and every \(\theta_1,\theta_2\in T\),
\[
|\ell(\theta_1;z)-\ell(\theta_2;z)|
\le
L\sqrt{1+\varepsilon}
\|\theta_1-\theta_2\|_{G_0}.
\]
In particular, Assumption~\ref{ass:fisher-lipschitz} holds with constant
\(L\sqrt{1+\varepsilon}\).
\end{proposition}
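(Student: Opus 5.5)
The plan is to combine the mean value theorem along the segment joining $\theta_1$ and $\theta_2$, which stays in $T$ by convexity, with a generalized Cauchy--Schwarz inequality in the dual pair $(G(\theta),G(\theta)^{-1})$. The metric comparability hypothesis then lets us replace $\|\cdot\|_{G(\theta)}$ by $\|\cdot\|_{G_0}$ at the cost of the factor $\sqrt{1+\varepsilon}$. We implicitly assume $\ell(\cdot;z)$ is differentiable on (a neighborhood of) $T$, which the gradient hypothesis presupposes, and $G(\theta)\succ0$ as assumed throughout.

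First, fix $z$ and $\theta_1,\theta_2\in T$, set $h=\theta_1-\theta_2$ and $\gamma(t)=\theta_2+th$ for $t\in[0,1]$. By convexity $\gamma(t)\in T$. Applying the fundamental theorem of calculus (or the mean value theorem) to $t\mapsto \ell(\gamma(t);z)$ gives
\[
|\ell(\theta_1;z)-\ell(\theta_2;z)|
\le
\sup_{t\in[0,1]}\bigl|\langle \nabla_\theta\ell(\gamma(t);z),h\rangle\bigr|.
\]

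Second, for each $\theta=\gamma(t)$, write $\langle \nabla\ell,h\rangle=\langle G(\theta)^{-1/2}\nabla\ell,\,G(\theta)^{1/2}h\rangle$. Cauchy--Schwarz then yields
\[
|\langle\nabla\ell,h\rangle|\le \|\nabla\ell\|_{G(\theta)^{-1}}\|h\|_{G(\theta)}\le L\|h\|_{G(\theta)}.
\]

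Third, from $G(\theta)\preceq(1+\varepsilon)G_0$ we get
\[
\|h\|_{G(\theta)}^2=h^\top G(\theta)h\le(1+\varepsilon)h^\top G_0h,
\]
so $\|h\|_{G(\theta)}\le\sqrt{1+\varepsilon}\,\|h\|_{G_0}$ uniformly in $t$. Combining the three steps gives the claimed bound, and hence Assumption~\ref{ass:fisher-lipschitz} holds with constant $L\sqrt{1+\varepsilon}$.

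The main point requiring care is the first step. The path must remain in $T$, where the gradient bound is assumed, and this is exactly where convexity enters. The direction of the Loewner inequality is also essential: only the upper bound $G(\theta)\preceq(1+\varepsilon)G_0$ is needed, because the gradient is measured in the dual norm $G(\theta)^{-1}$, which is absorbed by pairing it with $\|h\|_{G(\theta)}$ rather than converting the dual norm itself. The restriction $\varepsilon<1$ plays no role in this argument.
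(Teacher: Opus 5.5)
Your proof is correct and follows essentially the same route as the paper. Both arguments restrict to the segment in $T$ (by convexity), apply the fundamental theorem of calculus, use Cauchy--Schwarz in the dual pair $(G(\theta),G(\theta)^{-1})$, and then convert $\|h\|_{G(\theta)}$ to $\sqrt{1+\varepsilon}\,\|h\|_{G_0}$ via the Loewner bound; the only cosmetic difference is that you bound by a supremum over $t$ where the paper integrates over $t$.
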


\begin{proof}
Fix \(z\) and let \(\theta_1,\theta_2\in T\). Define
\[
\gamma_t=(1-t)\theta_1+t\theta_2,
\qquad t\in[0,1].
\]
Since \(T\) is convex, \(\gamma_t\in T\) for all \(t\in[0,1]\). By the
fundamental theorem of calculus,
\[
\ell(\theta_2;z)-\ell(\theta_1;z)
=
\int_0^1
\left\langle
\nabla_\theta\ell(\gamma_t;z),
\theta_2-\theta_1
\right\rangle
\,dt.
\]
Applying Cauchy--Schwarz in the Fisher metric \(G(\gamma_t)\), we obtain
\[
|\ell(\theta_2;z)-\ell(\theta_1;z)|
\le
\int_0^1
\|\nabla_\theta\ell(\gamma_t;z)\|_{G(\gamma_t)^{-1}}
\|\theta_2-\theta_1\|_{G(\gamma_t)}
\,dt.
\]
By assumption,
\[
\|\nabla_\theta\ell(\gamma_t;z)\|_{G(\gamma_t)^{-1}}
\le L.
\]
Moreover, since
\[
G(\gamma_t)\preceq(1+\varepsilon)G_0,
\]
we have
\[
\|\theta_2-\theta_1\|_{G(\gamma_t)}
\le
\sqrt{1+\varepsilon}
\|\theta_2-\theta_1\|_{G_0}.
\]
Therefore
\[
|\ell(\theta_2;z)-\ell(\theta_1;z)|
\le
L\sqrt{1+\varepsilon}
\|\theta_2-\theta_1\|_{G_0}.
\]
\end{proof}

The Fisher-Lipschitz assumption is model-dependent, since it requires a uniform bound on the dual Fisher norm of the loss gradients. For the classification models used in the experiments, this condition follows from standard Fisher-leverage bounds. In particular, logistic regression and
multiclass softmax regression satisfy Assumption~\ref{ass:fisher-lipschitz}
under bounded-feature and Fisher nondegeneracy conditions; the precise statements and proofs are given in Appendix~\ref{app:fisher-lipschitz}.

\subsection{Main theorem}
\label{subsec:gen_theorem}
\begin{theorem}[Local Fisher-width generalization bound]
\label{thm:generalization}
Let
\[
G_0:=G(\theta_0).
\]
Under Assumptions~\ref{ass:bounded} and~\ref{ass:fisher-lipschitz}, for every
\(\delta\in(0,1)\), with probability at least \(1-\delta\),
\[
\sup_{\theta\in T}
\bigl|
R(\theta)-\widehat R_n(\theta)
\bigr|
\le
\frac{
C L\, w_{G_0}(T-T)
}{
\sqrt n
}
+
B
\sqrt{
\frac{\log(1/\delta)}{2n}
}.
\]
\end{theorem}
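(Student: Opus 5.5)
The argument follows the standard route: a bounded-differences concentration step, then symmetrization, then Gaussian comparison. The Fisher geometry enters only through the comparison step, via Assumption~\ref{ass:fisher-lipschitz} and the lifting identity of Proposition~\ref{prop:lifting-identity}.

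Let
\[
\Phi(z_1,\ldots,z_n)=\sup_{\theta\in T}\bigl|R(\theta)-\widehat R_n(\theta)\bigr|.
\]
By Assumption~\ref{ass:bounded}, replacing one sample $z_i$ changes $\Phi$ by at most $B/n$. McDiarmid's inequality then gives, with probability at least $1-\delta$,
\[
\Phi\le \mathbb E\Phi+B\sqrt{\log(1/\delta)/(2n)}.
\]
This is exactly the second term of the bound.

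It remains to show $\mathbb E\Phi\le CL\,w_{G_0}(T-T)/\sqrt n$.

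\emph{Symmetrization.} Standard symmetrization gives
\[
\mathbb E\Phi\le 2\,\mathbb E\sup_{\theta\in T}\Bigl|\frac1n\sum_i\varepsilon_i\ell(\theta;z_i)\Bigr|.
\]
Next pass from Rademacher to Gaussian multipliers, $\mathbb E\sup|\sum\varepsilon_i a_i(\theta)|\le\sqrt{\pi/2}\,\mathbb E\sup|\sum g_i a_i(\theta)|$. This reduces matters to a conditionally Gaussian process: fix the sample $z_{1:n}$ and set
\[
X_\theta=\frac{1}{\sqrt n}\sum_{i=1}^n g_i\,\ell(\theta;z_i).
\]

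\emph{Removing the absolute value.} Because $0\in T$ and $\ell(0;z)=0$ (Remark~\ref{rem:centering}), we have $X_0=0$. Hence
\[
\sup_\theta|X_\theta|\le \sup_{\theta,\theta'\in T}(X_\theta-X_{\theta'}),
\]
and by symmetry of $X$,
\[
\mathbb E\sup_{\theta,\theta'}(X_\theta-X_{\theta'})\le 2\,\mathbb E\sup_\theta X_\theta.
\]
Moreover $\sup_\theta X_\theta\ge X_0=0$.

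\emph{Gaussian comparison.} Conditionally on the sample, Assumption~\ref{ass:fisher-lipschitz} gives
\[
\mathbb E|X_\theta-X_{\theta'}|^2=\frac1n\sum_i\bigl(\ell(\theta;z_i)-\ell(\theta';z_i)\bigr)^2\le L^2\|\theta-\theta'\|_{G_0}^2 .
\]
This right-hand side equals $\mathbb E|Y_\theta-Y_{\theta'}|^2$ for the canonical process $Y_\theta=L\langle g,G_0^{1/2}\theta\rangle$ with $g\sim\mathcal N(0,I_d)$. The Sudakov--Fernique inequality, used as in Lemma~\ref{lem:spectral-bounds} and Theorem~\ref{thm:estimation-stability}, then yields
\[
\mathbb E\sup_\theta X_\theta\le L\,\mathbb E\sup_{\theta\in T}\langle g,G_0^{1/2}\theta\rangle=L\,w_{G_0}(T).
\]
Finally, since $0\in T$ we have $T\subseteq T-T$, so monotonicity (Lemma~\ref{lem:mono-homog}) gives $w_{G_0}(T)\le w_{G_0}(T-T)$.

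Take the expectation over the sample and collect the factors $2$, $\sqrt{\pi/2}$, $2$ and $1/\sqrt n$ into $C$. This gives $\mathbb E\Phi\le CL\,w_{G_0}(T-T)/\sqrt n$, which proves the theorem.

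The main obstacle is the comparison step. The contraction from the loss process to the linear Gaussian process has to be done without assuming the loss is a Lipschitz function of a linear form $\langle\theta,x_i\rangle$, because the Talagrand contraction lemma is unavailable in that generality. The way around this is to apply Sudakov--Fernique directly to the Gaussianized process, using only the increment bound from Fisher-Lipschitzness; this is why Gaussian rather than Rademacher multipliers are used. Two details need checking: that Sudakov--Fernique may be applied conditionally on the sample, and that the process is separable, which follows from compactness of $T$ and continuity of $\ell$ in $\theta$ (a consequence of the Lipschitz assumption).
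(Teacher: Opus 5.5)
Your proof is correct, and it reaches the bound by a genuinely different route from the paper in the comparison step.

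\textbf{The paper's route.} The paper keeps the Rademacher process $\sum_i\sigma_i\ell(\theta;z_i)$ and shows that its increments are subgaussian with respect to $L\sqrt n\,\|\cdot\|_{G_0}$. It then applies the generic chaining bound to get $CL\sqrt n\,\gamma_2(T,\|\cdot\|_{G_0})$ and rewrites this as $\gamma_2(G_0^{1/2}T,\|\cdot\|_2)$. Finally it invokes Talagrand's majorizing measure theorem in its hard direction, $\gamma_2\le C\,w$, to return to Fisher width. The absolute value is handled by rerunning the same bound for $-X_\theta$.

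\textbf{Your route.} You pass from Rademacher to Gaussian multipliers at a cost of $\sqrt{\pi/2}$. You then compare the conditionally Gaussian process directly with the canonical process $L\langle g,G_0^{1/2}\theta\rangle$ via Sudakov--Fernique, the same tool the paper uses in Lemma~\ref{lem:spectral-bounds} and Theorem~\ref{thm:estimation-stability}.

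\textbf{What each buys.} Your argument is more elementary, since it needs neither chaining nor majorizing measures. It gives the explicit constant $C=2\sqrt{2\pi}$. It also yields the slightly stronger bound with $w_{G_0}(T)$ in place of $w_{G_0}(T-T)$; the latter then follows from $T\subseteq T-T$. The paper's chaining argument needs only subgaussian increment control rather than Gaussian comparison, so it would carry over to non-Gaussian processes without a Gaussianization step. That generality is not needed here, and it comes at the cost of an unspecified universal constant.

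You also make explicit the role of the centering $X_0=0$. The paper relies on it only implicitly, both in its inequality $\sup_\theta|X_\theta|\le\sup_\theta X_\theta+\sup_\theta(-X_\theta)$ and in passing from the chaining bound on $\sup_\theta(X_\theta-X_0)$ to one on $\sup_\theta X_\theta$.
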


\begin{proof}
The proof proceeds in four steps.

\medskip
\noindent\textbf{Step 1: Symmetrization.}
By the standard symmetrization lemma,
\[
\mathbb E
\sup_{\theta\in T}
\bigl|
R(\theta)-\widehat R_n(\theta)
\bigr|
\le
\frac{2}{n}
\mathbb E_{\boldsymbol\sigma,\mathbf z}
\sup_{\theta\in T}
\left|
\sum_{i=1}^n
\sigma_i\ell(\theta;z_i)
\right|,
\]
where
\[
\sigma_i\overset{\mathrm{i.i.d.}}{\sim}\operatorname{Unif}\{-1,+1\}
\]
are independent Rademacher variables and \(z_i=(x_i,y_i)\).

For fixed data, define the Rademacher process
\[
X_\theta
=
\sum_{i=1}^n
\sigma_i\ell(\theta;z_i),
\qquad
\theta\in T.
\]
We first control
\[
\mathbb E_{\boldsymbol\sigma}
\sup_{\theta\in T}X_\theta.
\]

\medskip
\noindent\textbf{Step 2: Subgaussian increments in the Fisher metric.}
For fixed data and for \(\theta_1,\theta_2\in T\),
\[
X_{\theta_1}-X_{\theta_2}
=
\sum_{i=1}^n
\sigma_i
\bigl(
\ell(\theta_1;z_i)-\ell(\theta_2;z_i)
\bigr).
\]
By the standard subgaussian bound for Rademacher sums,
\[
\|X_{\theta_1}-X_{\theta_2}\|_{\psi_2}
\le
C
\left(
\sum_{i=1}^n
|\ell(\theta_1;z_i)-\ell(\theta_2;z_i)|^2
\right)^{1/2}.
\]
Using Assumption~\ref{ass:fisher-lipschitz}, we obtain
\[
\|X_{\theta_1}-X_{\theta_2}\|_{\psi_2}
\le
CL\sqrt n\,
\|\theta_1-\theta_2\|_{G_0}.
\]
Thus \((X_\theta)_{\theta\in T}\) is a subgaussian process with respect to the
metric
\[
d(\theta_1,\theta_2)
=
L\sqrt n\,
\|\theta_1-\theta_2\|_{G_0}.
\]

\medskip
\noindent\textbf{Step 3: Chaining and Fisher width.}
By the generic chaining bound for subgaussian processes,
\[
\mathbb E_{\boldsymbol\sigma}
\sup_{\theta\in T}X_\theta
\le
CL\sqrt n\,
\gamma_2(T,\|\cdot\|_{G_0}).
\]
Since
\[
\|\theta_1-\theta_2\|_{G_0}
=
\|G_0^{1/2}\theta_1-G_0^{1/2}\theta_2\|_2,
\]
we identify \(T\) with the Fisher-rescaled set
\[
\widetilde T:=G_0^{1/2}T.
\]
Hence
\[
\gamma_2(T,\|\cdot\|_{G_0})
=
\gamma_2(\widetilde T,\|\cdot\|_2).
\]
By Talagrand's majorizing measure theorem \citep{talagrand2014},
\[
\gamma_2(\widetilde T,\|\cdot\|_2)
\le
C\,w(\widetilde T).
\]
Since \(0\in T\), we have
\[
T\subset T-T,
\]
and therefore
\[
\widetilde T
=
G_0^{1/2}T
\subset
G_0^{1/2}(T-T).
\]
Thus
\[
w(\widetilde T)
\le
w\bigl(G_0^{1/2}(T-T)\bigr)
=
w_{G_0}(T-T).
\]
Consequently,
\[
\mathbb E_{\boldsymbol\sigma}
\sup_{\theta\in T}X_\theta
\le
CL\sqrt n\,
w_{G_0}(T-T).
\]

We now apply the same argument to the process
\((-X_\theta)_{\theta\in T}\). For every
\(\theta_1,\theta_2\in T\),
\[
(-X_{\theta_1})-(-X_{\theta_2})
=
-(X_{\theta_1}-X_{\theta_2}),
\]
and hence
\[
|(-X_{\theta_1})-(-X_{\theta_2})|
=
|X_{\theta_1}-X_{\theta_2}|.
\]
Thus \((-X_\theta)_{\theta\in T}\) has the same increment size as
\((X_\theta)_{\theta\in T}\), so the same chaining bound gives
\[
\mathbb E_{\boldsymbol\sigma}
\sup_{\theta\in T}(-X_\theta)
\le
CL\sqrt n\,
w_{G_0}(T-T).
\]
Since
\[
\sup_{\theta\in T}|X_\theta|
\le
\sup_{\theta\in T}X_\theta
+
\sup_{\theta\in T}(-X_\theta),
\]
we conclude that
\[
\mathbb E_{\boldsymbol\sigma}
\sup_{\theta\in T}|X_\theta|
\le
CL\sqrt n\,
w_{G_0}(T-T).
\]
Combining this estimate with Step 1 yields
\[
\mathbb E
\sup_{\theta\in T}
\bigl|
R(\theta)-\widehat R_n(\theta)
\bigr|
\le
\frac{
CL\,w_{G_0}(T-T)
}{
\sqrt n
}.
\]

\medskip
\noindent\textbf{Step 4: Concentration.}
Define
\[
f(z_1,\ldots,z_n)
=
\sup_{\theta\in T}
\bigl|
R(\theta)-\widehat R_n(\theta)
\bigr|.
\]
Changing one sample \(z_i\) changes \(\widehat R_n(\theta)\) by at most
\(B/n\), uniformly over \(\theta\), because \(\ell(\theta;z)\in[0,B]\).
Hence \(f\) satisfies the bounded differences condition with constants
\[
c_i=\frac{B}{n}.
\]
By McDiarmid's inequality,
\[
\mathbb P
\left(
f-\mathbb Ef\ge t
\right)
\le
\exp\left(
-\frac{2nt^2}{B^2}
\right).
\]
Taking
\[
t=
B\sqrt{\frac{\log(1/\delta)}{2n}}
\]
and adding this concentration term to the expectation bound proves the claim.
\end{proof}

\begin{remark}
The appearance of the difference set
\[
T-T
=
\{u-v:\;u,v\in T\}
\]
is standard in empirical-process theory and avoids
symmetry assumptions on the parameter class. If $T$ is convex, symmetric, and contains the origin,
then
\[
T-T \subset 2T,
\]
and therefore
\[
w_G(T-T;\theta_0)
\le
2\,w_G(T;\theta_0).
\]
\end{remark}

\subsection{Tightness for Exponential Families}
\label{subsec:tightness}

Theorem~\ref{thm:generalization} gives an upper bound controlled by Fisher
width. We now show that this scale is sharp in a canonical setting:
exponential families with negative log-likelihood loss. The result should be
understood as a sharpness example rather than a matching lower bound for all
Fisher-Lipschitz losses.

Let
\[
  p_\theta(z)
  = h(z)\exp\bigl(\theta^\top\phi(z)-A(\theta)\bigr),
  \qquad \theta\in\Theta\subseteq\mathbb{R}^d,
\]
be a regular exponential family, and fix a base point $\theta_0\in\Theta$.
Write $G_0 = G(\theta_0) = \operatorname{Cov}_{p_{\theta_0}}(\phi(Z))$ and
assume $G_0\succ 0$. For $u\in\mathbb{R}^d$, define the local perturbation
loss
\[
  \widetilde\ell(u;z)
  = -\log p_{\theta_0+u}(z) + \log p_{\theta_0}(z),
\]
and assume $\theta_0+u\in\Theta$ for all $u\in T$. We take $T = rB_2^d$.

\begin{lemma}[Exact gap for exponential-family NLL]
  \label{lem:exact-gap-expfam}
  Let $Z_1,\ldots,Z_n\overset{\mathrm{i.i.d.}}{\sim}p_{\theta_0}$, and define
  \[
    \overline\phi_n = \frac{1}{n}\sum_{i=1}^n\phi(Z_i),
    \qquad
    \Delta_n = \overline\phi_n - \mathbb{E}_{p_{\theta_0}}\phi(Z).
  \]
  Then
  \[
    \sup_{u\in rB_2^d}
    \bigl|R(u) - \widehat{R}_n(u)\bigr|
    = r\|\Delta_n\|_2,
  \]
  where $R$ and $\widehat{R}_n$ are the population and empirical risks
  associated with $\widetilde\ell(u;z)$.
  The proof is given in Appendix~\ref{app:tightness}.
\end{lemma}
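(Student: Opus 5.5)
The plan is to write the local perturbation loss explicitly and observe that, for an exponential family, the loss is affine in the sufficient statistic. This makes the uniform deviation a supremum of a linear functional over a Euclidean ball.

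\textbf{Step 1: expand the loss.} Using $\log p_\theta(z)=\log h(z)+\theta^\top\phi(z)-A(\theta)$, the $\log h(z)$ terms cancel and
\[
\widetilde\ell(u;z)=-(\theta_0+u)^\top\phi(z)+A(\theta_0+u)+\theta_0^\top\phi(z)-A(\theta_0)
=-u^\top\phi(z)+A(\theta_0+u)-A(\theta_0).
\]
The only $z$-dependence is through the linear term $-u^\top\phi(z)$.

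\textbf{Step 2: compute the gap.} Let $\mu_0=\mathbb E_{p_{\theta_0}}\phi(Z)$, which is finite by regularity of the family since $\theta_0$ lies in $\Theta$. Then
\[
R(u)=-u^\top\mu_0+A(\theta_0+u)-A(\theta_0)
\quad\text{and}\quad
\widehat R_n(u)=-u^\top\overline\phi_n+A(\theta_0+u)-A(\theta_0).
\]
The deterministic cumulant terms cancel in the difference, leaving
\[
R(u)-\widehat R_n(u)=u^\top(\overline\phi_n-\mu_0)=\langle u,\Delta_n\rangle.
\]
The assumption $\theta_0+u\in\Theta$ for all $u\in T$ is what guarantees $A(\theta_0+u)$ is finite, so the cancellation is legitimate.

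\textbf{Step 3: take the supremum.} By Cauchy--Schwarz,
\[
\sup_{\|u\|_2\le r}|\langle u,\Delta_n\rangle|\le r\|\Delta_n\|_2.
\]
Equality holds at $u=r\Delta_n/\|\Delta_n\|_2$ when $\Delta_n\neq0$. When $\Delta_n=0$, both sides are zero.

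The proof has no real obstacle. The only care needed is bookkeeping in the regularity assumptions: the mean $\mu_0$ must exist (the mean parameter is $\nabla A(\theta_0)$), and the risks must be finite on all of $T$. Both follow from $\theta_0$ lying in $\Theta$, from $\theta_0+rB_2^d\subset\Theta$, and from the bounded linear dependence on $\phi$.
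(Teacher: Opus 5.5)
Your proposal is correct and follows the paper's proof essentially line by line: you write $\widetilde\ell(u;z)=A(\theta_0+u)-A(\theta_0)-u^\top\phi(z)$, cancel the deterministic cumulant term to get $R(u)-\widehat R_n(u)=u^\top\Delta_n$, and evaluate $\sup_{\|u\|_2\le r}|u^\top\Delta_n|=r\|\Delta_n\|_2$. Your explicit equality case and your check that $\mathbb{E}_{p_{\theta_0}}\phi(Z)$ and $A(\theta_0+u)$ are finite are slightly more careful than the paper, which takes these for granted.
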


\begin{theorem}[Sharp Fisher-width scale for exponential families]
  \label{thm:tightness-expfam}
  Assume $\mathbb{E}_{p_{\theta_0}}\|\phi(Z)\|_2^2 < \infty$. Then
  \[
    \lim_{n\to\infty}
    \sqrt{n}\;
    \mathbb{E}\!\left[
      \sup_{u\in rB_2^d}
      \bigl|R(u) - \widehat{R}_n(u)\bigr|
    \right]
    = w_{G_0}(rB_2^d).
  \]
  Consequently, for every $\varepsilon\in(0,1)$ there exists $N_\varepsilon$
  such that for all $n\ge N_\varepsilon$,
  \[
    \mathbb{E}\!\left[
      \sup_{u\in rB_2^d}
      \bigl|R(u) - \widehat{R}_n(u)\bigr|
    \right]
    \ge \frac{(1-\varepsilon)\,w_{G_0}(rB_2^d)}{\sqrt{n}}.
  \]
  The proof is given in Appendix~\ref{app:tightness}.
\end{theorem}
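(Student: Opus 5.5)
The plan is to reduce the statement to a central limit theorem for the sample mean of the sufficient statistic, using the exact gap identity of Lemma~\ref{lem:exact-gap-expfam}. That lemma gives
\[
\sup_{u\in rB_2^d}|R(u)-\widehat R_n(u)| = r\|\Delta_n\|_2,
\]
so the quantity of interest is exactly $r\sqrt n\,\mathbb E\|\Delta_n\|_2$.

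On the other side, Example~\ref{ex:euclidean-ball} (the Cauchy--Schwarz equality case) gives the exact formula $w_{G_0}(rB_2^d)=r\,\mathbb E\|G_0^{1/2}g\|_2$. For $g\sim\mathcal N(0,I_d)$ the vector $G_0^{1/2}g$ is distributed as $\mathcal N(0,G_0)$. It therefore suffices to show
\[
\mathbb E\|\sqrt n\,\Delta_n\|_2\to\mathbb E\|Y\|_2,\qquad Y\sim\mathcal N(0,G_0).
\]

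The proof of this limit has two steps.

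\textbf{Distributional convergence.} The vectors $\phi(Z_i)$ are i.i.d. with finite second moment, mean $\mathbb E_{p_{\theta_0}}\phi(Z)$, and covariance $\operatorname{Cov}_{p_{\theta_0}}(\phi(Z))=G_0$. The multivariate CLT gives $\sqrt n\,\Delta_n\Rightarrow Y$. The continuous mapping theorem applied to the norm then gives $\|\sqrt n\Delta_n\|_2\Rightarrow\|Y\|_2$.

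\textbf{Convergence of expectations.} This is the only nonroutine point, since convergence in distribution alone does not give convergence of means. It is handled by uniform integrability. Independence and centering give
\[
\mathbb E\|\sqrt n\Delta_n\|_2^2=\operatorname{Tr}(G_0)
\]
for every $n$, so the family $\{\|\sqrt n\Delta_n\|_2\}_n$ is bounded in $L^2$ and hence uniformly integrable. Convergence in distribution together with uniform integrability yields $\mathbb E\|\sqrt n\Delta_n\|_2\to\mathbb E\|Y\|_2$. Multiplying by $r$ gives the limit $w_{G_0}(rB_2^d)$.

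The nonasymptotic consequence follows from the definition of the limit. If $w_{G_0}(rB_2^d)>0$, which holds since $G_0\succ0$ and $r>0$, choose $N_\varepsilon$ so that for all $n\ge N_\varepsilon$ the quantity $\sqrt n\,\mathbb E[\cdot]$ exceeds $(1-\varepsilon)w_{G_0}(rB_2^d)$, and divide by $\sqrt n$. The degenerate case $r=0$ is trivial.
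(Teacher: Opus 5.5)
Your proposal is correct and follows the paper's proof essentially step for step: the exact-gap identity reduces the quantity to $r\sqrt{n}\,\mathbb{E}\|\Delta_n\|_2$, the multivariate CLT gives $\sqrt{n}\,\Delta_n \Rightarrow \mathcal{N}(0,G_0)$, and the uniform bound $\mathbb{E}\|\sqrt{n}\,\Delta_n\|_2^2=\operatorname{Tr}(G_0)$ gives the uniform integrability needed to pass to the limit of expectations, which is then matched against $w_{G_0}(rB_2^d)=r\,\mathbb{E}\|G_0^{1/2}g\|_2$. Your explicit check that $w_{G_0}(rB_2^d)>0$ (and the separate trivial case $r=0$) in the finite-$n$ consequence is a small detail the paper leaves implicit.
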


\begin{remark}[Interpretation]
  \label{rem:tightness-interp}
  Theorem~\ref{thm:tightness-expfam} shows that Fisher width gives the sharp
  asymptotic scale of the uniform generalization gap for local perturbations
  in exponential families with NLL loss. The empirical fluctuation of the NLL
  is linear in the perturbation $u$ --- because the deterministic term
  $A(\theta_0+u)-A(\theta_0)$ cancels in $R(u)-\widehat{R}_n(u)$ --- and the
  covariance of this fluctuation is exactly $G_0$. Thus the same Gaussian
  process that defines Fisher width also governs the asymptotic size of the
  generalization gap.
\end{remark}

\begin{remark}[Relation to the upper bound]
  \label{rem:tightness-vs-ub}
  For the symmetric Euclidean ball $T = rB_2^d$,
  \[
    T - T = 2T, \qquad w_{G_0}(T-T) = 2\,w_{G_0}(T).
  \]
  Thus Theorem~\ref{thm:tightness-expfam} matches the Fisher-width dependence
  in Theorem~\ref{thm:generalization} up to universal constants and the
  standard difference-set factor. It shows that the Fisher-width term in the
  upper bound is not an artifact of the proof method.
\end{remark}

\begin{remark}[Scope of the lower bound]
  \label{rem:lb-scope}
  The lower bound does not extend to arbitrary Fisher-Lipschitz losses without
  additional assumptions. Fisher-Lipschitzness controls how rapidly the loss
  changes with the parameter, but does not ensure nontrivial data-dependent
  fluctuations. For example, a loss that is constant on $T$ is
  Fisher-Lipschitz yet has zero generalization gap regardless of $w_G(T)$.
  Matching lower bounds therefore require a non-degeneracy condition on top of
  Fisher-Lipschitzness. Exponential-family NLL satisfies this condition
  exactly, because the covariance of the score is precisely $G_0$.
\end{remark}

\subsection{Fisher Width of Euclidean Balls}
\label{subsec:fisher_euclidean_ball}

Theorem~\ref{thm:generalization} shows that local generalization is governed by Fisher width. We next evaluate this quantity for a canonical parameter
set, the Euclidean ball
\(
T=rB_2^d.
\)
As observed in Example~2.4,
\[
w_G(rB_2^d)
=
r\,\mathbb E\|G^{1/2}g\|_2
\le
r\sqrt{\operatorname{Tr}(G)}.
\]
The following result shows that this upper bound is sharp up to universal constants. Thus, for Euclidean balls, Fisher width is controlled precisely by the total Fisher information \(\operatorname{Tr}(G)\), giving an intrinsic
effective dimension for the local model.

\begin{theorem}[Fisher width of Euclidean balls]
\label{thm:fisher_ball_width}
Let \(G\succeq0\) and let
\(
T=rB_2^d.
\)
Then
\[
\sqrt{\frac{2}{\pi}}\,
r\sqrt{\operatorname{Tr}(G)}
\le
w_G(T)
\le
r\sqrt{\operatorname{Tr}(G)}.
\]
Consequently,
\[
w_G(rB_2^d)
\asymp
r\sqrt{\operatorname{Tr}(G)},
\]
where the implied constants are universal.
\end{theorem}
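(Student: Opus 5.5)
The plan is to reduce everything to the identity $w_G(rB_2^d)=r\,\mathbb E\|G^{1/2}g\|_2$ from Example~\ref{ex:euclidean-ball}. The upper bound $r\sqrt{\operatorname{Tr}(G)}$ is already established there by Jensen's inequality. That computation used only $G\succeq0$ and symmetry of $G^{1/2}$, so it carries over verbatim to the positive semidefinite case. The work is therefore in the lower bound $\mathbb E\|G^{1/2}g\|_2\ge\sqrt{2/\pi}\,\sqrt{\operatorname{Tr}(G)}$.

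First diagonalize. Write $G=U\Lambda U^\top$ with eigenvalues $\lambda_1,\dots,\lambda_d\ge0$. By rotation invariance of $g\sim\mathcal N(0,I_d)$ the vector $U^\top g$ is again standard Gaussian, so
\[
\mathbb E\|G^{1/2}g\|_2=\mathbb E\Bigl(\sum_{i=1}^d\lambda_i g_i^2\Bigr)^{1/2}.
\]
Next linearize the Euclidean norm. For any unit vector $a\in\mathbb R^d$, Cauchy--Schwarz gives $\|x\|_2\ge\langle a,x\rangle$, and hence $\|x\|_2\ge|\langle a,x\rangle|$. Apply this with $x=(\sqrt{\lambda_i}\,g_i)_i$ and the \emph{deterministic} choice $a_i=\sqrt{\lambda_i}/\sqrt{\operatorname{Tr}(G)}$; when $\operatorname{Tr}(G)=0$ the claim is trivial. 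This yields
\[
\|G^{1/2}g\|_2\ \ge\ \frac{\bigl|\sum_i\lambda_i g_i\bigr|}{\sqrt{\operatorname{Tr}(G)}}.
\]
The random variable $\sum_i\lambda_i g_i$ is a centered Gaussian with variance $\sum_i\lambda_i^2$. Its absolute mean is therefore $\sqrt{2/\pi}\,(\sum_i\lambda_i^2)^{1/2}$, which gives the bound $\sqrt{2/\pi}\,\|G\|_F/\sqrt{\operatorname{Tr}(G)}$. This is too weak when the spectrum is spread out, so this particular choice of $a$ is not the right one.

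The step I expect to be the main obstacle is exactly this: obtaining the full $\sqrt{\operatorname{Tr}(G)}$ with the constant $\sqrt{2/\pi}$. My first attempt instead is a moment comparison. Set $S=\sum_i\lambda_i g_i^2$, so that $\mathbb E S=\operatorname{Tr}(G)$. The goal becomes $\mathbb E\sqrt S\ge\sqrt{2/\pi}\,\sqrt{\mathbb E S}$. The map $(\lambda_1,\dots,\lambda_d)\mapsto\mathbb E\bigl(\sum_i\lambda_i g_i^2\bigr)^{1/2}$ is concave and positively $1/2$-homogeneous in $\lambda$. Restrict to the simplex $\sum_i\lambda_i=\operatorname{Tr}(G)$. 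There the minimum of a concave function is attained at an extreme point $\lambda=\operatorname{Tr}(G)\,e_j$, where the value is exactly $\sqrt{\operatorname{Tr}(G)}\,\mathbb E|g_1|=\sqrt{2/\pi}\,\sqrt{\operatorname{Tr}(G)}$. This gives the lower bound with precisely the stated constant, and shows that the constant is attained by rank-one $G$.

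Multiplying by $r$ then yields both inequalities, and the equivalence $w_G(rB_2^d)\asymp r\sqrt{\operatorname{Tr}(G)}$ follows with universal constants $\sqrt{2/\pi}$ and $1$. The routine checks that remain are the following.
\begin{itemize}
\item Concavity of $\lambda\mapsto\mathbb E\sqrt{\sum_i\lambda_i g_i^2}$. This holds because it is an expectation of concave functions of $\lambda$, namely a square root composed with a linear map.
\item The fact that a concave function on a polytope attains its minimum at a vertex.
\end{itemize}
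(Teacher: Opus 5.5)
Your proof is correct and is essentially the paper's argument. After diagonalizing, both use concavity of $\lambda\mapsto(\sum_i\lambda_i g_i^2)^{1/2}$ on the simplex $\{\lambda\ge0,\ \sum_i\lambda_i=\operatorname{Tr}(G)\}$ and evaluate at its vertices, where the value is $\sqrt{\operatorname{Tr}(G)}\,\mathbb E|g_1|$. The paper applies concavity pointwise in $g$, getting $(\sum_i p_i g_i^2)^{1/2}\ge\sum_i p_i|g_i|$ before taking expectations, whereas you take expectations first and then use the vertex-minimum principle; your abandoned linearization attempt is harmless and can simply be dropped.
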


\begin{proof}
The upper bound follows from Example~2.4, so it remains only to prove the
lower bound. Let \(\lambda_1,\ldots,\lambda_d\) be the eigenvalues of \(G\).
By rotational invariance of the standard Gaussian,
\[
w_G(rB_2^d)
=
r\,\mathbb E\Bigl(\sum_{i=1}^d \lambda_i g_i^2\Bigr)^{1/2},
\qquad
g_i\overset{\mathrm{i.i.d.}}{\sim}N(0,1).
\]
If \(\Tr(G)=0\), then \(G=0\) and the claim is trivial. Otherwise, set
\[
p_i=\frac{\lambda_i}{\Tr(G)}.
\]
Then \(p=(p_1,\ldots,p_d)\) lies in the probability simplex, and
\[
w_G(rB_2^d)
=
r\sqrt{\Tr(G)}
\,
\mathbb E\Bigl(\sum_{i=1}^d p_i g_i^2\Bigr)^{1/2}.
\]
It remains to lower bound the last expectation uniformly over all probability
vectors \(p\).

For each fixed realization of \(g\), the map
\[
p\mapsto
\Bigl(\sum_{i=1}^d p_i g_i^2\Bigr)^{1/2}
\]
is concave on the simplex. Since \(p=\sum_i p_i e_i\), concavity gives
\[
\Bigl(\sum_{i=1}^d p_i g_i^2\Bigr)^{1/2}
\ge
\sum_{i=1}^d p_i |g_i|.
\]
Taking expectations,
\[
\mathbb E\Bigl(\sum_{i=1}^d p_i g_i^2\Bigr)^{1/2}
\ge
\sum_{i=1}^d p_i\,\mathbb E|g_i|
=
\sqrt{\frac{2}{\pi}}.
\]
Therefore
\[
w_G(rB_2^d)
\ge
\sqrt{\frac{2}{\pi}}\,
r\sqrt{\Tr(G)}.
\]
This proves the desired lower bound.
\end{proof}

\subsection{Effective Fisher Dimension}
\label{subsec:effective-fisher-dimension}
Theorem~\ref{thm:fisher_ball_width} suggests a natural way to separate the scale of the Fisher metric from its effective dimensionality. For a nonzero matrix \(G\succeq0\), define
\[
d_F(G)
:=
\frac{\Tr(G)}{\lambda_{\max}(G)}.
\]
which we call the \emph{effective Fisher dimension}.

The normalization by \(\lambda_{\max}(G)\) removes the largest local Fisher scale. What remains is a dimension-like quantity measuring how many Fisher-relevant directions contribute to the trace. Indeed,
\[
\Tr(G)=\lambda_{\max}(G)d_F(G).
\]
Substituting this identity into Theorem~\ref{thm:fisher_ball_width} immediately yields the following characterization.

\begin{corollary}[Effective Fisher dimension]
\label{cor:effective_fisher_dimension}
Let \(G\succeq0\) be nonzero and let \(T=rB_2^d\). Then
\[
\frac{2}{\pi}\,
d_F(G)
\le
\frac{w_G(T)^2}{r^2\lambda_{\max}(G)}
\le
d_F(G).
\]
Equivalently,
\[
w_G(rB_2^d)^2
\asymp
r^2\lambda_{\max}(G)d_F(G),
\]
with universal constants.
\end{corollary}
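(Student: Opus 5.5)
The corollary is a direct algebraic consequence of Theorem~\ref{thm:fisher_ball_width}, so the plan is to square that two-sided bound and normalize.

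First, I would note that $G\succeq 0$ nonzero implies $\lambda_{\max}(G)>0$. Hence $d_F(G)=\Tr(G)/\lambda_{\max}(G)$ is well defined, and we may divide by $r^2\lambda_{\max}(G)$; this assumes $r>0$, the case $r=0$ being degenerate and excluded implicitly.

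Theorem~\ref{thm:fisher_ball_width} gives
\[
\sqrt{2/\pi}\,r\sqrt{\Tr(G)}\le w_G(rB_2^d)\le r\sqrt{\Tr(G)} .
\]
All three quantities are nonnegative; the lower bound shows $w_G\ge 0$ here. Squaring is therefore monotone, and we obtain
\[
(2/\pi)\,r^2\Tr(G)\le w_G(rB_2^d)^2\le r^2\Tr(G).
\]

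Next I would substitute $\Tr(G)=\lambda_{\max}(G)\,d_F(G)$ and divide through by $r^2\lambda_{\max}(G)>0$. This gives exactly the displayed sandwich
\[
\tfrac{2}{\pi}\,d_F(G)\le \frac{w_G(T)^2}{r^2\lambda_{\max}(G)}\le d_F(G).
\]
The equivalent $\asymp$ form follows by multiplying back, with the universal constants $2/\pi$ and $1$.

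There is no real obstacle here. The only points needing care are the nonnegativity before squaring and the positivity of $\lambda_{\max}(G)$ and $r$ before dividing.
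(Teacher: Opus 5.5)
Your proposal is correct and follows the paper's own route: square the two-sided bound of Theorem~\ref{thm:fisher_ball_width}, substitute $\Tr(G)=\lambda_{\max}(G)\,d_F(G)$, and divide by $r^2\lambda_{\max}(G)>0$. Your checks on nonnegativity before squaring and on $r>0$ cover points the paper leaves implicit.
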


\begin{remark}[Interpretation of \(d_F\)]
\label{rem:effective_fisher_dimension}
The quantity
\[
d_F(G)
=
\frac{\sum_{i=1}^d\lambda_i(G)}{\lambda_{\max}(G)}
=
\frac{\|G^{1/2}\|_F^2}{\|G^{1/2}\|_{\mathrm{op}}^2}
\]
is the stable rank of \(G^{1/2}\). It separates the overall scale of the
Fisher metric from its effective dimensionality: \(\lambda_{\max}(G)\)
measures the largest local Fisher sensitivity, while \(d_F(G)\) measures how
many Fisher-relevant directions contribute to the geometry.

In the isotropic case \(G=\lambda I_d\), one has
\[
d_F(G)=d.
\]
More generally, if \(\operatorname{rank}(G)=r\), then
\[
d_F(G)\le r.
\]
Thus \(d_F(G)\) depends on the Fisher spectrum rather than the ambient
parameter dimension.
\end{remark}

As a direct consequence, the local generalization bound can be expressed in
terms of the effective Fisher dimension. Indeed, if \(T\subseteq rB_2^d\), then
\[
T-T\subseteq 2rB_2^d.
\]
Hence
\[
w_{G_0}(T-T)
\le
w_{G_0}(2rB_2^d)
\lesssim
r\sqrt{\lambda_{\max}(G_0)d_F(G_0)}.
\]
Substituting this estimate into Theorem~\ref{thm:generalization} gives, with
probability at least \(1-\delta\),
\[
\sup_{\theta\in T}
|R(\theta)-\widehat R_n(\theta)|
\lesssim
\frac{
Lr\sqrt{\lambda_{\max}(G_0)d_F(G_0)}
}
{\sqrt n}
+
B\sqrt{\frac{\log(1/\delta)}{2n}}.
\]

\begin{remark}[Effective versus ambient dimension]
\label{rem:effective_vs_ambient}
In the isotropic case \(G_0=\lambda I_d\), the effective Fisher dimension is
\(d_F(G_0)=d\), and the complexity term scales as
\[
r\sqrt{\frac{\lambda d}{n}}.
\]
When the Fisher spectrum is concentrated on a lower-dimensional subspace,
\(d_F(G_0)\ll d\), and the bound replaces the ambient dimension by the
effective Fisher dimension. Thus, for Euclidean parameter sets, local
generalization is governed by the Fisher spectrum rather than by the raw
number of parameters.
\end{remark}
\section{Computational Estimation of Fisher Width}
\label{sec:computation}

Fisher width depends on the Fisher information matrix \(G=G(\theta_0)\), which is typically unavailable in closed form. In practice, one can replaces \(G\) by a computable positive semidefinite approximation \(\widehat G\succeq0\), such as an empirical, diagonal, low-rank, or structured
Fisher approximation.

The basic stability principle is
\[
\bigl|
w_{\widehat G}(T)-w_G(T)
\bigr|
\le
\|\widehat G^{1/2}-G^{1/2}\|_{\op}\,w(T),
\]
as established in Theorem~\ref{thm:estimation-stability}. Thus Fisher width can be approximated by
\[
w_{\widehat G}(T)
=
w(\widehat G^{1/2}T),
\]
with an error controlled by the square-root approximation error of the Fisher matrix.

Assume that the support function
\[
h_T(c)
=
\sup_{v\in T}\langle c,v\rangle
\]
can be evaluated efficiently. Then
\[
w_{\widehat G}(T)
=
\mathbb E_g h_T(\widehat G^{1/2}g),
\qquad
g\sim N(0,I_d),
\]
can be estimated by Monte Carlo sampling. The estimators below differ in how \(\widehat G^{1/2}\) is constructed.

\subsection{Full Empirical Fisher Estimator}
\label{subsec:full-fisher}

Given i.i.d.\ samples \(z_1,\ldots,z_n\), define the empirical Fisher matrix
\[
\widehat G_n
=
\frac1n
\sum_{i=1}^n
s_{\theta_0}(z_i)s_{\theta_0}(z_i)^\top,
\]
where
\[
s_{\theta_0}(z)
=
\nabla_\theta \log p_\theta(z)\big|_{\theta=\theta_0}
\]
is the score. The corresponding Monte Carlo estimator is
\[
\widehat w_{\widehat G_n}^{\mathrm{MC}}(T)
=
\frac1B
\sum_{b=1}^B
h_T(\widehat G_n^{1/2}g_b),
\qquad
g_b\overset{\mathrm{i.i.d.}}{\sim}N(0,I_d).
\]

The total error decomposes into Monte Carlo error and Fisher estimation error:
\[
\bigl|
\widehat w_{\widehat G_n}^{\mathrm{MC}}(T)-w_G(T)
\bigr|
\le
\underbrace{
\bigl|
\widehat w_{\widehat G_n}^{\mathrm{MC}}(T)-w_{\widehat G_n}(T)
\bigr|
}_{\mathrm{Monte\ Carlo\ error}}
+
\underbrace{
\bigl|
w_{\widehat G_n}(T)-w_G(T)
\bigr|
}_{\mathrm{Fisher\ estimation\ error}}.
\]
The Monte Carlo term vanishes as \(B\to\infty\) by the law of large numbers.
The second term is controlled by stability. In particular, on any event where
\[
\|\widehat G_n^{1/2}-G^{1/2}\|_{\op}
\le
\varepsilon_n(\delta),
\]
one has
\[
\bigl|
w_{\widehat G_n}(T)-w_G(T)
\bigr|
\le
\varepsilon_n(\delta)\,w(T).
\]
For dense matrices, the dominant costs are forming \(\widehat G_n\) and
computing a spectral factorization, which scale as
\[
O(nd^2+d^3).
\]

\subsection{Low-Rank Fisher Approximation}
\label{subsec:low-rank}

When the empirical Fisher spectrum decays rapidly, one may replace the full spectral factorization by a rank-\(k\) approximation. Let
\[
\widehat G_n
=
U\Lambda U^\top
\]
be the eigendecomposition of the empirical Fisher matrix, and let
\[
\widehat G_k
=
U_k\Lambda_kU_k^\top
\]
be its rank-\(k\) truncation. The corresponding low-rank approximation is
\[
w_{\widehat G_k}(T)
=
\mathbb E_g h_T(\widehat G_k^{1/2}g).
\]

\begin{proposition}[Low-rank approximation error]
\label{prop:low-rank-computation}
Let \(\widehat G_k\) be the rank-\(k\) truncation of \(\widehat G_n\). Then,
for every \(T\subseteq\mathbb R^d\),
\[
\bigl|
w_{\widehat G_k}(T)-w_{\widehat G_n}(T)
\bigr|
\le
\sqrt{\lambda_{k+1}(\widehat G_n)}\,w(T),
\]
with the convention that \(\lambda_{k+1}(\widehat G_n)=0\) if
\(k\ge \operatorname{rank}(\widehat G_n)\). Consequently, if
\[
\bigl|
w_{\widehat G_n}(T)-w_G(T)
\bigr|
\le
\varepsilon_n(\delta)\,w(T)
\]
with probability at least \(1-\delta\), then
\[
\bigl|
w_{\widehat G_k}(T)-w_G(T)
\bigr|
\le
\Bigl(
\sqrt{\lambda_{k+1}(\widehat G_n)}
+
\varepsilon_n(\delta)
\Bigr)w(T)
\]
with probability at least \(1-\delta\).
\end{proposition}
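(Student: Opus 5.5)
The plan is to reduce the first claim to Theorem~\ref{thm:estimation-stability} applied with $G_1=\widehat G_k$ and $G_2=\widehat G_n$. Both matrices are positive semidefinite. Since $\widehat G_k$ is the spectral truncation of $\widehat G_n$, they share the eigenbasis $U$. Write $\widehat G_n=\sum_{i=1}^d \lambda_i u_iu_i^\top$ with $\lambda_1\ge\cdots\ge\lambda_d\ge0$. Then
\[
\widehat G_n^{1/2}=\sum_{i=1}^d \sqrt{\lambda_i}\,u_iu_i^\top,
\qquad
\widehat G_k^{1/2}=\sum_{i=1}^k \sqrt{\lambda_i}\,u_iu_i^\top,
\]
because the square root of a PSD matrix is computed eigenvalue-by-eigenvalue in its eigenbasis. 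The difference is therefore
\[
\widehat G_n^{1/2}-\widehat G_k^{1/2}=\sum_{i>k}\sqrt{\lambda_i}\,u_iu_i^\top .
\]
This is again diagonal in the basis $U$, so its operator norm is $\max_{i>k}\sqrt{\lambda_i}=\sqrt{\lambda_{k+1}(\widehat G_n)}$. When $k\ge\operatorname{rank}(\widehat G_n)$, all $\lambda_i$ with $i>k$ vanish, the difference is zero, and the stated convention $\lambda_{k+1}=0$ agrees with this.

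Theorem~\ref{thm:estimation-stability} with $\varepsilon=\sqrt{\lambda_{k+1}(\widehat G_n)}$ then gives
\[
|w_{\widehat G_k}(T)-w_{\widehat G_n}(T)|\le \sqrt{\lambda_{k+1}(\widehat G_n)}\,w(T).
\]
The theorem is stated for compact $T$, whereas the proposition is written for arbitrary $T\subseteq\mathbb R^d$. I will either read $T$ as compact (or bounded) implicitly, as elsewhere in the section, or observe that the proof of Theorem~\ref{thm:estimation-stability} only uses the pointwise inequality $\sup_T\langle g,G_1^{1/2}v\rangle-\sup_T\langle g,G_2^{1/2}v\rangle\le \sup_T\langle g,Mv\rangle$ and Sudakov--Fernique. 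Both remain valid whenever $w(T)<\infty$ (after passing to finite subsets and taking a monotone limit), and the bound is vacuous if $w(T)=\infty$.

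The second claim is the triangle inequality
\[
|w_{\widehat G_k}(T)-w_G(T)|\le |w_{\widehat G_k}(T)-w_{\widehat G_n}(T)|+|w_{\widehat G_n}(T)-w_G(T)|.
\]
The first term is bounded deterministically by the part just proved. The second term is at most $\varepsilon_n(\delta)w(T)$ on the assumed event of probability at least $1-\delta$. Since the first bound holds surely, no union bound is needed and the probability stays $1-\delta$.

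The only step needing care is the identification of $\|\widehat G_n^{1/2}-\widehat G_k^{1/2}\|_{\op}$. Crude matrix-square-root perturbation bounds such as Lemma~\ref{lem:sqrt-perturbation-op} would lose here, since they require lower eigenvalue bounds that fail for a singular $\widehat G_k$. The argument must therefore use the simultaneous diagonalization explicitly.
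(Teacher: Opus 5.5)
Your proposal is correct and follows the paper's proof. You apply Theorem~\ref{thm:estimation-stability} to the pair $\widehat G_k,\widehat G_n$, use their common eigenbasis to get $\|\widehat G_n^{1/2}-\widehat G_k^{1/2}\|_{\op}=\sqrt{\lambda_{k+1}(\widehat G_n)}$, and finish with a triangle inequality that needs no union bound; your explicit square-root computation in the shared eigenbasis fills in a step the paper passes over when it moves directly from $\widehat G_n-\widehat G_k$ to the square-root difference.
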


\begin{proof}
By the stability principle,
\[
\bigl|
w_{\widehat G_k}(T)-w_{\widehat G_n}(T)
\bigr|
\le
\|\widehat G_k^{1/2}-\widehat G_n^{1/2}\|_{\op}\,w(T).
\]
Since \(\widehat G_k\) is obtained by truncating the spectrum of
\(\widehat G_n\),
\[
\widehat G_n-\widehat G_k
=
\sum_{i>k}
\lambda_i(\widehat G_n)u_i u_i^\top.
\]
Therefore,
\[
\|\widehat G_k^{1/2}-\widehat G_n^{1/2}\|_{\op}
=
\sqrt{\lambda_{k+1}(\widehat G_n)}.
\]
This proves the first inequality. The second follows by the triangle
inequality.
\end{proof}

Thus the computational error is controlled by the discarded Fisher spectrum.
If the eigenvalues of \(\widehat G_n\) decay rapidly, a small number of leading
Fisher directions suffices. Randomized SVD or sketching methods can compute
\(\widehat G_k\), without explicitly forming the dense empirical Fisher
matrix, at substantially lower cost than a full eigendecomposition, typically
on the order of
\[
O(ndk+dk^2)
\]
when applied to score samples \citep{HalkoMartinssonTropp2011}.
\subsection{Score-Norm Estimator for Euclidean Balls}
\label{subsec:score-norm}

For Euclidean balls, Fisher width admits an even cheaper estimator. Let
\[
T=rB_2^d.
\]
The identity
\[
\Tr(G)
=
\mathbb E\|s_{\theta_0}(z)\|_2^2
\]
suggests estimating the trace by the empirical score norm
\[
\widehat \tau_n
=
\frac1n
\sum_{i=1}^n
\|s_{\theta_0}(z_i)\|_2^2.
\]
This gives the score-norm estimator
\[
\widehat w_{\mathrm{score}}
=
r\sqrt{\widehat \tau_n}.
\]
It requires neither matrix formation, spectral factorization, nor Monte Carlo
integration, and its cost is \(O(nd)\).

By Theorem~\ref{thm:fisher_ball_width},
\[
\sqrt{\frac{2}{\pi}}\,
r\sqrt{\Tr(G)}
\le
w_G(rB_2^d)
\le
r\sqrt{\Tr(G)}.
\]
Since
\[
\widehat \tau_n
\to
\Tr(G)
\]
almost surely by the law of large numbers, the estimator
\(\widehat w_{\mathrm{score}}\) converges almost surely to the upper scale
\(r\sqrt{\Tr(G)}\). Hence it provides a constant-factor approximation of
\(w_G(rB_2^d)\), with sharp universal factor \(\sqrt{2/\pi}\).

The constant is sharp: the lower factor \(\sqrt{2/\pi}\) is attained in the
rank-one case. In the isotropic case \(G=\lambda I_d\),
\[
\frac{w_G(rB_2^d)}{r\sqrt{\Tr(G)}}
=
\frac{\mathbb E\|g\|_2}{\sqrt d}
\to 1
\qquad
\text{as } d\to\infty.
\]

Using the decomposition
\[
\Tr(G)
=
\lambda_{\max}(G)\,d_F(G),
\]
the score-norm estimator may equivalently be viewed as estimating the scale
\[
r\sqrt{\lambda_{\max}(G)d_F(G)}.
\]

\begin{table}[h]
\centering
\caption{Comparison of Fisher width estimators.}
\label{tab:estimators}
\small
\begin{tabular}{llll}
\toprule
Method & Cost & Guarantee & Regime \\
\midrule
Full Fisher + MC
  & \(O(nd^2+d^3)\)
  & MC error \(+\varepsilon_n(\delta)w(T)\)
  & Moderate \(d\) \\
Low-rank Fisher
  & \(O(ndk+dk^2)\)
  & \((\sqrt{\lambda_{k+1}}+\varepsilon_n(\delta))w(T)\)
  & Fast spectral decay \\
Score norm
  & \(O(nd)\)
  & Constant-factor for \(T=rB_2^d\)
  & Euclidean balls \\
\bottomrule
\end{tabular}
\end{table}

These comparisons show that Fisher width is not only a theoretical complexity measure, but also a computable quantity whose approximation can be adapted to the available spectral and geometric structure.

\section{Experiments}
\label{sec:experiments}

The experiments provide empirical support for three aspects of the theory
across three model classes and two tasks.
Section~\ref{subsec:exp-trained} shows that Fisher width is a nontrivial,
computable quantity on trained models.
Section~\ref{subsec:exp-estimator} evaluates the estimation methods and
stability bounds of Section~\ref{sec:computation}.
Section~\ref{subsec:exp-generalization} examines the predicted
\(O(1/\sqrt n)\) scaling from Theorem~\ref{thm:generalization}.
All experiments use \(T=B_2^p\), are averaged over 10 independent random
seeds, and report error bars of one standard deviation.
Code is provided in the supplementary material.

\subsection{Experimental Setup}
\label{subsec:exp-setup}

\paragraph{Dataset.}
All experiments use MNIST~\citep{LeCun1998}.
Sections~\ref{subsec:exp-trained} and~\ref{subsec:exp-estimator}
use the binary subset (digits 0 vs.\ 1, \(13{,}007\) samples total).
Section~\ref{subsec:exp-generalization} uses all 10 classes
(\(70{,}000\) samples).
Features are standardized to zero mean and unit variance.

\paragraph{Models and Fisher estimation.}
Three model classes are used throughout.

\emph{Model~A: Binary logistic regression} (\(p=784\)).\\
The full empirical Fisher
\[
\widehat G_n
=
\frac1n\sum_{i=1}^n
s_{\hat\theta}(x_i)s_{\hat\theta}(x_i)^\top
\]
is computed exactly. Fisher width is estimated by Monte Carlo with
\(B=5{,}000\) samples.

\emph{Model~B: Softmax regression, 10 classes} (\(p=7{,}840\)).\\
The diagonal empirical Fisher
\[
\widehat\gamma_{k,j}
=
\frac1n\sum_{i=1}^n
\widehat p_{ik}(1-\widehat p_{ik})x_{ij}^2
\]
is used. Fisher width is estimated with \(B=10{,}000\) Monte Carlo samples.

\emph{Model~C: Ridge regression} (\(p=784\)).\\
The Fisher matrix
\[
G(\theta)=\sigma^{-2}\Sigma_X,
\qquad
\Sigma_X=\frac1nX^\top X,
\]
does not depend on \(\theta\), providing an analytic ground truth for Fisher
width via the eigenvalues of \(\Sigma_X\).

\paragraph{Regularization.}
We vary \(L_2\) regularization
\[
\lambda\in\{0,10^{-4},10^{-3},10^{-2},10^{-1},1,5\}
\]
in Sections~\ref{subsec:exp-trained}--\ref{subsec:exp-estimator}, and
\[
\lambda\in\{10^{-3},10^{-2},10^{-1},1\}
\]
in Section~\ref{subsec:exp-generalization}.

\subsection{Fisher Width in Trained Models}
\label{subsec:exp-trained}

\paragraph{Goal.}
The first experiment examines whether empirical Fisher width is substantially
different from the Euclidean baseline \(w(B_2^p)\approx\sqrt p\), whether it
satisfies the theoretical spectral and score bounds, and how it changes under
regularization across the three model classes.

\paragraph{Setup.}
We use \(n=10{,}000\), 10 seeds, and
\[
\lambda\in\{0,10^{-4},10^{-3},10^{-2},10^{-1},1,5\}.
\]

\paragraph{Observations.}

\emph{Fisher width is substantially below the Euclidean baseline.}
Figure~\ref{fig:exp1} and Table~\ref{tab:exp1} report the results.
For Model~A, the normalized ratio
\[
\widehat w_G(B_2^{784})/w(B_2^{784})
\]
ranges from \(0.001\) at \(\lambda=0\) to \(0.295\) at \(\lambda=5\).
For Model~B, the ratio ranges from \(0.030\) to \(0.266\) over the larger
parameter space \(p=7{,}840\).
For Model~C, the ratio is approximately \(0.865\), reflecting the
near-isotropic Fisher geometry of the Gaussian covariate model.

\emph{The theoretical bounds are satisfied.}
The spectral sandwich bound of Lemma~\ref{lem:spectral-bounds} holds across
all 70 runs for each model, corresponding to 7 regularization values and
10 random seeds. For Model~A,
\[
\min_{\lambda,s}
\left(
\widehat w_G^{(s)}
-
\sqrt{\lambda_{\min}(\widehat G)}\,w(B_2^p)
\right)
=
0.021>0.
\]
The score upper bound of Theorem ~\ref{thm:fisher_ball_width} is satisfied in every
run for Models~A and~C, with
\[
\min
\left(
\sqrt{\Tr(\widehat G)}-\widehat w_G
\right)
=
2.8\times10^{-4}.
\]

\emph{For the models considered here, Fisher width increases with
regularization.}
For all three models, \(\widehat w_G\) is increasing in \(\lambda\) across all
seeds; see Figure~\ref{fig:exp1}(a),(b).
In Model~A, the damped condition number
\(\kappa_\varepsilon(\widehat G)\) also increases with \(\lambda\), growing
from approximately \(10^2\) to \(10^7\). This appears to occur because the
fitted model changes the Fisher spectrum by increasing the leading
eigenvalues while the smallest eigenvalues remain close to zero.

\emph{The score upper bound is tight for near-isotropic models.}
The ratio between the score upper scale and the empirical Fisher width is
\(1.008\) for Model~A and \(1.006\) for Model~C, consistent with near-isotropic
Fisher geometry. The ratio is larger for Model~B.

\emph{The Monte Carlo estimator is validated against analytic ground truth in
Model~C.}
For ridge regression, the analytic Fisher width
\[
w_G^{\mathrm{analytic}}=24.38
\]
is available from the eigenvalues of \(\Sigma_X\). The Monte Carlo estimator
with \(B=5{,}000\) achieves a relative error of \(0.13\%\), uniformly across
all regularization values, directly supporting the estimation procedure of
Section~\ref{sec:computation}.

\begin{table}[htbp]
\centering
\small
\caption{%
Selected Fisher width values from Experiment~1.
\(w(B_2^p)=27.99\) for \(p=784\) and \(88.54\) for \(p=7{,}840\).
Means over 10 seeds; \(\pm\) std shown.%
}
\label{tab:exp1}
\setlength{\tabcolsep}{4pt}
\begin{tabular}{llcccc}
\toprule
Model & \(\lambda\)
  & \(\widehat w_G\)
  & \(\widehat w_G/w(B_2^p)\)
  & Score UB
  & \(\kappa_\varepsilon(\widehat G)\) \\
\midrule
\multirow{4}{*}{\shortstack[l]{A: Logistic\\(\(p=784\))}}
  & \(0\)        & \(0.028\pm0.005\) & \(0.001\) & \(0.029\) & \(2.2\times10^2\) \\
  & \(10^{-2}\) & \(1.462\pm0.039\) & \(0.052\) & \(1.469\) & \(1.7\times10^5\) \\
  & \(1\)       & \(5.623\pm0.061\) & \(0.201\) & \(5.650\) & \(2.9\times10^6\) \\
  & \(5\)       & \(8.253\pm0.056\) & \(0.295\) & \(8.304\) & \(8.5\times10^6\) \\
\midrule
\multirow{3}{*}{\shortstack[l]{B: Softmax\\(\(p=7840\))}}
  & \(0\)        & \(2.640\pm0.297\)  & \(0.030\) & \(2.641\)  & \(7.8\times10^4\) \\
  & \(10^{-1}\) & \(12.991\pm0.099\) & \(0.147\) & \(12.992\) & \(2.1\times10^5\) \\
  & \(5\)       & \(23.524\pm0.085\) & \(0.266\) & \(23.525\) & \(2.5\times10^5\) \\
\midrule
\multirow{2}{*}{\shortstack[l]{C: Ridge\\(\(p=784\))}}
  & all \(\lambda\) & \(24.212\pm0.075\) & \(0.865\) & \(24.450\)
  & \(1.0\times10^8\) \\
  & \multicolumn{5}{l}{%
    \footnotesize Analytic \(w_G=24.38\);\quad
    MC error \(=0.13\%\);\quad
    \(G=\sigma^{-2}\Sigma_X\) invariant to \(\lambda\).} \\
\bottomrule
\end{tabular}
\end{table}

\begin{figure}[tbp]
  \centering
  \includegraphics[width=\textwidth]{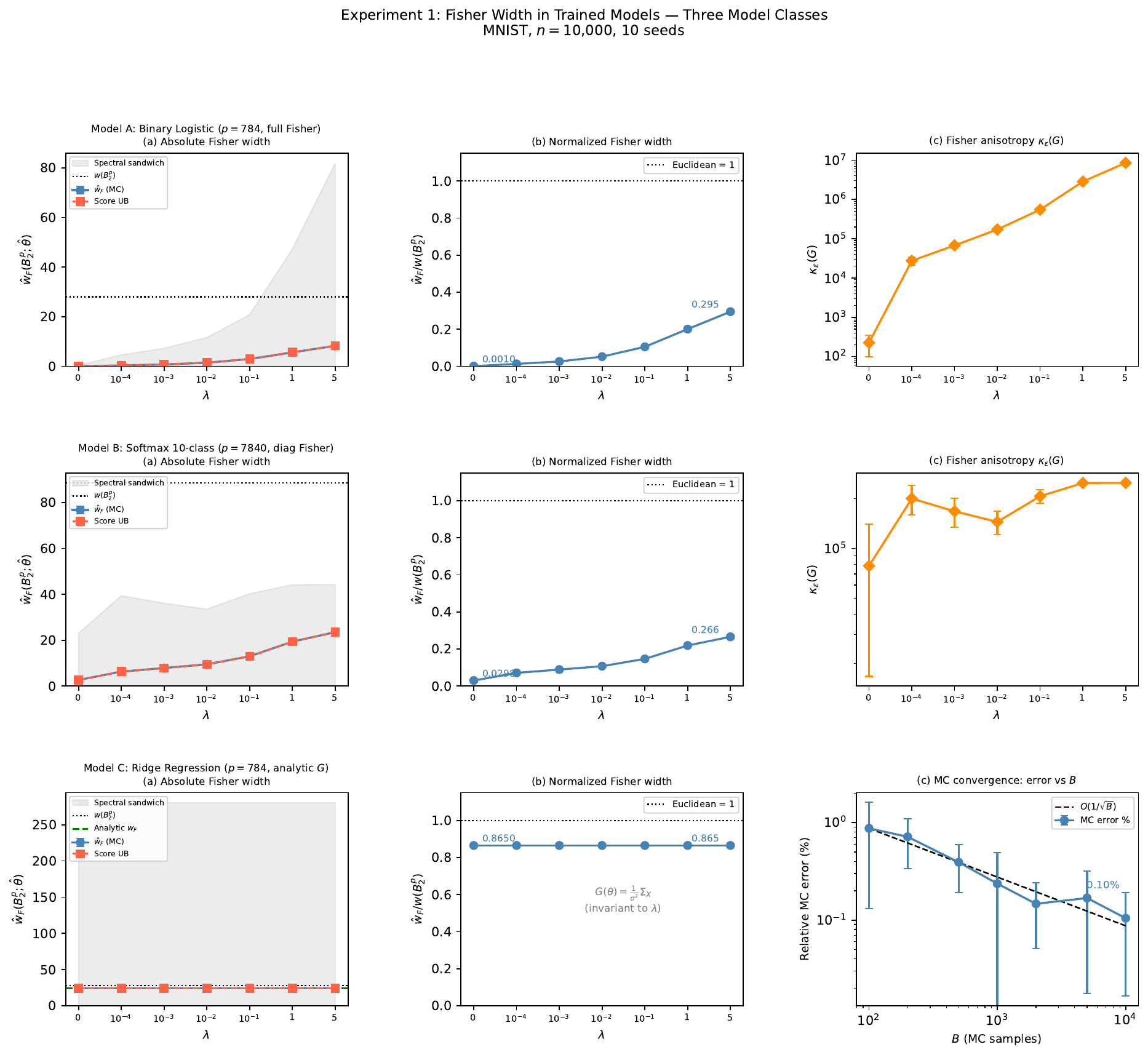}
  \caption{%
    \textbf{Experiment 1: Fisher width in trained models.}
    Three model classes on MNIST, \(n=10{,}000\), 10 seeds,
    \(\lambda\in\{0,10^{-4},10^{-3},10^{-2},10^{-1},1,5\}\).
    \emph{Row~A}: Binary logistic regression (\(p=784\), full Fisher).
    \emph{Row~B}: Softmax regression (\(p=7{,}840\), diagonal Fisher).
    \emph{Row~C}: Ridge regression (\(p=784\), analytic Fisher).
    \textbf{Left column}: empirical Fisher width \(\widehat w_G\) with score
    upper bound \(\sqrt{\Tr(\widehat G)}\) and spectral sandwich.
    The dotted line is the Euclidean baseline \(w(B_2^p)\).
    \textbf{Middle column}: normalized ratio
    \(\widehat w_G/w(B_2^p)\).
    \textbf{Right column}: damped condition number
    \(\kappa_\varepsilon(\widehat G)\) on a log scale for Models~A and~B;
    Monte Carlo convergence against analytic ground truth for Model~C,
    confirming the \(O(1/\sqrt B)\) rate.%
  }
  \label{fig:exp1}
\end{figure}
\FloatBarrier

\subsection{Estimator Accuracy and Stability}
\label{subsec:exp-estimator}

\paragraph{Goal.}
The second experiment evaluates the low-rank approximation error from
Proposition~\ref{prop:low-rank-computation}, the data convergence predicted
by Corollary~\ref{cor:empirical-fisher-stability}, and the stability of
structured approximations given by Theorem~\ref{thm:estimation-stability}.

\paragraph{Setup.}
We use Model~A with \(\lambda=0.01\), \(n=10{,}000\), and 10 seeds.
The reference value is
\[
\widehat w_G^{\mathrm{ref}}
=
1.462\pm0.039
\]
computed with \(B=10{,}000\) Monte Carlo samples.

\paragraph{Observations.}

\emph{Rank-\(k\) error tracks \(\sqrt{\lambda_{k+1}}\).}
Figure~\ref{fig:exp2}(a) reports
\[
\left|
\widehat w_G-\widehat w_{G,k}
\right|
\]
for
\[
k\in\{1,2,5,10,20,30,50,100,200,392\}.
\]
The bound of Proposition~\ref{prop:low-rank-computation} is satisfied at every
\(k\), with
\[
\max_k(\text{error}-\text{bound})=-0.262<0.
\]
On the log-log scale, the actual error is parallel to the bound across all
\(k\), supporting the predicted \(\Theta(\sqrt{\lambda_{k+1}})\) scaling.
The Fisher spectrum decays rapidly: by \(k=100\), the rank-\(k\) error is
\(0.096\), and by \(k=392\), it reaches \(0.033\).

\emph{Data convergence is consistent with an \(O(1/\sqrt n)\) rate.}
Figure~\ref{fig:exp2}(b) shows the error
\[
\left|
\widehat w_G^{(n)}-\widehat w_G^{\mathrm{ref}}
\right|
\]
as a function of \(n\). The log-log slope is \(-0.60\), close to the
\(-0.5\) rate predicted by Corollary~\ref{cor:empirical-fisher-stability}.
Relative error falls from \(47.7\%\) at \(n=100\) to \(2.0\%\) at
\(n=10{,}000\approx12.8d\). Errors are larger for \(n<d\), where the
empirical Fisher is severely rank-deficient.

\emph{The diagonal approximation is accurate in this setting.}
Figure~\ref{fig:exp2}(c) compares the full Fisher, diagonal Fisher, and score
upper bound. The diagonal Fisher incurs \(0.56\%\) relative error and the
score upper bound incurs \(0.64\%\) relative error relative to the full Fisher
width. Both approximations consistently upper-bound the full estimate across
all seeds, as expected from the stability principle.

\begin{figure}[tbp]
  \centering
  \includegraphics[width=\textwidth]{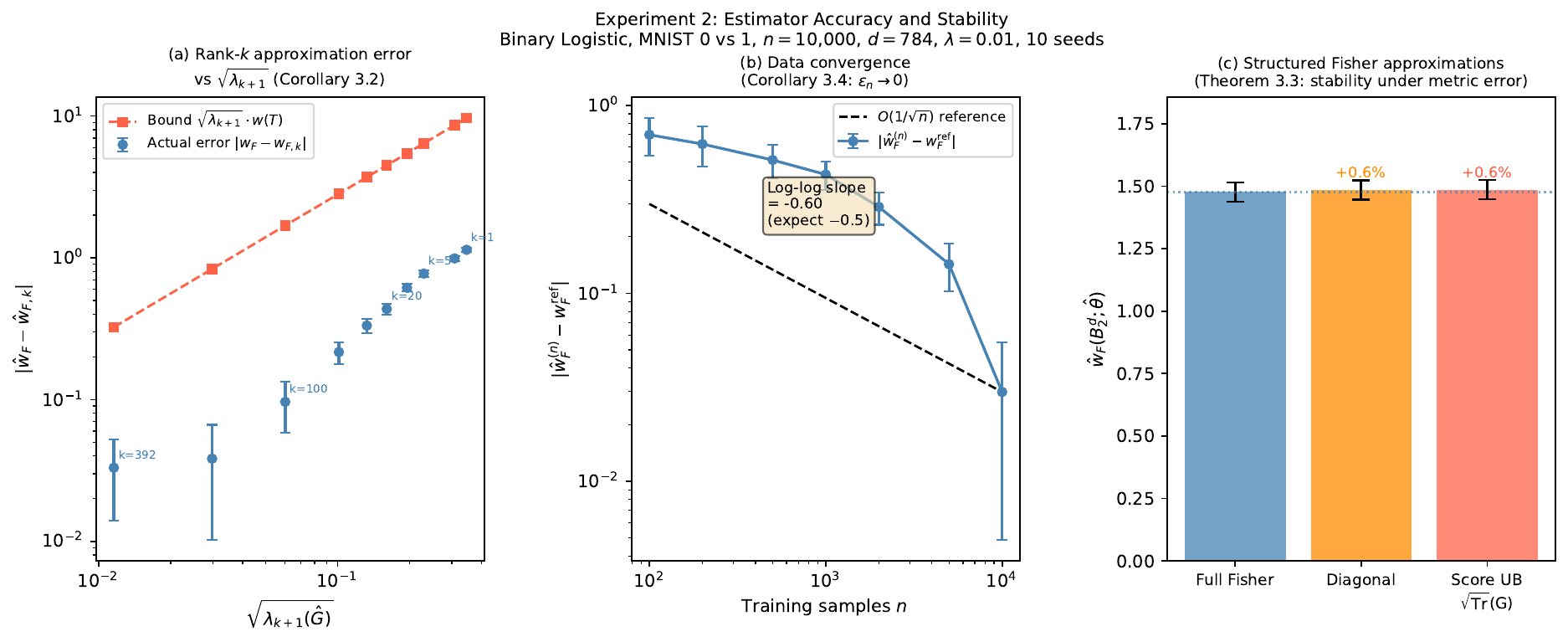}
  \caption{%
    \textbf{Experiment 2: Estimator accuracy and stability.}
    Model~A (binary logistic, \(\lambda=0.01\), \(n=10{,}000\),
    \(d=784\), 10 seeds).
    \textbf{(a)} Rank-\(k\) approximation error
    \(|\widehat w_G-\widehat w_{G,k}|\) versus the theoretical bound
    \(\sqrt{\lambda_{k+1}(\widehat G)}\,w(T)\) on a log-log scale.
    The bound is satisfied at all \(k\), with parallel scaling
    \(\Theta(\sqrt{\lambda_{k+1}})\)
    (Proposition~\ref{prop:low-rank-computation}).
    \textbf{(b)} Data convergence:
    \(|\widehat w_G^{(n)}-\widehat w_G^{\mathrm{ref}}|\) versus \(n\),
    with an \(O(1/\sqrt n)\) reference line;
    log-log slope \(=-0.60\)
    (Corollary~\ref{cor:empirical-fisher-stability}).
    \textbf{(c)} Structured approximations: full Fisher, diagonal Fisher
    (\(+0.56\%\)), and score upper bound (\(+0.64\%\))
    (Theorem~\ref{thm:estimation-stability}).%
  }
  \label{fig:exp2}
\end{figure}
\FloatBarrier

\subsection{Fisher Width and Generalization}
\label{subsec:exp-generalization}

\paragraph{Goal.}
The third experiment examines the \(O(1/\sqrt n)\) scaling predicted by
Theorem~\ref{thm:generalization} across training sizes and regularization
levels.

\paragraph{Setup.}
We use Model~B, the 10-class softmax model, with
\(N_{\mathrm{test}}=5{,}000\) and 10 seeds. We vary
\[
\lambda\in\{10^{-3},10^{-2},10^{-1},1\}
\]
and
\[
n\in\{200,500,1{,}000,2{,}000,5{,}000,10{,}000,20{,}000\}.
\]

Although the implementation uses the full \(K\times d\) parametrization
(\(p=7{,}840\)), the softmax model is identifiable only modulo common shifts
of all class weights. The Fisher-Lipschitz verification in
Appendix~\ref{app:softmax-fisher-lipschitz} is stated in an identifiable
baseline parametrization; the full-parametrization case is interpreted through
the quotient-space formulation of Remark~\ref{rem:full-softmax-parametrization}.

\paragraph{Observations.}

\emph{The generalization gap scales approximately as \(1/\sqrt n\).}
For each fixed \(\lambda\), the generalization gap is approximately linear in
\(1/\sqrt n\) across all seven training sizes; see
Figure~\ref{fig:exp3}(a). The linear fits achieve
\[
R^2\in\{0.935,0.943,0.987,0.976\}
\]
for
\[
\lambda\in\{10^{-3},10^{-2},10^{-1},1\},
\]
respectively, consistent with the rate predicted by
Theorem~\ref{thm:generalization}.

\emph{Fisher width increases with training data and is stabilized by
regularization.}
For fixed \(\lambda\), \(\widehat w_G(B_2^p;\hat\theta_n)\) grows with \(n\),
as shown in Figure~\ref{fig:exp3}(b). Stronger regularization stabilizes this
growth: the ratio
\[
\frac{\widehat w_G(n=20{,}000)}{\widehat w_G(n=200)}
\]
is \(5.97\) at \(\lambda=10^{-3}\) and \(1.19\) at \(\lambda=1\).

\emph{The observed gaps are consistent with Fisher-width scaling.}
Figure~\ref{fig:exp3}(c) plots the generalization gap against
\[
\widehat w_G/\sqrt n
\]
over all \((\lambda,n)\) pairs. All 28 averaged data points lie below a fitted
linear envelope
\[
CL\cdot \widehat w_G/\sqrt n
\]
with \(CL=2.672\). This supports the scaling predicted by
Theorem~\ref{thm:generalization}, although the constant is calibrated
empirically.

\begin{figure}[tbp]
  \centering
  \includegraphics[width=\textwidth]{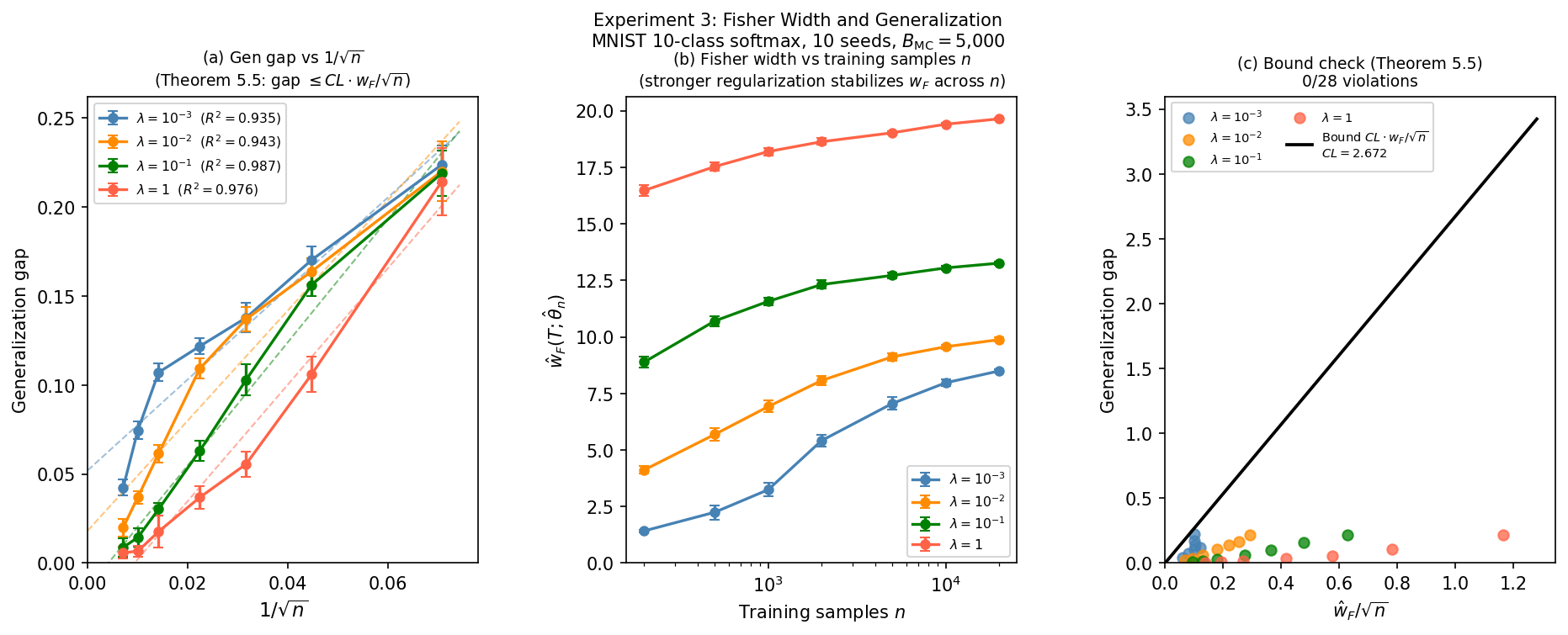}
  \caption{%
    \textbf{Experiment 3: Fisher width and generalization.}
    Model~B (MNIST 10-class softmax), 10 seeds,
    \(N_{\mathrm{test}}=5{,}000\).
    \textbf{(a)} Generalization gap versus \(1/\sqrt n\) for four
    regularization levels; dashed lines are linear fits.
    All four fits achieve \(R^2\ge0.935\), consistent with the
    \(O(1/\sqrt n)\) rate of Theorem~\ref{thm:generalization}.
    \textbf{(b)} Fisher width \(\widehat w_G\) versus training set size
    \(n\) on a log scale for fixed \(\lambda\).
    Stronger regularization stabilizes \(\widehat w_G\) across \(n\).
    \textbf{(c)} Generalization gap versus
    \(\widehat w_G/\sqrt n\) for all \((\lambda,n)\) pairs.
    All 28 averaged points lie below the fitted envelope
    \(CL\cdot \widehat w_G/\sqrt n\) with \(CL=2.672\).%
  }
  \label{fig:exp3}
\end{figure}
\FloatBarrier

\section{Discussion}
\label{sec:discussion}

\subsection{Complexity Depends on Geometry}

A central message of this paper is that complexity is not intrinsic to a hypothesis class: it depends on the metric used to measure it.

Classical Gaussian width \(w(T)\) implicitly assumes the Euclidean metric \(G=I\), treating all parameter directions as equally significant. Fisher width
\[
w_G(T)=w(G^{1/2}T)
\]
replaces this assumption with the local statistical geometry of the model: directions of high Fisher curvature are amplified, statistically insensitive directions are suppressed, and the resulting complexity measure reflects what is locally distinguishable from data.

This perspective has a concrete consequence. Two hypothesis classes with identical Euclidean complexity may have very different Fisher complexities if their Fisher geometries differ. It is Fisher complexity, rather than Euclidean
complexity alone, that appears in the generalization bound of Theorem~\ref{thm:generalization}.

Gaussian width is therefore not replaced by Fisher width; it appears as the flat, isotropic limit of a more general information-geometric complexity viewpoint.

\subsection{Effective Fisher Dimension}

A central consequence of Fisher width is the emergence of an intrinsic effective dimension determined by the Fisher geometry itself. The quantity
\[
d_F(G)
=
\frac{\Tr(G)}{\lambda_{\max}(G)}
\]
appears naturally from the sharp characterization
\[
w_G(rB_2^d)^2
\asymp
r^2\lambda_{\max}(G)d_F(G)
\]
of Theorem~\ref{thm:fisher_ball_width}. For nonzero \(G\),
\[
1\le d_F(G)\le \operatorname{rank}(G)\le d.
\]
Thus \(d_F(G)\) measures how many directions carry substantial Fisher information, interpolating between the rank-one, maximally anisotropic case and the isotropic case \(G=\lambda I_d\), where \(d_F(G)=d\).

The significance of \(d_F(G)\) is that it can replace raw parameter count as a more intrinsic notion of local model size. For Euclidean balls with comparable radius and largest Fisher scale, a model with \(d=10^6\) but \(d_F(G)=50\) has the same Fisher-width scaling as a 50-dimensional isotropic model.
The experiments of Section~\ref{sec:experiments} show that Fisher geometry varies substantially across model classes and regularization levels, suggesting that \(d_F(G)\) captures structure beyond the raw parameter count
\(d\). A systematic comparison between \(d_F(G)\) and \(d\) as predictors of generalization remains for future work.

\subsection{Limitations and Future Directions}

Several limitations bound the scope of the present results.

\textbf{Locality.}
Fisher width is defined at a fixed base point \(\theta_0\). Extending the theory to globally varying Fisher metrics on overparameterized manifolds remains open. Such an extension would require controlling how \(G(\theta)\), and therefore \(w_{G(\theta)}(T)\), changes along the manifold.

\textbf{Static metric.}
The theory assumes \(G(\theta_0)\) fixed. In modern learning systems, the Fisher geometry evolves throughout training. Whether the static bounds of this paper remain informative along training trajectories is not yet understood.
This motivates a dynamic theory of Fisher complexity, in which Fisher width is studied as a time-dependent geometric quantity along optimization paths.

\textbf{Perturbative curvature.}
The curvature analysis of Section~\ref{sec:curvature} is local and perturbative. A global geometric characterization of Fisher complexity on curved statistical manifolds remains an open problem.

\textbf{Sharpness beyond exponential families.}
Section~\ref{subsec:tightness} shows that the Fisher-width scale is sharp for local perturbations in exponential families with negative log-likelihood loss.
Whether a matching lower bound holds for general Fisher-Lipschitz losses under a suitable nondegeneracy condition remains open. Establishing such a result would confirm that Fisher width is not merely sufficient, but also necessary,
as a complexity measure for statistical learning on manifolds.

\textbf{Fisher-Lipschitz assumption.}
The generalization bound relies on Assumption~\ref{ass:fisher-lipschitz}.
Appendix~\ref{app:fisher-lipschitz} verifies this assumption for logistic and softmax regression under bounded Fisher-leverage conditions. Extending such
verification to neural networks and other highly nonconvex model classes remains open.

\textbf{Population versus empirical Fisher.}
The theory is formulated in terms of the population Fisher matrix \(G(\theta_0)\), whereas practical estimation relies on the empirical approximation \(\widehat G_n\). Section~\ref{sec:computation} quantifies the resulting width error, but understanding the full statistical and algorithmic consequences of this replacement, particularly in high-dimensional, misspecified, or overparameterized settings, remains an important direction for future work.

\textbf{Recovery and duality.}
Finally, Fisher width suggests a possible primal--dual extension of the present theory. By analogy with Gaussian width in conic integral geometry, a dual Fisher width such as \(w_{G^{-1}}(T)\), or \(w_{G^\dagger}(T)\) in
singular settings, may be relevant for Fisher-scaled recovery thresholds. This points toward a picture in which \(w_G\) measures statistical sensitivity, while a dual width measures estimation uncertainty or recovery difficulty. We leave this direction for future work. 

\begin{appendices}

\section{Verification of the Fisher-Lipschitz Assumption}
\label{app:fisher-lipschitz}

This appendix verifies Assumption~\ref{ass:fisher-lipschitz} for the
classification models used in the experiments. The common mechanism is a
Fisher-gradient bound: if the loss gradient has uniformly bounded dual norm
with respect to the base-point Fisher metric, then the loss is Lipschitz in
the corresponding Fisher norm.

\subsection{A General Fisher-Gradient Criterion}
\label{app:fisher-gradient-criterion}

\begin{lemma}[Fisher-gradient criterion]
\label{lem:fisher-gradient-criterion}
Let \(T\subset\mathbb R^d\) be convex and let \(G_0\succ0\). Suppose that
\(\ell(\theta;z)\) is differentiable in \(\theta\) on \(T\), and that
\[
\sup_{\theta\in T}\sup_z
\left\|
G_0^{-1/2}\nabla_\theta\ell(\theta;z)
\right\|_2
\le
L.
\]
Then \(\ell\) is Fisher-Lipschitz on \(T\) with respect to \(G_0\):
\[
|\ell(\theta_1;z)-\ell(\theta_2;z)|
\le
L\|\theta_1-\theta_2\|_{G_0},
\qquad
\theta_1,\theta_2\in T.
\]
\end{lemma}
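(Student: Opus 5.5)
The plan is a mean-value argument along the segment joining $\theta_1$ and $\theta_2$, combined with a whitened Cauchy--Schwarz inequality. This mirrors the proof of Proposition~\ref{prop:metric-comparability-lipschitz}, but is simpler because the metric is frozen at $G_0$.

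Fix a sample $z$ and points $\theta_1,\theta_2\in T$. Set $\gamma_t=(1-t)\theta_1+t\theta_2$ for $t\in[0,1]$. Convexity of $T$ gives $\gamma_t\in T$ for every $t$, so the gradient bound applies along the whole path. Differentiability of $\ell(\cdot;z)$ on $T$ makes $t\mapsto \ell(\gamma_t;z)$ differentiable, with derivative $\langle \nabla_\theta\ell(\gamma_t;z),\theta_2-\theta_1\rangle$.

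The key identity is
\[
\langle \nabla_\theta\ell(\gamma_t;z),\theta_2-\theta_1\rangle
=
\langle G_0^{-1/2}\nabla_\theta\ell(\gamma_t;z),\,G_0^{1/2}(\theta_2-\theta_1)\rangle ,
\]
which holds because $G_0^{\pm1/2}$ are symmetric and inverse to each other; here we use $G_0\succ0$. Cauchy--Schwarz in $\mathbb R^d$ then bounds the absolute value by $L\,\|G_0^{1/2}(\theta_2-\theta_1)\|_2=L\|\theta_2-\theta_1\|_{G_0}$, uniformly in $t$.

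To conclude without assuming continuity of the gradient, I will use the scalar mean value theorem instead of the fundamental theorem of calculus. The function $t\mapsto\ell(\gamma_t;z)$ is differentiable on $[0,1]$, so there is some $t^*\in(0,1)$ with $\ell(\theta_2;z)-\ell(\theta_1;z)$ equal to the derivative at $t^*$. The pointwise bound above then gives the claim. One minor point needs care: if $T$ is not open, differentiability at boundary points should be read as one-sided or relative differentiability, which is all the mean value theorem requires.

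There is no real obstacle. The only things to check are that convexity keeps the path inside $T$ and that the whitening step is valid; both are immediate.
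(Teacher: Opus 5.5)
Your proposal is correct and takes essentially the same route as the paper: restrict to the segment (which stays in $T$ by convexity), rewrite the directional derivative as $\langle G_0^{-1/2}\nabla_\theta\ell(\gamma_t;z),\,G_0^{1/2}(\theta_2-\theta_1)\rangle$, and apply Cauchy--Schwarz. The only difference is that you finish with the scalar mean value theorem where the paper integrates via the fundamental theorem of calculus; this is a mild improvement, since it needs only pointwise differentiability and no integrability of $t\mapsto\nabla_\theta\ell(\gamma_t;z)$.
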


\begin{proof}
Fix \(\theta_1,\theta_2\in T\), and define
\[
\theta_t
=
\theta_2+t(\theta_1-\theta_2),
\qquad
t\in[0,1].
\]
By convexity of \(T\), \(\theta_t\in T\). The fundamental theorem of calculus
gives
\[
\ell(\theta_1;z)-\ell(\theta_2;z)
=
\int_0^1
\left\langle
\nabla_\theta\ell(\theta_t;z),
\theta_1-\theta_2
\right\rangle
\,dt.
\]
For each \(t\), Cauchy--Schwarz in the Fisher metric gives
\[
\begin{aligned}
\left|
\left\langle
\nabla_\theta\ell(\theta_t;z),
\theta_1-\theta_2
\right\rangle
\right|
&=
\left|
\left\langle
G_0^{-1/2}\nabla_\theta\ell(\theta_t;z),
G_0^{1/2}(\theta_1-\theta_2)
\right\rangle
\right|  \\
&\le
\left\|
G_0^{-1/2}\nabla_\theta\ell(\theta_t;z)
\right\|_2
\,
\|\theta_1-\theta_2\|_{G_0}.
\end{aligned}
\]
Using the assumed uniform bound and integrating over \(t\), we obtain
\[
|\ell(\theta_1;z)-\ell(\theta_2;z)|
\le
L\|\theta_1-\theta_2\|_{G_0}.
\]
\end{proof}

\begin{remark}
The quantity
\[
\left\|
G_0^{-1/2}\nabla_\theta\ell(\theta;z)
\right\|_2
\]
is the dual norm of the loss gradient with respect to the base-point Fisher
metric. Thus Lemma~\ref{lem:fisher-gradient-criterion} reduces verification
of Assumption~\ref{ass:fisher-lipschitz} to a uniform Fisher-leverage bound.
\end{remark}

\subsection{Logistic Regression}
\label{app:logistic-fisher-lipschitz}

\begin{proposition}[Fisher-Lipschitz condition for logistic regression]
\label{prop:logistic-fisher-lipschitz}
Consider binary logistic regression
\[
\mathbb P_\theta(Y=1\mid X=x)
=
\sigma(\theta^\top x),
\qquad
\sigma(t)=\frac{1}{1+e^{-t}},
\]
with cross-entropy loss
\[
\ell(\theta;(x,y))
=
-y\,\theta^\top x+\log(1+e^{\theta^\top x}).
\]
Fix a base point \(\theta_0\), and let \(G_0=G(\theta_0)\succ0\). Suppose that
\(T\subset\mathbb R^d\) is convex and
\[
\sup_x
\|G_0^{-1/2}x\|_2
\le
L.
\]
Then Assumption~\ref{ass:fisher-lipschitz} holds on \(T\) with constant \(L\).
\end{proposition}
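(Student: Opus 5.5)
The plan is to reduce the claim to the Fisher-gradient criterion of Lemma~\ref{lem:fisher-gradient-criterion}. Since \(T\) is convex and \(G_0\succ0\), it suffices to show that
\[
\sup_{\theta\in T}\sup_{(x,y)}\bigl\|G_0^{-1/2}\nabla_\theta\ell(\theta;(x,y))\bigr\|_2\le L .
\]
The lemma then gives \(|\ell(\theta_1;z)-\ell(\theta_2;z)|\le L\|\theta_1-\theta_2\|_{G_0}\) for all \(\theta_1,\theta_2\in T\), which is exactly Assumption~\ref{ass:fisher-lipschitz}.

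First, compute the gradient of the cross-entropy loss. Differentiating \(-y\,\theta^\top x+\log(1+e^{\theta^\top x})\) gives
\[
\nabla_\theta\ell(\theta;(x,y))=\bigl(\sigma(\theta^\top x)-y\bigr)x .
\]
This is the negative of the score already computed in Example~\ref{ex:logistic}. Since \(y\in\{0,1\}\) and \(\sigma(\theta^\top x)\in(0,1)\), the scalar factor satisfies \(|\sigma(\theta^\top x)-y|\le 1\) uniformly in \(\theta\), \(x\) and \(y\). By linearity of \(G_0^{-1/2}\),
\[
\bigl\|G_0^{-1/2}\nabla_\theta\ell(\theta;(x,y))\bigr\|_2
=|\sigma(\theta^\top x)-y|\,\|G_0^{-1/2}x\|_2
\le \|G_0^{-1/2}x\|_2\le L,
\]
where the last step uses the leverage hypothesis \(\sup_x\|G_0^{-1/2}x\|_2\le L\). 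The bound is uniform in \(\theta\in T\), so the gradient hypothesis of Lemma~\ref{lem:fisher-gradient-criterion} holds, and the lemma finishes the proof.

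There is no real obstacle here. The one point needing care is that the multiplier \(|\sigma(\theta^\top x)-y|\) must be bounded by \(1\) independently of \(\theta\), so that the bound is uniform over the whole of \(T\) and not only near \(\theta_0\). The boundedness of \(\sigma\) gives this. Differentiability of \(\ell\) in \(\theta\), which the lemma requires, is immediate because \(\ell\) is smooth.
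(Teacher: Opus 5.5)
Your proposal is correct and matches the paper's proof step for step. You compute $\nabla_\theta\ell=(\sigma(\theta^\top x)-y)x$, bound the scalar factor by $1$ uniformly in $\theta$, apply the leverage hypothesis $\sup_x\|G_0^{-1/2}x\|_2\le L$, and conclude via Lemma~\ref{lem:fisher-gradient-criterion}.
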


\begin{proof}
For logistic regression,
\[
\nabla_\theta\ell(\theta;(x,y))
=
(\sigma(\theta^\top x)-y)x.
\]
Since \(y\in\{0,1\}\) and \(\sigma(\theta^\top x)\in[0,1]\),
\[
|\sigma(\theta^\top x)-y|
\le
1.
\]
Therefore
\[
\left\|
G_0^{-1/2}\nabla_\theta\ell(\theta;(x,y))
\right\|_2
\le
\|G_0^{-1/2}x\|_2
\le
L.
\]
The claim follows from Lemma~\ref{lem:fisher-gradient-criterion}.
\end{proof}

\subsection{Softmax Regression}
\label{app:softmax-fisher-lipschitz}

We next consider \(K\)-class softmax regression. To avoid the non-identifiability
of the full \(K\times d\) parametrization, we use the standard baseline
parametrization \(W\in\mathbb R^{(K-1)\times d}\), where class \(K\) is the
baseline.

For \(x\in\mathbb R^d\), define
\[
p_W(k\mid x)
=
\frac{\exp(w_k^\top x)}
{1+\sum_{\ell=1}^{K-1}\exp(w_\ell^\top x)},
\qquad
k=1,\ldots,K-1,
\]
and
\[
p_W(K\mid x)
=
\frac{1}
{1+\sum_{\ell=1}^{K-1}\exp(w_\ell^\top x)}.
\]
Write
\[
\pi_W(x)
=
(p_W(1\mid x),\ldots,p_W(K-1\mid x))
\in\mathbb R^{K-1}.
\]
The cross-entropy loss is
\[
\ell(W;(x,y))
=
-\log p_W(y\mid x).
\]

\begin{proposition}[Fisher-Lipschitz condition for softmax regression]
\label{prop:softmax-fisher-lipschitz}
Fix a base point \(W_0\), and let \(G_0=G(W_0)\succ0\) be the Fisher matrix in
the identifiable baseline parametrization. Suppose \(T\subset\mathbb R^{(K-1)\times d}\)
is convex.

If
\[
\sup_{W\in T}\sup_{(x,y)}
\left\|
G_0^{-1/2}\nabla_W\ell(W;(x,y))
\right\|_F
\le
L,
\]
then Assumption~\ref{ass:fisher-lipschitz} holds on \(T\) with constant \(L\).

In particular, if \(\|x\|_2\le R\) almost surely and
\[
G_0\succeq \mu I
\]
for some \(\mu>0\), then Assumption~\ref{ass:fisher-lipschitz} holds with
\[
L
=
\frac{\sqrt2\,R}{\sqrt\mu}.
\]
\end{proposition}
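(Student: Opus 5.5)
The first claim is immediate from Lemma~\ref{lem:fisher-gradient-criterion}, applied to the identifiable parameter vector $\mathrm{vec}(W)\in\mathbb R^{(K-1)d}$. Under this identification the Frobenius norm of $G_0^{-1/2}\nabla_W\ell$ is the Euclidean norm of $G_0^{-1/2}$ acting on the vectorized gradient, and $\|W_1-W_2\|_{G_0}$ is the Fisher norm of the vectorized difference. Since $T$ is convex, the segment argument of that lemma goes through verbatim.

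For the quantitative statement, I would first compute the gradient. With $e_y\in\mathbb R^{K-1}$ the indicator of class $y$ (and $e_K:=0$ for the baseline class), differentiating $-\log p_W(y\mid x)$ gives
\[
\nabla_W\ell(W;(x,y))=(\pi_W(x)-e_y)\,x^\top\in\mathbb R^{(K-1)\times d}.
\]
This is a rank-one matrix, so
\[
\|\nabla_W\ell\|_F=\|\pi_W(x)-e_y\|_2\,\|x\|_2 .
\]

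Next I would bound $\|\pi_W(x)-e_y\|_2\le\sqrt2$ uniformly in $W$, $x$, and $y$. There are two cases.
\begin{itemize}
\item If $y=K$, then $\|\pi\|_2^2\le\bigl(\sum_k p_k\bigr)^2\le 1$.
\item If $y<K$, then $(p_y-1)^2+\sum_{k\ne y}p_k^2\le(1-p_y)^2+\bigl(\sum_{k\ne y,k<K}p_k\bigr)^2\le 2(1-p_y)^2\le 2$.
\end{itemize}
Hence $\|\nabla_W\ell\|_F\le\sqrt2\,R$ almost surely.

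Finally, $G_0\succeq\mu I$ gives $\|G_0^{-1/2}\|_{\op}\le\mu^{-1/2}$. Therefore
\[
\|G_0^{-1/2}\nabla_W\ell\|_F\le\mu^{-1/2}\sqrt2\,R,
\]
and the first part of the proposition applies with $L=\sqrt2R/\sqrt\mu$.

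The only delicate step is this last one. $G_0$ acts on the vectorized gradient in $\mathbb R^{(K-1)d}$, not on the $d$-dimensional columns. I would therefore state explicitly that the operator-norm bound is taken on $\mathbb R^{(K-1)d}$, and that the Frobenius norm of the gradient equals the Euclidean norm of its vectorization.
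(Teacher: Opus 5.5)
Your proposal is correct and follows the paper's proof essentially step by step: the vectorized application of the Fisher-gradient lemma, the rank-one gradient $(\pi_W(x)-e_y)x^\top$, the bound $\|\pi_W(x)-e_y\|_2\le\sqrt2$, and the operator-norm estimate from $G_0\succeq\mu I$. The only difference is that you prove $\|\pi_W(x)-e_y\|_2\le\sqrt2$ by a two-case computation, which for $y<K$ even gives the slightly sharper bound $\sqrt2\,(1-p_y)$, whereas the paper uses the one-line interpolation $\|v\|_2^2\le\|v\|_\infty\|v\|_1\le 2$.
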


\begin{proof}
The first statement follows directly from Lemma~\ref{lem:fisher-gradient-criterion},
identifying \(W\) with its vectorization and using the Frobenius inner product.

It remains to prove the explicit bound. Differentiating the cross-entropy loss gives
\[
\nabla_W\ell(W;(x,y))
=
(\pi_W(x)-e_y)x^\top,
\]
where \(e_y\in\mathbb R^{K-1}\) is the \(y\)-th standard basis vector for
\(y\in\{1,\ldots,K-1\}\), and \(e_K=0\) for the baseline class.

Since this is a rank-one matrix,
\[
\|\nabla_W\ell(W;(x,y))\|_F
=
\|\pi_W(x)-e_y\|_2\,\|x\|_2.
\]
Let
\[
v=\pi_W(x)-e_y.
\]
Then \(\|v\|_\infty\le1\). Moreover,
\[
\|v\|_1
\le
\|\pi_W(x)\|_1+\|e_y\|_1
\le
1+1
=
2,
\]
because \(\sum_{k=1}^{K-1}p_W(k\mid x)\le1\) and \(\|e_y\|_1\le1\). Hence
\[
\|v\|_2^2
\le
\|v\|_\infty\|v\|_1
\le
2,
\]
so
\[
\|\pi_W(x)-e_y\|_2
\le
\sqrt2.
\]
Therefore, if \(\|x\|_2\le R\),
\[
\|\nabla_W\ell(W;(x,y))\|_F
\le
\sqrt2\,R.
\]
Finally, since \(G_0\succeq\mu I\),
\[
\left\|
G_0^{-1/2}\nabla_W\ell(W;(x,y))
\right\|_F
\le
\frac{1}{\sqrt\mu}
\|\nabla_W\ell(W;(x,y))\|_F
\le
\frac{\sqrt2\,R}{\sqrt\mu}.
\]
The result follows from Lemma~\ref{lem:fisher-gradient-criterion}.
\end{proof}

\begin{remark}[Full softmax parametrization]
\label{rem:full-softmax-parametrization}
In the full parametrization \(W\in\mathbb R^{K\times d}\), the softmax model is
not identifiable: adding the same row vector to all class weights leaves the
probabilities unchanged. Consequently, the Fisher matrix is singular, with
null space
\[
\mathcal N
=
\{\mathbf 1_K\otimes v:\ v\in\mathbb R^d\}.
\]
The same argument applies after restricting parameter differences to the
identifiable subspace
\[
\mathcal N^\perp
=
\left\{
W\in\mathbb R^{K\times d}:
\sum_{k=1}^K w_k=0
\right\}.
\]
On this subspace, the Fisher seminorm
\[
\|B\|_{G_0}
=
\|G_0^{1/2}B\|_F
\]
is nondegenerate, while the dual norm of the gradient is
\[
\|\nabla_W\ell\|_{G_0^\dagger}
=
\|G_0^{\dagger 1/2}\nabla_W\ell\|_F.
\]
Thus the Fisher-Lipschitz condition holds provided
\[
\sup_{W\in T}\sup_z
\|\nabla_W\ell(W;z)\|_{G_0^\dagger}
\le
L
\]
and \(W_1-W_2\in\mathcal N^\perp\) for the parameter differences under
consideration.
\end{remark}


\section{Proofs for Section~\ref{subsec:tightness}}
\label{app:tightness}

\begin{proof}[Proof of Lemma~\ref{lem:exact-gap-expfam}]
  Since $-\log p_\theta(z) = A(\theta) - \theta^\top\phi(z) - \log h(z)$,
  \[
    \widetilde\ell(u;z)
    = \bigl(A(\theta_0+u) - A(\theta_0)\bigr) - u^\top\phi(z).
  \]
  The deterministic term $A(\theta_0+u)-A(\theta_0)$ cancels in the
  difference between population and empirical risk:
  \begin{align*}
    R(u) - \widehat{R}_n(u)
    &= \mathbb{E}_{p_{\theta_0}}[\widetilde\ell(u;Z)]
      - \frac{1}{n}\sum_{i=1}^n\widetilde\ell(u;Z_i) \\
    &= -u^\top\mathbb{E}_{p_{\theta_0}}[\phi(Z)]
      + u^\top\overline\phi_n
    = u^\top\Delta_n.
  \end{align*}
  Therefore
  \[
    \sup_{u\in rB_2^d}
    \bigl|R(u) - \widehat{R}_n(u)\bigr|
    = \sup_{\|u\|_2\le r}|u^\top\Delta_n|
    = r\|\Delta_n\|_2,
  \]
  where the last equality follows from
  \[
    \sup_{\|u\|_2\le r}|u^\top v| = r\|v\|_2.
  \]
\end{proof}

\begin{proof}[Proof of Theorem~\ref{thm:tightness-expfam}]
  By Lemma~\ref{lem:exact-gap-expfam},
  \[
    \sup_{u\in rB_2^d}
    \bigl|R(u)-\widehat{R}_n(u)\bigr|
    = r\|\Delta_n\|_2.
  \]
  Set $W_n := \sqrt{n}\,\Delta_n$. By the multivariate CLT,
  \[
    W_n \xrightarrow{d} W, \qquad W\sim\mathcal{N}(0,G_0).
  \]
  Since
  \[
    \mathbb{E}\|W_n\|_2^2
    = n\,\mathbb{E}\|\Delta_n\|_2^2
    = \operatorname{Tr}(G_0) < \infty,
  \]
  we have $\sup_{n\ge 1}\mathbb{E}\|W_n\|_2^2 = \operatorname{Tr}(G_0) < \infty$.
  Hence the family $\{\|W_n\|_2 : n\ge 1\}$ is uniformly integrable. Convergence in
  distribution together with uniform integrability gives
  \[
    \mathbb{E}\|W_n\|_2 \longrightarrow \mathbb{E}\|W\|_2
    = \mathbb{E}\|G_0^{1/2}g\|_2,
    \qquad g\sim\mathcal{N}(0,I_d).
  \]
  By the Euclidean-ball formula for Fisher width
  (Definition~\ref{def:fisher-width} and Example~\ref{ex:euclidean-ball}),
  \[
    w_{G_0}(rB_2^d) = r\,\mathbb{E}\|G_0^{1/2}g\|_2.
  \]
  Combining,
  \[
    \sqrt{n}\;
    \mathbb{E}\!\left[
      \sup_{u\in rB_2^d}|R(u)-\widehat{R}_n(u)|
    \right]
    = r\,\mathbb{E}\|W_n\|_2
    \;\longrightarrow\; w_{G_0}(rB_2^d).
  \]
  The finite-$n$ lower bound follows immediately from the convergence.
\end{proof}
\end{appendices}

\section*{Acknowledgments}

The author(s) acknowledge the use of ChatGPT and Claude to assist in the preparation of this manuscript. Specifically, AI tools were utilized to refine the language and structure of the drafting process, to brainstorm and explore strategies for mathematical proofs, and to assist in
generating code for empirical validation. The author(s) meticulously reviewed, rigorously verified all step-by-step mathematical logic, and thoroughly debugged the source code. The author(s) take full and sole responsibility for the originality, correctness, and final content of this work.

\bibliographystyle{plainnat}
\bibliography{fisher_width}

\end{document}